\documentclass[onefignum,onetabnum]{siamart171218}

\usepackage{microtype}
\usepackage{graphicx}
\graphicspath{{../}}
\usepackage{wrapfig}
\usepackage{subcaption}
\usepackage{booktabs} 
\usepackage[table]{xcolor}

\usepackage[noend]{algorithmic}

\usepackage[utf8]{inputenc} 
\usepackage[T1]{fontenc}    
\usepackage{hyperref}       
\usepackage{url}            
\usepackage{amsfonts}       
\usepackage{nicefrac}       

\usepackage{amsmath}
\usepackage{amssymb}
\usepackage{mathtools}
\usepackage{natbib}
\usepackage{enumitem}

\usepackage{etoc}
\etocsettagdepth{main}{none}
\etocsettagdepth{appendix}{subsection}
\etocsettocstyle{\noindent{\Large\bfseries Appendix contents}\par\medskip}{}
\makeatletter
\etocsetstyle{section}{}{}{%
  \begingroup\bfseries
  \@dottedtocline{1}{0em}{0em}{\etocname}{\etocpage}%
  \endgroup}{}
\etocsetstyle{subsection}{}{}{%
  \@dottedtocline{2}{1.5em}{2.8em}%
    {\numberline{\etocnumber}\etocname}{\etocpage}}{}
\makeatother

\definecolor{mypink}{rgb}{0.858, 0.188, 0.478}
\definecolor{mygreen}{RGB}{0, 128, 0}
\definecolor{myblue}{RGB}{0, 0, 255}
\definecolor{myred}{RGB}{255, 0, 0}
\definecolor{mybrown}{RGB}{150, 75, 0}
\definecolor{myorange}{RGB}{255,69,0}
\definecolor{mypurple}{RGB}{128,0,128}
\definecolor{cbpink}{RGB}{220,38,127}
\definecolor{cborange}{RGB}{254,97,0}
\definecolor{cbpurple}{RGB}{120,94,240}

\newsiamthm{assumption}{Assumption}
\newsiamremark{remark}{Remark}

\makeatletter
\AtBeginDocument{%
  \def\refstepcounter@optarg[#1]#2{%
    \cref@old@refstepcounter{#2}%
    \cref@constructprefix{#2}{\cref@result}%
    \protected@edef\cref@currentlabel{%
      [#2][\arabic{#2}][\cref@result]%
      \csname p@#2\endcsname\csname the#2\endcsname}%
    \@ifundefined{cref@#1@alias}%
      {\def\@tempa{#1}}%
      {\def\@tempa{\csname cref@#1@alias\endcsname}}%
    \protected@edef\cref@currentlabel{%
      \expandafter\cref@override@label@type%
        \cref@currentlabel\@nil{\@tempa}}}%
}
\makeatother

\newcommand{\pspace}{\mathcal{P}_2(\mathcal{M})}
\newcommand{\ppspace}{\mathcal{P}_2(\mathcal{P}_2(\mathcal{M}))}
\newcommand{\dvolm}{d\mathrm{vol}} 

\usepackage{xparse}

\definecolor{rankone}{HTML}{9DC183}   
\definecolor{ranktwo}{HTML}{88C1D7}   
\definecolor{rankthree}{HTML}{FFBB7D} 
\newcommand{\rA}[1]{\cellcolor{rankone}#1}
\newcommand{\rB}[1]{\cellcolor{ranktwo}#1}
\newcommand{\rC}[1]{\cellcolor{rankthree}#1}
\newcommand{\lgd}[2]{\textcolor{#1}{#2}}

\usepackage[textsize=tiny]{todonotes}

\title{When Riemann flows with Wasserstein:\\ Generative Modeling of Probability Distributions on Manifolds}

\author{Doron Haviv$^{*,1}$%
\and Edward De Brouwer$^{*,1}$%
\and Rishabh Anand$^{2}$%
\and Rex Ying$^{2}$%
\and Aïcha Bentaieb$^{1}$%
\and Gabriele Scalia$^{1}$%
\and Hector Corrada Bravo$^{1}$%
}

\begin{document}
\etocdepthtag.toc{main}

\maketitle

\renewcommand{\thefootnote}{\fnsymbol{footnote}}
\footnotetext[1]{Equal contribution.}
\renewcommand{\thefootnote}{\arabic{footnote}}
\footnotetext[1]{Genentech Inc., South San Francisco, CA. \texttt{havivd@gene.com, debroue1@gene.com}.}
\footnotetext[2]{Yale University, New Haven, CT.}

\begin{abstract}
Many scientific datasets, such as molecular conformational ensembles or single-cell tissue measurements, are naturally modeled as meta-distributions: distributions over probability measures on non-Euclidean domains. Existing generative methods largely assume Euclidean geometry and fail to capture this structure. We introduce Riemannian Wasserstein Entropic Flow Matching (RWEFM), a generative framework on the Wasserstein space $\mathcal{P}_2(\mathcal{M})$ of a Riemannian manifold $(\mathcal{M},g)$. RWEFM is trained by regressing a neural vector field onto Riemannian optimal transport velocities, using McCann displacement interpolations as conditional paths. We confirm theoretically that this construction leads to a valid flow matching approach on $\mathcal{P}_2(\mathcal{M})$ and introduce the Riemannian Entropic Map, a GPU-efficient approximation of the optimal transport map on manifolds. Our experiments show that by respecting the intrinsic geometry of the data, RWEFM can generate whole single-cell samples in hyperspherical latent spaces and protein conformational ensembles on the torus. As RWEFM requires only a geodesic distance and a projection operator, it is not restricted to manifolds with closed-form geometry, which we demonstrate by generating distributions on a general triangulated mesh. Code and tutorials are available at \href{https://github.com/DoronHav/WassersteinFlowMatching}{RWEFM}.
\end{abstract}

\section{Introduction}
\label{introduction}

Many modern scientific datasets are most naturally represented as collections of \emph{distributions} of data points living on structured non-Euclidean spaces. Molecular conformational ensembles~\citep{axelrod2022geom} describe each molecule as a distribution over rotations and translations in 3D space; climate records~\citep{abatzoglou2018terraclimate} can be viewed as distributions of weather variables over the sphere; and tissue-level single-cell RNA-seq~\citep{czi2025cz} represents each sample as an empirical distribution of cell states living on a biologically meaningful manifold (\emph{e.g.} spherical or hyperbolic). In these settings, the object of interest is a distribution over probability measures on a manifold.

Generative modeling provides a principled way to summarize such data and to enable scientific tasks, including in-silico design~\citep{gruver2023protein}, hypothesis testing \citep{candes2018panning}, and causal discovery in the underlying physical process~\citep{zhu2023sample}. While recent generative models have achieved impressive results in Euclidean data (images \citep{lin2024sdxl}, videos \citep{jin2025pyramidal}, single-cell \citep{klein2025cellflow}), and have increasingly incorporated non-Euclidean structures (\emph{e.g.} molecules \citep{schneuing2024structure}), most of this progress concerns generating individual data points defined as finite-dimensional vectors. In contrast, generative modeling of \emph{distributions} requires operating in an infinite-dimensional space: the Wasserstein space of probability measures $\mathcal{P}_2(\mathcal{M})$. Recent work has explored generative models directly on Wasserstein space~\citep{haviv2024wasserstein,atanackovic2024meta,piening2026generalized} but these methods assume the underlying domain to be Euclidean (\emph{i.e.} $\mathcal{M}\equiv \mathbb{R}^d$), thereby failing to faithfully model datasets whose support lies on curved geometries.

\newpage
\begin{wrapfigure}{r}{0.46\textwidth}
\centering
\includegraphics[width=\linewidth]{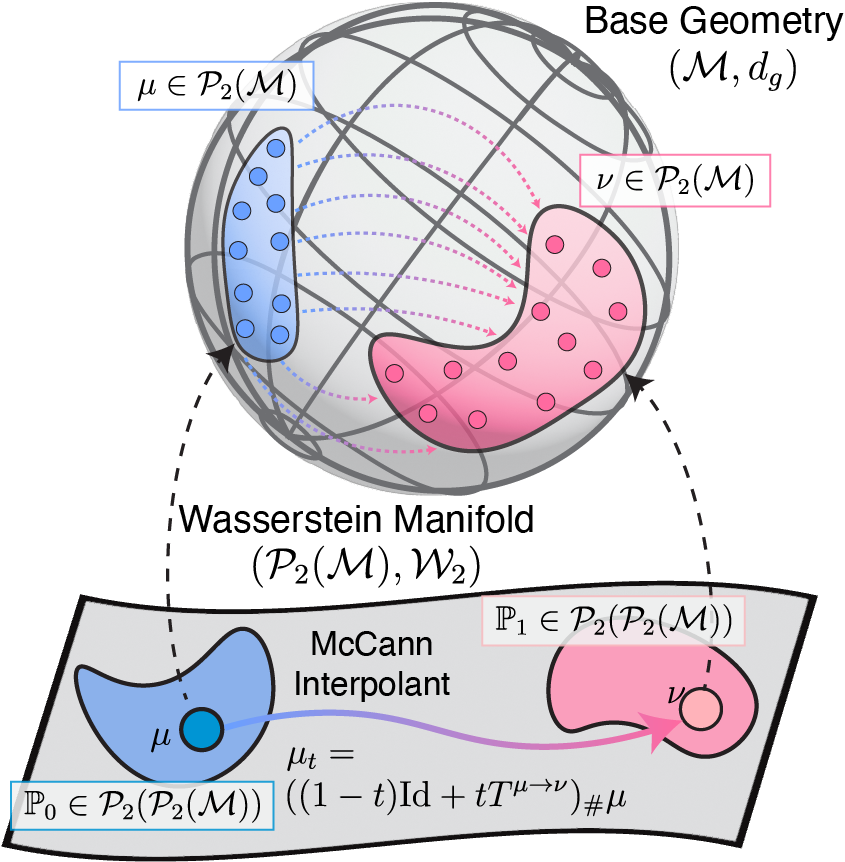}
\caption{\textbf{Riemannian Wasserstein Entropic Flow Matching.} \textbf{Top:} Base Riemannian manifold $(\mathcal{M}, d_g)$. Each blob is a probability
distribution $\mu,\nu \in \mathcal{P}_2(\mathcal{M})$ and a single training example is one blob, not one point. The dashed line is the Riemannian OT path between $\mu$ and $\nu$ which is estimated via point-cloud samples from each blob using the entropic map. \textbf{Bottom:} Wasserstein manifold $(\mathcal{P}_2(\mathcal{M}), \mathcal{W}_2)$, where a \textit{blob} is a distribution-over-distributions and each point is a distribution in the top figure. The same dashed path lifts here as the McCann interpolant $\mu_t$, the geodesic along which RWEFM learns its flow.}
\vspace{-4mm}
\end{wrapfigure}

In this work, we introduce Riemannian Wasserstein Entropic Flow Matching (RWEFM), a flow-matching (FM) framework \citep{lipman2022flow} for generative modeling on $\mathcal{P}_2(\mathcal{M})$. RWEFM lifts conditional FM to the Wasserstein space using McCann displacement interpolations as conditional paths between paired measures $(\mu,\nu)$. We establish that this yields a valid FM objective on $\mathcal{P}_2(\mathcal{M})$ and that the induced marginal dynamics satisfy a weak continuity equation and admit a path-space representation via the superposition principle \citep{pinzi2025nested}.

A key practical bottleneck is the computation of optimal transport maps between empirical distributions on Riemannian manifolds. To this end, we introduce the Riemannian Entropic Map, a GPU-efficient approximation of the Monge map that extends the entropic map~\citep{pooladian2021entropic} to general geometries. In essence, our estimator is obtained as the exponential map of the barycentric projection of tangent vectors. We further provide an error bound that quantifies the approximation induced by regularization and finite sampling.

We evaluate RWEFM across a range of geometries and scientific applications. This includes generation of digits, letters and Kanji characters on the sphere $\mathbb{S}^2$, hyperbolic disk $\mathbb{H}^2$ and torus $\mathbb{T}^2$, as well as whole single-cell sample generation in hyperspherical latent spaces $\mathbb{S}^{128-1}$ and protein conformational ensembles on $\mathbb{T}^2$. To emphasize that our framework is not limited to closed-form geometries, we further generate distributions on a general triangulated mesh. Our experiments show that respecting the intrinsic geometry of the data and using the Riemannian Entropic Map improves generation quality.

\paragraph{Contributions} \textbf{(i)} We propose RWEFM, a framework for generative modeling on the space of probability distributions on Riemannian manifolds---including general geometries without closed-form geodesics, such as triangulated meshes---and show that the resulting marginal dynamics are theoretically well-posed. \textbf{(ii)} We introduce the Riemannian Entropic Map, a fast and accurate approximation of Riemannian optimal transport maps suitable for large-scale GPU training, together with theoretical guarantees. \textbf{(iii)} We benchmark RWEFM against FM baselines on real-world datasets such as generation of single-cell samples and protein conformational ensembles.

\newpage
\section{Background and Related Work}
\label{sec:background}

\subsection{Optimal Transport on Riemannian Manifolds}

Optimal Transport (OT) \citep{villani2008optimal} provides a geometric framework for comparing probability distributions on Riemannian manifolds. Let $(\mathcal{M}, g)$ be a complete Riemannian manifold equipped with a metric $g$ and its corresponding geodesic distance $d_g(\cdot, \cdot)$, and let $\mu, \nu \in \mathcal{P}_{2}(\mathcal{M})$ be two probability measures with finite second moments. The Monge formulation seeks a deterministic map $T: \mathcal{M} \to \mathcal{M}$ pushing $\mu$ onto $\nu$ (denoted $T_\#\mu = \nu$) that minimizes the total transportation cost:
\begin{equation}
\label{eq:monge}
    \inf_{T} \left\{ \int_{\mathcal{M}} c(x, T(x)) d\mu(x) : T_\#\mu = \nu \right\},
\end{equation}

where $c(x,y) = \frac{1}{2}d_g^2(x,y)$ is the cost of moving a unit of mass from $x$ to $y$. Given mild assumptions, such as $\mu$ being absolutely continuous with respect to the volume measure, the Brenier--McCann theorem~\citep{ambrosio2021lectures} states 
that there exists a unique optimal transport map $T_0$, known as the Monge map, of the form $T_0(x) = \text{exp}_{x}(-\nabla\varphi_0(x))$ where $\varphi_0: \mathcal{M} \to \mathbb{R}$ is a c-concave function.

The Kantorovich formulation relaxes the Monge problem by allowing for probabilistic couplings:
\begin{equation*}
    W_2^2(\mu, \nu) = \inf_{\pi \in \Pi(\mu, \nu)} \int_{\mathcal{M} \times \mathcal{M}} d_g^2(x,y) d\pi(x,y).
\end{equation*}
where $\Pi(\mu, \nu)$ is the set of joint distributions on $\mathcal{M} \times \mathcal{M}$ with marginals $\mu$ and $\nu$. The optimal value $W_2(\mu, \nu)$ defines the 2-Wasserstein distance between the measures. Unlike the Monge formulation, the Kantorovich problem does not require continuity of measures to admit a solution. When the Monge map $T_0$ exists, it can be recovered from the Kantorovich problem via the dual formulation: $\frac{1}{2}W_2^2(\mu, \nu) = \sup_{\varphi} \int_{\mathcal{M}} \varphi d\mu + \int_{\mathcal{M}} \varphi^c d\nu,$
with $\varphi^c(y) = \inf_{x\in\mathcal{M}} \{ \frac{1}{2}d(x,y)^2 - \varphi(x)\}$, and the Monge map is then $T_0(x) = \text{exp}_{x}(-\nabla \varphi_0(x))$.


\subsubsection{Statistical Estimation of Optimal Transport}

Closed-form solutions for optimal transport maps are typically limited to very simple distributions so we generally rely on statistical estimation from finite samples $\{x_i\}_{i=1}^m \sim \mu$ and $\{y_j\}_{j=1}^n \sim \nu$. However, the Kantorovich problem on discrete samples is a linear program with cubic complexity in the number of samples, hindering its application to large datasets. Instead, practical approaches rely on entropic OT, which regularizes the (discrete) objective with an entropy term $H(\pi)$. The problem is strictly convex and can be efficiently solved using the Sinkhorn algorithm \citep{cuturi2013sinkhorn}.
\begin{equation*}
    \min_{\pi \in \Pi(\hat\mu, \hat\nu)} \sum_{i=1}^m \sum_{j=1}^n c(x_i, y_j) \pi_{ij} + \varepsilon \sum_{i=1}^m \sum_{j=1}^n \pi_{ij} \log \pi_{ij}.
\end{equation*}
This problem admits the dual formulation:
\begin{equation}
\sup_{\substack{f\in L^1(\mu)\\ g\in L^1(\nu)}}\;
\sum_i f(x_i) + \sum_j g(y_j) -\varepsilon \sum_{i=1}^m \sum_{j=1}^n
  e^{\left(f(x_i)+g(y_j)-\frac12 d_g(x_i,y_j)^2\right)/\varepsilon} + \varepsilon
  \label{eq:dual_EOT}
\end{equation}
While entropic optimal transport is widely used for computing the optimal transport distances, \citet{pooladian2021entropic}~showed that it can be used to compute a tractable estimator of the Monge map $T_0$. Given the optimal entropic coupling $\pi_\varepsilon$, the \textit{entropic map} is the barycentric projection $T_\varepsilon(x_i) = \mathbb{E}_{\pi_\varepsilon}[Y | X = x_i]$ which, under suitable regularity assumptions and an appropriate joint choice of regularization and sample size, consistently estimates the Monge map $T_0$. The optimal entropic potentials $(f_{\varepsilon},g_{\varepsilon})$ are the solutions of Eq~\eqref{eq:dual_EOT}.

\subsubsection{Wasserstein Geometry}

The Wasserstein space over a Riemannian manifold $(\mathcal{M}, g)$, denoted $\mathcal{P}_2(\mathcal{M})$, is the space of probability measures on $\mathcal{M}$ with finite second moments, equipped with the 2-Wasserstein distance $W_2$. While not rigorously a Riemannian manifold due to infinite dimensionality, it can be endowed with a Riemannian-like structure. There, the tangent space at a measure $\mu \in \mathcal{P}_2(\mathcal{M})$ is identified with the closure of the set of gradients of smooth functions in $L^2(\mu,T\mathcal{M})$:
\begin{equation*}
    T_\mu \mathcal{P}_2(\mathcal{M}) = \overline{\{ v= \nabla \phi : \phi \in C_c^\infty(\mathcal{M}) \}}^{L^2(\mu)},
\end{equation*}
and endowed with the norm $\lVert v \rVert^2_{L^2(\mu)} = \int_{\mathcal{M}} \lVert v \rVert^2_g d\mu(x)$. The exponential and logarithm maps in this space are:  $\exp_\mu(v) := \exp_x(v(x))_{\#}\mu, \, \log_\mu(\nu) := -\nabla\varphi(x)\,$, where $T^{\mu\to\nu} = \exp_x(-\nabla\varphi(x))$ is the optimal transport map from $\mu$ to $\nu$. Geodesics in $\mathcal{P}_2(\mathcal{M})$ between measures $\mu$ and $\nu$ are probability paths $(\mu_t)_{t\in[0,1]}$ where:
\begin{equation*}
    \mu_t = \exp_\mu(t\log_\mu(\nu)) = \exp_x(-t\nabla\varphi(x))_{\sharp}\mu
\end{equation*}

\subsection{Flow Matching on Riemannian Manifolds and the Wasserstein Space}
\label{sec:background_RFM}

Flow matching generates samples from a target distribution $\nu$ from samples of a source distribution $\mu$ by learning a time-dependent vector field that generates a probability path $\mu_t$ such that $\mu_0 = \mu$ and $\mu_1=\nu$. Directly learning such a vector field is generally impossible. Instead, flow matching defines conditional probability paths $\mu_t(\cdot|z)$ for which computing a generating vector field is tractable.

\paragraph{Ingredients of flow matching}
This construction relies on three key components: (\textbf{C1}) a condition for when a vector field generates a probability path, (\textbf{C2}) the marginalization of the conditional vector field generates the marginal probability path, and (\textbf{C3}) the losses from regressing the learnable vector field to the conditional or the marginal vector field are equivalent. The continuity equation on $(\mathcal{M},g)$, $\partial_t \mu_t(x) + \nabla_g \cdot (\mu_t(x) v_t(x)) = 0$, plays the role of the first component. The second is satisfied with the marginal vector field defined as:
\begin{equation*}
v_t(x) = \int v_t(x|z) \frac{\mu_t(x|z)}{\mu_t(x)} d\pi(z),
\end{equation*}
Indeed, if $(\mu_t(\cdot|z),v_t(\cdot|z))$ solves the continuity equation, $(\mu_t,v_t)$ is also a solution.  We use $z=(x_0,x_1)\sim \pi$ and $\pi$ is a coupling from $\mu$ and $\nu$, for which multiple options have been identified in the literature~\citep{pooladian2023multisample,lipman2022flow,tong2023conditional}. A typical conditional path is then a dirac centered on a curve $x_t(x_0,x_1)$ that interpolates between $x_0$ and $x_1$. In this case,  the generating vector field is given by $v_t(x|z) = \dot{x}_t = \frac{d}{dt}x_t(x_0,x_1)$. Finally, the equivalence of losses (\textbf{C3}) follows from the second point, as shown in~\citep{lipman2022flow} (see \citet{lipman2024flow} for a comprehensive tutorial). This leads to the celebrated flow matching objective:
\begin{align*}
\mathbb{E}_{t \sim \mathcal{U}[0,1],(x_0,x_1) \sim \pi, x_t \sim \mu_t(\cdot\mid z) }\left[ 
\|v_\theta(x_t,t)-\dot{x}_t\|_{g(x_t)}^2
\right]
\end{align*}

Recently \citet{haviv2024wasserstein} extended this framework to the Wasserstein space over Euclidean domains, termed Wasserstein Flow Matching (WFM). They consider distributions on the Wasserstein space (\emph{i.e.} probability distribution on the space of probability distributions), $\mathbb{P}_0,\mathbb{P}_1 \in \mathcal{P}_2(\mathcal{P}_2(\mathbb{R}^d))$. In practice, samples from $\mathbb{P}_0$ and $\mathbb{P}_1$ are represented as empirical distributions or point clouds $(X_0,X_1)$ and $X_t$ is defined as the McCann interpolant: $X_t = (1-t)X_0 + t \hat{T}^{X_0\to X_1}(X_0)$, with $\hat{T}^{X_0\to X_1}$ the optimal transport map between $X_0$ and $X_1$. One then learns the target vector field $u_{\theta}(X_t,t) \in T\mathcal{P}_2(\mathbb{R}^d)$ by regressing it against $\dot{X}_t =  \hat{T}^{X_0\to X_1}(X_0) - X_0$.



\begin{figure}[t]
\centering
\includegraphics[width=0.95\linewidth]{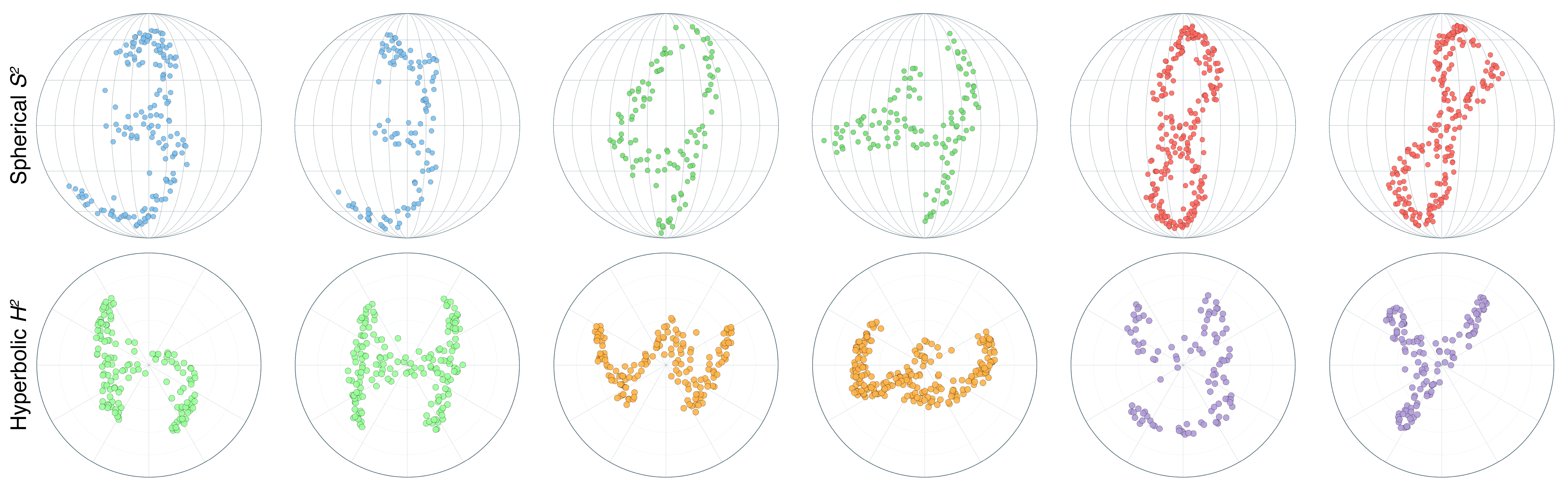}
\caption{\textbf{Generation of distributions on non-Euclidean spaces} Examples of empirical distributions generated by the proposed Riemannian Wasserstein Entropic Flow Matching model on non-Euclidean geometries. The top row displays MNIST digits (3, 4, and 8) generated on the sphere $\mathbb{S}^2$ shown in 2D via Mollweide projections. The bottom row shows EMNIST characters (H, W, and Y) generated on the hyperbolic plane $\mathbb{H}^2$, visualized using the Poincaré disk model.}
\label{fig:mnist_emnist_figures}
\vspace{-4mm}
\end{figure}

\subsection{Related Work}

\paragraph{Optimal Transport on Riemannian Manifolds.} Optimal Transport on Riemannian manifolds has been extensively studied in the mathematical literature, with foundational results on the existence and uniqueness of Monge maps, duality theory, and regularity properties \citep{mccann2001polar, villani2008optimal}. These theoretical insights have paved the way for many applications, particularly in generative modeling for non-Euclidean data \citep{bose2023se,de2022riemannian,huguet2023heat}. Relatedly, \cite{you2026barycentric} study intrinsic and tangential barycentric projections of transport plans on Riemannian manifolds. Recent works have also explored Optimal Transport on Wasserstein spaces themselves, proving existence of continuity equations and geodesics despite the infinite dimensionality, enriching the theoretical framework and expanding its applicability \citep{bonet2025flowing, emami2025optimal,pinzi2025nested}.

\begin{wraptable}{r}{0.54\textwidth}
\centering
\small
\begin{tabular}{lcc}
\toprule
Method & Data Space & Distribution Space \\
\midrule
FM & $(\mathbb{R}^d, L^2)$ & $\times$ \\
RFM & $(\mathcal{M}, g)$ & $\times$ \\
WFM & $(\mathbb{R}^d, L^2)$ & $(\mathcal{P}_2(\mathbb{R}^d), W)$ \\
RWEFM & $(\mathcal{M}, g)$ & $(\mathcal{P}_2(\mathcal{M}), W)$ \\
\bottomrule
\end{tabular}
\caption{\textbf{Comparison of FM approaches.} Only RWEFM respects both the Riemannian structure of the data space and the Wasserstein structure of the distribution space.}
\label{tab:method_comparison}
\vspace{-6mm}
\end{wraptable}
\paragraph{Generative Modeling of Distributions}
Our work relates to recent efforts in defining generative models over spaces of probability measures. Fisher FM \citep{davis2024fisher} and Categorical FM \citep{cheng2024categorical} apply the Flow Matching framework to the simplex $\Delta_d$ equipped with the Fisher--Rao geometry, focusing on categorical data. Similarly, \citet{stark2024dirichlet} utilize the Dirichlet distribution for discrete data generation. For continuous distributions, Meta FM \citep{atanackovic2024meta} and Wasserstein FM \citep{haviv2024wasserstein,piening2026generalized} learn flows directly on the Wasserstein space $\mathcal{P}_2(\mathbb{R}^d)$.  Crucially, these approaches are restricted to Euclidean base spaces, while our framework extends to distributions over general Riemannian manifolds. 

\section{Riemannian Wasserstein Entropic FM}
\label{sec:RWEFM}
\subsection{Flow Matching on $\mathcal{P}_2(\mathcal{M})$}
\label{sec:RWEFM_theory}

RWEFM learns generative flows on $\pspace$ by lifting the conditional FM paradigm to $\ppspace$, respecting the intrinsic geometry of $\mathcal{M}$ and $\pspace$. Following Section~\ref{sec:background_RFM}, this requires three components (\textbf{C1}--\textbf{C3}). We start with (\textbf{C1}), the conditions for when a vector field $V_t\in T\pspace$ generates a probability path $\mathbb{P}_t\in\ppspace$.

\subsubsection{Weak continuity equation and superposition on \texorpdfstring{$\pspace$}{P2(M)} (C1)}

\begin{table}[t]
\centering
\resizebox{\textwidth}{!}{%
\begin{tabular}{@{}lcccccccccccccccccc@{}}
\toprule
& \multicolumn{6}{c}{\textbf{Manifold:} $\mathbb{H}^2$ (EMNIST)} & \multicolumn{6}{c}{\textbf{Manifold:} $\mathbb{S}^2$ (MNIST)} & \multicolumn{6}{c}{\textbf{Manifold:} $\mathbb{T}^2$ (KMNIST)} \\
\cmidrule(lr){2-7} \cmidrule(lr){8-13} \cmidrule(lr){14-19}
& \multicolumn{2}{c}{h} & \multicolumn{2}{c}{w} & \multicolumn{2}{c}{y} & \multicolumn{2}{c}{3} & \multicolumn{2}{c}{4} & \multicolumn{2}{c}{8} & \multicolumn{2}{c}{ki} & \multicolumn{2}{c}{na} & \multicolumn{2}{c}{ma} \\
\cmidrule(lr){2-3} \cmidrule(lr){4-5} \cmidrule(lr){6-7} \cmidrule(lr){8-9} \cmidrule(lr){10-11} \cmidrule(lr){12-13} \cmidrule(lr){14-15} \cmidrule(lr){16-17} \cmidrule(lr){18-19}
Method & CD & EMD & CD & EMD & CD & EMD & CD & EMD & CD & EMD & CD & EMD & CD & EMD & CD & EMD & CD & EMD \\
\midrule
PVD    & 0.32 & 0.35 & 0.43 & 0.38 & \rC{0.25} & \rC{0.30} & 0.45 & 0.42 & 0.40 & 0.40 & 0.47 & 0.42 & $-$ & $-$ & $-$ & $-$ & $-$ & $-$ \\
PSF    & \rB{0.18} & 0.33 & \rA{0.06} & \rB{0.30} & \rB{0.21} & 0.31 & \rB{0.19} & 0.35 & \rA{0.17} & \rC{0.29} & \rB{0.23} & 0.39 & $-$ & $-$ & $-$ & $-$ & $-$ & $-$ \\
\addlinespace
FM     & 0.49 & 0.49 & 0.48 & 0.48 & 0.48 & 0.48 & 0.40 & 0.42 & 0.45 & 0.46 & 0.36 & 0.41 & 0.48 & 0.46 & 0.49 & 0.48 & 0.47 & 0.47 \\
SetFM  & 0.37 & 0.35 & \rC{0.32} & \rC{0.33} & 0.34 & 0.33 & \rC{0.26} & \rB{0.27} & \rC{0.29} & \rC{0.29} & 0.41 & \rB{0.35} & 0.28 & \rC{0.28} & 0.24 & 0.24 & 0.28 & 0.29 \\
WFM    & \rC{0.30} & \rB{0.28} & 0.41 & 0.39 & 0.29 & \rB{0.28} & 0.29 & \rC{0.28} & \rC{0.29} & \rB{0.27} & \rC{0.31} & \rC{0.37} & \rB{0.22} & \rA{0.17} & \rB{0.19} & \rA{0.14} & \rB{0.17} & \rA{0.17} \\
\addlinespace
RFM    & 0.48 & 0.48 & 0.44 & 0.45 & 0.47 & 0.48 & 0.30 & 0.38 & 0.42 & 0.45 & 0.33 & 0.38 & 0.47 & 0.46 & 0.49 & 0.48 & 0.47 & 0.47 \\
SetRFM & \rC{0.30} & \rC{0.31} & 0.34 & \rC{0.33} & 0.31 & 0.31 & 0.38 & 0.38 & 0.38 & 0.38 & 0.39 & 0.40 & \rC{0.26} & \rC{0.28} & \rC{0.23} & \rC{0.23} & \rC{0.26} & \rC{0.28} \\
RWEFM  & \rA{0.15} & \rA{0.09} & \rB{0.23} & \rA{0.10} & \rA{0.16} & \rA{0.11} & \rA{0.18} & \rA{0.18} & \rB{0.19} & \rA{0.18} & \rA{0.16} & \rA{0.16} & \rA{0.17} & \rB{0.18} & \rA{0.16} & \rB{0.16} & \rA{0.15} & \rB{0.18} \\
\bottomrule
\end{tabular}%
}
\caption{\textbf{RWEFM on MNIST, EMNIST, and KMNIST.} Benchmarking of RWEFM against FM variants and other point-cloud generative models on hyperbolic ($\mathbb{H}^2$, EMNIST), spherical ($\mathbb{S}^2$, MNIST), and toroidal ($\mathbb{T}^2$, KMNIST) data. We report the classwise 1-NN deviation (1-NN-D) using Chamfer Distance (CD) and Earth Mover's Distance (EMD) between generated and test point clouds. The score ranges from $0$ to $0.5$, where $0$ corresponds to $50\%$ accuracy in each class; lower is better. Values are averaged over 3 random seeds and 5 samplings per seed. The best three results per column are highlighted (\lgd{rankone}{1st}, \lgd{ranktwo}{2nd}, \lgd{rankthree}{3rd}); ties share a rank. Dashes indicate methods not evaluated on that manifold. Full results with MMD metrics and per-manifold mean $\pm$ std are reported in \cref{tab:mmd_mnist_emnist,tab:nna_std_all,tab:mmd_std_all} (Appendix~\ref{appendix:hyperparameters}). Per-dataset training times for all methods are in \cref{tab:timing}.}
\label{tab:nna_mnist_emnist}
\vspace{-4mm}
\end{table}

The following result shows that if $(\mathbb{P}_t,V_t)$ solves the weak continuity equation on $\pspace$, then $V_t$ \emph{generates} $\mathbb{P}_t$ in the Lagrangian sense:

\begin{theorem}
\label{thm:weakCE_superpostion} 
Let $\mathbb{P}_t$ be an absolutely continuous curve on $\mathcal{P}_2(\pspace)$, and a vector field $V_t \in L^2(\mathbb{P}_t, T\pspace)$ with $\int_{\pspace} \lVert V_t(\mu) \rVert^2_{L^2(\mu)} d\mathbb{P}_t(\mu)<\infty$.

Assume that $(\mathbb{P}_t,V_t)$ solves the weak continuity equation on $\pspace$:
\begin{equation} \label{eq:weak_CE}
    \int_0^T\int_{\pspace} \bigg( \partial_t\varphi_t(\mu) + \langle \nabla_{\mathcal{W}}\varphi_t(\mu), V_t(\mu)\rangle_{L^2(\mu)} \bigg) \, d\mathbb{P}_t(\mu)dt = 0
\end{equation}

for all smooth, compactly supported cylinder functionals $\varphi_t$. Assume moreover that $V_t$ satisfies suitable regularity condition specified in Appendix~\ref{app:lifting_FM}. Then, for $\mathbb{P}_0-a.e$ $\mu \in \pspace$, there exists a unique absolutely continuous curve $\gamma_\mu(t): [0,T]\rightarrow \pspace$ solving 
    \begin{equation*}
    \dot{\gamma}_{\mu}(t) = V_t(\gamma_{\mu}(t)) \quad \text{for a.e. t, with } \gamma_{\mu}(0)=\mu
    \end{equation*}
    and the map $\Phi_t(\mu) :=\gamma_{\mu}(t)$ defines a flow such that $\mathbb{P}_t = (\Phi_t)_{\#} \mathbb{P}_0 \; \text{for all } t\in[0,T]$
\end{theorem}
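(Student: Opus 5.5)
The plan is to reduce the statement to the nested superposition principle of \citet{pinzi2025nested} and then upgrade the resulting probabilistic representation to a deterministic flow using the regularity assumption on $V_t$ from Appendix~\ref{app:lifting_FM}. Concretely, we will argue in three steps: (i) superposition, giving a measure on curves in $\pspace$; (ii) uniqueness of integral curves of $V_t$, forcing that measure to concentrate on a graph; (iii) identification of the flow map $\Phi_t$ and of the push-forward identity.

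\textbf{Step 1 (superposition).} The finite-energy condition $\int_0^T\int_{\pspace}\lVert V_t(\mu)\rVert^2_{L^2(\mu)}\,d\mathbb{P}_t(\mu)\,dt<\infty$, together with the weak continuity equation \eqref{eq:weak_CE} tested against smooth cylinder functionals, is exactly the hypothesis of the superposition principle on $\mathcal{P}_2(\pspace)$ proved in \citet{pinzi2025nested}. The key input here is that cylinder functionals $\varphi(\mu)=F(\int\phi_1 d\mu,\dots,\int\phi_k d\mu)$ are rich enough to separate points of $\pspace$. We will check that their Wasserstein gradient is $\nabla_{\mathcal{W}}\varphi(\mu)=\sum_i\partial_iF\,\nabla\phi_i$, which lies in $T_\mu\pspace$. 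This verification is where the manifold structure of $\mathcal{M}$ enters: since $\mathcal{M}$ is complete and separable, $\pspace$ is Polish. The output of this step is a probability measure $\Lambda$ on $C([0,T];\pspace)$ satisfying two properties. First, $(e_t)_\#\Lambda=\mathbb{P}_t$ for all $t$. Second, $\Lambda$-a.e.\ curve $\gamma$ is absolutely continuous and solves $\dot\gamma(t)=V_t(\gamma(t))$ for a.e.\ $t$, with the derivative understood as the metric/tangent velocity in $T_{\gamma(t)}\pspace$.

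\textbf{Step 2 (uniqueness and concentration on a graph).} We disintegrate $\Lambda$ with respect to $e_0$, writing $\Lambda=\int\Lambda_\mu\,d\mathbb{P}_0(\mu)$, where $\Lambda_\mu$ is concentrated on integral curves starting from $\mu$. The regularity condition on $V_t$ will be used here. We take it to be a one-sided Lipschitz/monotonicity bound of the form $\langle V_t(\mu)\circ T-V_t(\nu),\ \log\rangle\le L(t)W_2^2(\mu,\nu)$ along optimal couplings, with $L\in L^1$, or simply Lipschitz continuity with respect to a suitable coupling. It gives a Grönwall estimate $W_2(\gamma(t),\tilde\gamma(t))\le e^{\int_0^tL}W_2(\gamma(0),\tilde\gamma(0))$ for any two integral curves, obtained by differentiating $t\mapsto W_2^2(\gamma(t),\tilde\gamma(t))$ via the first-variation formula for $W_2$ on $\pspace$. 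Applied to two curves with the same initial point, this yields uniqueness. Hence $\Lambda_\mu=\delta_{\gamma_\mu}$ for $\mathbb{P}_0$-a.e.\ $\mu$, and the integral curve $\gamma_\mu$ exists because $\Lambda_\mu$ is nonempty.

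\textbf{Step 3 (flow and push-forward).} We define $\Phi_t(\mu)=\gamma_\mu(t)$. Measurability of $\mu\mapsto\gamma_\mu$ follows from the measurable disintegration, or from the Grönwall estimate, which makes the map continuous on its domain. Then $(\Phi_t)_\#\mathbb{P}_0=(e_t)_\#\Lambda=\mathbb{P}_t$ for every $t$.

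The main obstacle is Step 2, specifically the differentiation of $W_2^2$ between two curves in $\pspace$ over a Riemannian base. Over $\mathbb{R}^d$ this is the standard Ambrosio–Gigli–Savaré formula. On $\mathcal{M}$, the optimal plan between $\gamma(t)$ and $\tilde\gamma(t)$ may not be induced by a map, and cut-locus issues arise. We would handle this by working with optimal plans and the Riemannian first variation $\frac{d}{dt}\tfrac12W_2^2=-\int\langle\exp^{-1}_x y,\dot{}\rangle\,d\pi$, valid a.e.\ in $t$, and by phrasing the regularity assumption directly in these terms so that Grönwall applies.
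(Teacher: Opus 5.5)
Your overall architecture matches the paper's proof, and your Step~3 coincides with it: a superposition measure on curves, disintegration with respect to $e_0$, uniqueness forcing the conditional path measure to be a Dirac, then $(\Phi_t)_\#\mathbb P_0=(e_t)_\#\Lambda=\mathbb P_t$.

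\textbf{Step 1 has a real gap.} \citet[Theorem~1.2]{pinzi2025nested} is formulated for random measures over a Euclidean base, not over a Riemannian manifold. Its hypotheses are therefore not ``exactly'' met on $\mathcal{P}_2(\pspace)$, and Polishness of $\pspace$ is not the issue. The paper supplies a reduction that you omit:
\begin{itemize}
\item choose a smooth embedding $\jmath:\mathcal M\to\mathbb R^D$ and set $\widetilde{\mathbb P}_t=J_\#\mathbb P_t$ with $J(\mu)=\jmath_\#\mu$;
\item transport the velocity as $\widetilde b(t,\jmath(x),J(\mu))=d\jmath_x\,b(t,x,\mu)$;
\item transfer the weak continuity equation by pulling back cylinder tests with a cutoff equal to one near $K$, then apply the Euclidean theorem;
\item show the resulting trajectories stay in $\mathcal P(\jmath(K))$, using the marginal identities at rational times plus continuity and closedness, and pull them back;
\item recover intrinsic $W_2$-absolute continuity from the energy bound by metric comparison on $K$.
\end{itemize}
This is exactly where the paper's ``regularity condition'' is used. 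It needs a compact $K$ with $\mathbb P_t(\mathcal P(K))=1$, giving a bounded differential of $\jmath$ and bounded distortion of its inverse. It also needs a jointly Borel representative $b(t,x,\mu)$ of $V_t$. Your proposal uses neither, so Step~1 invokes a theorem outside its stated scope.

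\textbf{Step 2 takes a different route, unnecessary relative to the paper and unfinished.} The paper does not derive uniqueness. Its hypotheses simply include that, for $\mathbb P_0$-a.e.\ $\mu$, the measure-valued equation $\partial_t\gamma+\operatorname{div}_{\mathcal M}(\gamma V_t(\gamma))=0$ has at most one $W_2$-absolutely continuous solution in $\mathcal P(K)$ with $\gamma(0)=\mu$. The differentiation of $W_2^2$ you call the main obstacle therefore never arises there. Under that hypothesis your Step~2 reduces verbatim to the paper's disintegration argument.

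Deriving uniqueness from a one-sided Lipschitz bound via Gr\"onwall would give a stronger, more checkable theorem, but you have not carried it out:
\begin{itemize}
\item Your condition $\langle V_t(\mu)\circ T-V_t(\nu),\log\rangle\le L\,W_2^2$ is not well defined on $\mathcal M$, since the two vectors live in different tangent spaces. You would need the symmetric form $\int[\langle V_t(\mu)(x),-\log_x y\rangle+\langle V_t(\nu)(y),-\log_y x\rangle]\,d\pi(x,y)\le L\,W_2^2(\mu,\nu)$ over optimal plans $\pi$.
\item The a.e.\ first-variation inequality for $t\mapsto W_2^2(\gamma(t),\tilde\gamma(t))$ along two solutions of the continuity equation over a Riemannian base is only gestured at. Cut loci and non-uniqueness of optimal plans are exactly the difficulty. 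With compact support, semiconcavity of $d_g^2$ on $K$ is the natural tool for the required one-sided bound, but that argument is missing.
\end{itemize}
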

The proof uses a superposition principle for random measures and is given in Appendix~\ref{app:continuity_p2}. 

\subsubsection{Marginalization of the conditional vector field (C2)}

Let $\Pi(\mu,\nu)$ be a probability measure on $\pspace \otimes \pspace$, with marginals $\int_{\mu}d\Pi(\mu,\nu) = \mathbb{P}_1$ and $\int_{\nu}d\Pi(\mu,\nu) = \mathbb{P}_0$. Given a conditional vector field $V_t(\cdot|\mu_0,\mu_1)$, conditioned on a sample $(\mu_0,\mu_1)\sim\Pi$, we define the marginal vector field $V_t$ as:
\begin{equation*}
V_t(\mu_t) :=  \mathbb{E}_{\mathbb{P}_t(\mu,\nu|\mu_t)}[V_t(\mu_t|\mu_0,\mu_1)]= \iint_{\pspace \times \pspace} V_t(\mu_t|\mu,\nu)\ d\mathbb{P}_t(\mu,\nu|\mu_t)
\end{equation*}
where $d\mathbb{P}_t(\mu,\nu|\mu_t) = \frac{d\mathbb{P}_t(\mu_t|\mu,\nu)d\Pi(\mu,\nu)}{d\mathbb{P}_t(\mu_t)}$ is the conditional probability measure. The following result establishes the weak continuity equation for the marginal pair and, under the hypotheses of Theorem~\ref{thm:weakCE_superpostion}, its Lagrangian interpretation.

\begin{proposition}
Assume $(\mathbb{P}_t(\cdot|\mu,\nu),V_t(\cdot|\mu,\nu))$ solves the weak continuity equation~\eqref{eq:weak_CE}, for $\mu,\nu$ $\Pi-a.s.$. Then $(\mathbb{P}_t,V_t)$ also solves~\eqref{eq:weak_CE}. Under the hypotheses of the superposition theorem (Theorem~\ref{thm:weakCE_superpostion}) for the marginal pair, $V_t$ generates the marginal probability path $\mathbb{P}_t$.
\end{proposition}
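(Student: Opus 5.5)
The plan is to follow the standard conditional-to-marginal argument of flow matching, lifted to $\pspace$. Here the integration variable is $\mu\in\pspace$ rather than $x\in\mathcal{M}$, and the pairing $\langle\cdot,\cdot\rangle_{L^2(\mu)}$ replaces the Euclidean inner product. First I would make the objects precise. The marginal path is $\mathbb{P}_t := \iint \mathbb{P}_t(\cdot|\mu,\nu)\, d\Pi(\mu,\nu)$. I would define the conditional law $\mathbb{P}_t(\mu,\nu|\mu_t)$ by disintegrating the joint measure $\mathbb{P}_t(d\mu_t|\mu,\nu)\,d\Pi(\mu,\nu)$ on $\pspace^3$ with respect to its $\mu_t$-marginal $\mathbb{P}_t$. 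Using disintegration avoids assuming densities, which do not exist in infinite dimensions. $\pspace$ is Polish, so this disintegration exists and is unique $\mathbb{P}_t$-a.e. This gives $V_t$ as a Bochner-type average of $V_t(\mu_t|\mu,\nu)\in L^2(\mu_t,T\mathcal{M})$, well defined as an element of $L^2(\mu_t,T\mathcal{M})$.

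Next I would check integrability. By Jensen's inequality in the Hilbert space $L^2(\mu_t)$,
\[
\int \lVert V_t(\mu_t)\rVert^2_{L^2(\mu_t)}\,d\mathbb{P}_t \le \iiint \lVert V_t(\mu_t|\mu,\nu)\rVert^2_{L^2(\mu_t)}\,d\mathbb{P}_t(\mu_t|\mu,\nu)\,d\Pi .
\]
I would assume the right-hand side is integrable in $t$, as an implicit hypothesis on the conditional fields. This bound shows $V_t\in L^2(\mathbb{P}_t,T\pspace)$. It also justifies every Fubini exchange below, since for a cylinder functional $\varphi_t$ the quantities $\partial_t\varphi_t$ and $\nabla_{\mathcal{W}}\varphi_t$ are bounded, the latter in $L^\infty$ on $\mathcal{M}$.

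The core step is a direct computation. Fix a smooth compactly supported cylinder functional $\varphi_t$. For $\Pi$-a.e. $(\mu,\nu)$ the conditional weak continuity equation gives
\[
\int_0^T\!\!\int \big(\partial_t\varphi_t(\mu_t)+\langle\nabla_{\mathcal{W}}\varphi_t(\mu_t),V_t(\mu_t|\mu,\nu)\rangle_{L^2(\mu_t)}\big)\,d\mathbb{P}_t(\mu_t|\mu,\nu)\,dt=0 .
\]
Integrating against $d\Pi$ and applying Fubini, the $\partial_t\varphi_t$ term becomes $\int\partial_t\varphi_t\,d\mathbb{P}_t$. For the second term I would rewrite the triple integral through the disintegration as $\int \iint \langle \nabla_{\mathcal{W}}\varphi_t(\mu_t), V_t(\mu_t|\mu,\nu)\rangle\, d\mathbb{P}_t(\mu,\nu|\mu_t)\,d\mathbb{P}_t(\mu_t)$. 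I would then pull the fixed vector $\nabla_{\mathcal{W}}\varphi_t(\mu_t)$ out of the inner average, using linearity and continuity of the $L^2(\mu_t)$ pairing with respect to the Bochner integral. This yields $\int\langle\nabla_{\mathcal{W}}\varphi_t,V_t\rangle\,d\mathbb{P}_t$, which is \eqref{eq:weak_CE} for the marginal pair. The second claim then follows immediately from Theorem~\ref{thm:weakCE_superpostion}, applied to $(\mathbb{P}_t,V_t)$ under the assumed regularity. Absolute continuity of $\mathbb{P}_t$ also follows from the weak continuity equation together with the $L^2$ bound.

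I expect the main obstacle to be measure-theoretic rather than computational. One issue is measurability of $(\mu,\nu,\mu_t)\mapsto V_t(\mu_t|\mu,\nu)$ as a section of the bundle $\{(\mu_t,v):v\in L^2(\mu_t)\}$, whose fibres vary with the base point. The other is making the "pull out the inner product" step rigorous. I would handle both by testing against cylinder gradients. Since $\nabla_{\mathcal{W}}\varphi_t(\mu_t)=\sum_k\partial_k F\,\nabla\phi_k$ with fixed smooth $\phi_k$, it suffices to define $V_t$ weakly through the scalar maps $\mu_t\mapsto\int\langle\nabla\phi,V_t(\mu_t|\mu,\nu)\rangle\,d\mu_t$. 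These are measurable and integrable by the Jensen bound, so conditional expectation applies to them directly. The object $V_t(\mu_t)$ is then recovered by Riesz representation in $T_{\mu_t}\pspace$.
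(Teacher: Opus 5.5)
Your proposal is correct and follows the paper's proof. You integrate the conditional weak continuity equation against $\Pi$, apply Fubini to the $\partial_t\varphi_t$ term, and identify the velocity term with $\int\langle\nabla_{\mathcal W}\varphi_t,V_t\rangle_{L^2(\mu)}\,d\mathbb{P}_t$ by pulling $\nabla_{\mathcal W}\varphi_t(\mu)$ out of the conditional average before invoking the superposition theorem. Your disintegration, Jensen bound and weak (Riesz) definition of $V_t$ make rigorous the Fubini and density manipulations the paper does formally. The one thing to drop is your aside that absolute continuity of $\mathbb{P}_t$ follows automatically: it is unnecessary, being among the assumed superposition hypotheses, and in general holds only for a suitable time representative.
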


\subsubsection{RWEFM Training Objective (C3)}

The ideal, yet intractable, Flow Matching objective is
\begin{align*}
    \mathcal{L}_{FM} = \mathbb{E}_{t,\mu_t\sim \mathbb{P}_t}[\lVert V_t(\mu_t)-V_t^{\theta}(\mu_t)\rVert^2_{L^2(\mu_t)}].
\end{align*}
To bypass this, we optimize the RWEFM objective:
\begin{equation} \label{eq:rwefm_loss}
    \mathcal{L}_{RWEFM} = \mathbb{E}_{\substack{t,\mu,\nu\sim \Pi,\\ \mu_t\sim \mathbb{P}_t(\cdot\mid \mu,\nu)}} \bigg[\big\lVert V_t(\mu_t \mid \mu, \nu) - V_t^{\theta}(\mu_t) \big\rVert^2_{L^2(\mu_t)} \bigg]
\end{equation}
As shown in Appendix~\ref{app:gradloss}, $\nabla_{\theta} \mathcal{L}_{FM} = \nabla_{\theta} \mathcal{L}_{RWEFM}$. Combining \textbf{C1}--\textbf{C3}, we have successfully lifted flow matching to $\ppspace$.

\subsubsection{RWEFM interpolants}

Different choices of conditional probability paths $\mathbb{P}_t(\cdot|\mu_0,\mu_1)$ are possible. In this work, we choose the McCann geodesics as the canonical interpolants between $\mu$ and $\nu$:
\begin{equation}
\label{eq:prob_interpolant}
    \mathbb{P}_t(\cdot|\mu,\nu) = \delta_{\mu_t} \text{ with } \mu_t = (\psi_t)_\# \mu 
        \text{ and }\psi_t(x) = \exp_x(t\log_x(T^{\mu\to \nu}(x))). 
\end{equation}
For such conditional probability path, the conditional vector field that solves~\eqref{eq:weak_CE} is given by:
\begin{align}
    V_t(\cdot|\mu,\nu) = \frac{1}{1-t}\log_{x_t} T^{\mu\to \nu}(\psi_{t}^{-1}(x_t)).\label{eq:cond_vector_field}
\end{align}
We verify that $V_t(\cdot|\mu,\nu) \in T_{\mu_t}\pspace$ with $\log_{x_t}T^{\mu\to \nu}(\psi_{t}^{-1}(x_t)) = \log_{x_t}T^{\mu \to \nu}(x_t) = -\nabla\varphi(x_t)$ for some $\varphi \in C_c^\infty(\mathcal{M})$.

\begin{figure}[b!]
  \centering
  \includegraphics[width=0.8\linewidth]{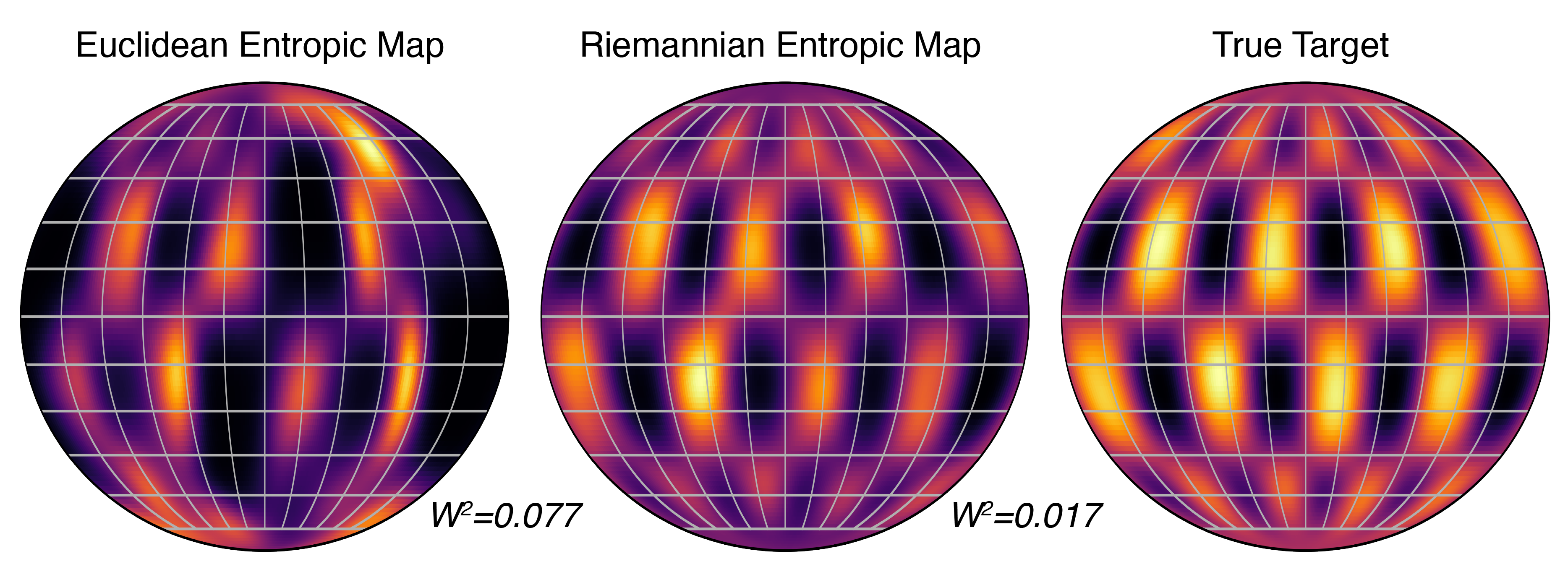}
  \caption{\textbf{Riemannian Entropic Map.} Gaussian-to-checkerboard transport on $\mathbb{S}^2$ using Euclidean (left) and Riemannian (middle) entropic maps. The Riemannian approach respects the underlying geometry, resulting in a more accurate mapping (see \cref{fig:entropic_map_benchmark}).}
  \label{fig:entropic_map_sphere_oos}
  \vspace{-4mm}
\end{figure}

\subsection{The Riemannian Entropic Map}
\label{sec:REM}
A crucial step of computing the RWEFM objective is obtaining the Monge map $T^{\mu\to\nu}$ between two distributions. To this end, we propose the \textbf{Riemannian Entropic Map} $T_\varepsilon(x)$, a scalable estimator for $T^{\mu\to\nu}$ on Riemannian manifolds based on finite samples from $\mu$ and $\nu$ and fast entropic OT solvers.

Like the Euclidean entropic map \citep{pooladian2021entropic}, our estimator leverages the entropic optimal coupling $\pi_\varepsilon$ obtained from the Sinkhorn algorithm, computed using the Riemannian distance cost matrix $C_{ij} = \frac{1}{2} d_g^2(x_i, y_j)$ between samples $\{x_i\}_{i=1}^N \sim \mu$ and $\{y_j\}_{j=1}^N \sim \nu$.

\begin{definition}[Riemannian Entropic Map]
    Let $\pi_\varepsilon$ be the optimal entropic coupling. We define the Riemannian entropic map $T_\varepsilon: \mathcal{M} \to \mathcal{M}$ as:
    \begin{equation} \label{eq:riemannian_map}
        T_\varepsilon(x) := \exp_x\left( \int_{\mathcal{M}} \log_x(y) d\pi_\varepsilon(y|x) \right).
    \end{equation}
\end{definition}

\begin{theorem}\label{thm:main}
    Let $f_\varepsilon$ be the optimal dual potential of the entropic optimal transport problem (Eq~\eqref{eq:dual_EOT}). Under primal feasibility conditions, the Riemannian entropic map arises as the gradient of the dual potential: $T_\varepsilon(x) = \exp_x\left( -\nabla f_\varepsilon(x) \right)$. The proof is in Appendix~\ref{app:EOT}; when $\mathcal{M} = \mathbb{R}^d$, Eq~\eqref{eq:riemannian_map} recovers the Euclidean map $T_\varepsilon = \mathbb{E}_{\pi_\varepsilon}[Y|X=x]$.
\end{theorem}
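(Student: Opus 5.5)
The plan is to extend the dual potential $f_\varepsilon$ off the support of $\hat\mu$ through the first-order (primal feasibility) condition of the dual problem~\eqref{eq:dual_EOT}, and then differentiate that formula in $x$. At optimality, the dual variables satisfy the marginal constraint $\sum_j \pi_\varepsilon(x,y_j)=\hat\mu(x)$. Here we use the Gibbs form
\begin{equation*}
\pi_\varepsilon(x,y_j) = \exp\big((f_\varepsilon(x)+g_\varepsilon(y_j)-\tfrac12 d_g^2(x,y_j))/\varepsilon\big).
\end{equation*}
With uniform weights, and absorbing normalizing constants into $g_\varepsilon$, this gives the soft-min representation
\begin{equation*}
f_\varepsilon(x) = -\varepsilon \log \sum_{j} \exp\big((g_\varepsilon(y_j)-\tfrac12 d_g^2(x,y_j))/\varepsilon\big) + \text{const}.
\end{equation*}
We take this formula as the canonical extension of $f_\varepsilon$ to all $x\in\mathcal{M}$, exactly as in the Euclidean construction of \citet{pooladian2021entropic}. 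Correspondingly, the conditional coupling is defined for every $x$ by
\begin{equation*}
\pi_\varepsilon(y_j\mid x) = \frac{\exp\big((g_\varepsilon(y_j)-\tfrac12 d_g^2(x,y_j))/\varepsilon\big)}{\sum_k \exp\big((g_\varepsilon(y_k)-\tfrac12 d_g^2(x,y_k))/\varepsilon\big)}.
\end{equation*}
This agrees with the Sinkhorn output at the sample points $x_i$.

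Next, differentiate the soft-min. Since $g_\varepsilon(y_j)$ does not depend on $x$, the chain rule gives
\begin{equation*}
\nabla f_\varepsilon(x) = \sum_j \pi_\varepsilon(y_j\mid x)\, \nabla_x \tfrac12 d_g^2(x,y_j).
\end{equation*}
The key geometric input is the standard Riemannian identity $\nabla_x \tfrac12 d_g^2(x,y) = -\log_x(y)$. It is valid whenever $y$ is not in the cut locus of $x$, where $x\mapsto d_g^2(x,y)$ is smooth. Substituting, we obtain
\begin{equation*}
-\nabla f_\varepsilon(x) = \sum_j \pi_\varepsilon(y_j\mid x)\log_x(y_j) = \int_{\mathcal{M}} \log_x(y)\, d\pi_\varepsilon(y\mid x).
\end{equation*}
Applying $\exp_x$ to both sides then yields $T_\varepsilon(x)=\exp_x(-\nabla f_\varepsilon(x))$, by the definition in Eq.~\eqref{eq:riemannian_map}. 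The same computation applies verbatim to a general (non-empirical) $\nu$, replacing sums by integrals, provided we can differentiate under the integral sign. This is justified by dominated convergence on compact $\mathcal{M}$ or under second-moment bounds.

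For the Euclidean case, set $\mathcal{M}=\mathbb{R}^d$, so that $\log_x(y)=y-x$ and $\exp_x(v)=x+v$. Then
\begin{equation*}
T_\varepsilon(x) = x + \int (y-x)\,d\pi_\varepsilon(y\mid x) = \mathbb{E}_{\pi_\varepsilon}[Y\mid X=x],
\end{equation*}
using that $\pi_\varepsilon(\cdot\mid x)$ is a probability measure.

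The main obstacle is the cut-locus issue. Both $\log_x$ and the gradient of $d_g^2$ fail to be well defined on $\mathrm{Cut}(x)$, for example at antipodal points on $\mathbb{S}^2$. I would handle this as follows:
\begin{itemize}
\item State the identity for $x$ such that no $y_j$ lies in $\mathrm{Cut}(x)$. For fixed finite $\{y_j\}$ this excludes only a volume-null set of $x$, since each $\mathrm{Cut}(y_j)$ has measure zero.
\item For general $\nu$, assume $\nu(\mathrm{Cut}(x))=0$. Then $d_g^2(\cdot,y)$ is semiconcave and differentiable for $\nu$-a.e.\ $y$, and the interchange of gradient and integral holds $\mu$-a.e.
\end{itemize}
The phrase ``primal feasibility conditions'' in the statement is read as exactly the marginal constraint used in the first step, which is what licenses the soft-min representation of $f_\varepsilon$.
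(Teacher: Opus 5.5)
Your proposal is correct and follows essentially the same route as the paper: you use the marginal (primal feasibility) constraint to write $f_\varepsilon$ as the $(c,\varepsilon)$-transform (soft-min) of $g_\varepsilon$, differentiate it using $\nabla_x \tfrac12 d_g^2(x,y) = -\log_x(y)$, recognize the Gibbs conditional $\pi_\varepsilon(\cdot\mid x)$, and apply $\exp_x$, with the Euclidean case following from $\log_x(y)=y-x$. Your explicit global extension of $f_\varepsilon$ and your cut-locus caveats are slightly more careful than the appendix proof, which differentiates an identity assumed only on $\operatorname{supp}(\mu)$; the paper gives that extension separately in its out-of-sample section and treats cut loci only informally.
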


Computationally, this definition implies a straightforward \textbf{lift-average-retract} procedure for empirical measures. To evaluate the map at a source point $x$: (i) \textbf{lift} by computing the tangent vectors $v_j = \log_x(y_j)$ for all target points $y_j$, mapping the geometry from $\mathcal{M}$ to the vector space $T_x\mathcal{M}$; (ii) \textbf{average} to calculate the weighted mean $\bar{v} = \sum_j \pi_\varepsilon(y_j|x) v_j$; (iii) \textbf{retract} by applying the exponential map to project back to the manifold: $T_\varepsilon(x) = \exp_x(\bar{v})$. We highlight that this procedure only requires access to the Riemannian exponential and logarithm maps, making it applicable to a wide range of manifolds. When these maps are not available, our estimator can still be efficiently computed with only access to the distance function $d_g(x,y)$, as shown in Appendix~\ref{app:entropic_no_exp_no_log}; we exploit this to apply RWEFM on a triangulated mesh with no closed-form geometry (Section~\ref{sec:bunny}). Furthermore, the estimator has the same complexity as the Euclidean entropic map, making it equally scalable to large datasets.

To justify using $T_\varepsilon$ as a reliable proxy for the true Monge map $T_{0}$ in our learning objective, we establish the following statistical bound. This result ensures that the bias introduced by entropic regularization and finite sampling is controlled.


\begin{theorem}[Statistical performance]
\label{thm:statistical-performance}
Let $\widehat T_{\varepsilon,(n,n)}$ be the canonical
out-of-sample entropic map computed from $n$ iid observations
from each of $\mu$ and $\nu$, with the two samples independent.
Under Assumptions~\ref{ass:A7-geometry},
\ref{ass:A7-density}, and~\ref{ass:A7-monge},
there exist constants $C<\infty$ and $\varepsilon_0>0$
such that, for $n\ge2$ and $0<\varepsilon\le\varepsilon_0$,
\[
\begin{aligned}
&\mathbb E\int_\Omega
d_g^2\!\left(
  \widehat T_{\varepsilon,(n,n)}(x),T_0(x)
\right)\,d\mu(x) \le C\left[
  \varepsilon\log\!\left(\frac e\varepsilon\right)
  +\varepsilon^{-s_d}\frac{\log(n+1)}{\sqrt n}
\right],
\end{aligned}
\]
where $s_d=\max\{1,d/2\}$ and $d=\dim\mathcal M$.
The constants depend on the geometry and regularity
assumptions.
The proof is given in
Appendix~\ref{sec:app-statistical-performance}.
\end{theorem}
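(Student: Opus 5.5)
The plan is to split the error into a \emph{bias} term, from entropic regularization at the population level, and a \emph{variance} term, from replacing $(\mu,\nu)$ by empirical measures. Let $T_\varepsilon$ be the population Riemannian entropic map built from the optimal population potentials $(f_\varepsilon,g_\varepsilon)$ of \eqref{eq:dual_EOT} with $\mu,\nu$ in place of their empirical versions. By Theorem~\ref{thm:main}, $T_\varepsilon(x)=\exp_x(-\nabla f_\varepsilon(x))$. The triangle inequality for $d_g$ then gives
\[
\mathbb E\!\int d_g^2\big(\widehat T_{\varepsilon,(n,n)},T_0\big)\,d\mu \le 2\!\int d_g^2(T_\varepsilon,T_0)\,d\mu + 2\,\mathbb E\!\int d_g^2\big(\widehat T_{\varepsilon,(n,n)},T_\varepsilon\big)\,d\mu .
\]
I will work on the compact domain $\Omega$ supplied by Assumption~\ref{ass:A7-geometry}, where the relevant points stay away from the cut locus. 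On such a domain $\exp_x$ is bi-Lipschitz on a ball of bounded radius in $T_x\mathcal M$, and $\log_x y$ is smooth in $(x,y)$. This lets me transfer tangent-vector errors to geodesic errors up to constants: $d_g(\exp_x v,\exp_x w)\le C\|v-w\|_{g(x)}$.

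For the \emph{bias term}, I would adapt the Pooladian--Niles-Weed argument to the Riemannian setting. The key inequality is the strong-convexity-type estimate: under Assumption~\ref{ass:A7-monge}, the $c$-concave Brenier--McCann potential $\varphi_0$ has a transport map with bounded derivative, so $\varphi_0$ is "semiconvex with respect to $c$" near the optimal graph. The steps are as follows.
\begin{enumerate}[label=(\roman*)]
\item Write $\pi_\varepsilon$ in Gibbs form $\exp\big((f_\varepsilon\oplus g_\varepsilon - c)/\varepsilon\big)\,\mu\otimes\nu$.
\item Bound $\int\!\!\int \|\log_x y - \log_x T_0(x)\|^2\,d\pi_\varepsilon$ by a constant times the suboptimality gap $\int c\,d\pi_\varepsilon - \tfrac12W_2^2$, plus a term controlled by the entropy.
\item Use Jensen in $T_x\mathcal M$: $\|\int\log_x y\,d\pi_\varepsilon(y|x)-\log_x T_0(x)\|^2$ is at most the conditional second moment computed in (ii).
\item Bound the gap. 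Compare with a block/mollified version of the optimal plan $(\mathrm{id},T_0)_\#\mu$ spread at scale $\sqrt\varepsilon$ along the $d$-dimensional fibers. Its cost excess is $O(\varepsilon)$ and its entropy is $O(d\log(1/\varepsilon))$, which yields $\varepsilon\log(e/\varepsilon)$. The density bounds in Assumption~\ref{ass:A7-density} ensure the relative entropy of this competitor is controlled.
\end{enumerate}

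For the \emph{variance term}, the difference of tangent barycenters is
\[
\Big\|\sum_j \log_x(y_j)\,\hat\pi_\varepsilon(y_j|x) - \int \log_x y\,d\pi_\varepsilon(y|x)\Big\| .
\]
I would split this as (a) potential error, replacing $\hat g_\varepsilon$ by $g_\varepsilon$ in the Gibbs kernel, and (b) empirical-average error, replacing $\hat\nu$ by $\nu$ with the exact $g_\varepsilon$. Part (b) is a standard $n^{-1/2}$ fluctuation of bounded functions, uniform in $x$ through the smoothness of $\log_x y$ and of $c$ on compact $\Omega$. For (a), the Gibbs kernel is Lipschitz in $\hat g-g$ with constant $\varepsilon^{-1}$. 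I would then invoke the entropic-OT sample-complexity results for smooth costs on compact manifolds (Genevay et al.; Mena--Niles-Weed, Rigollet--Stromme), which give $\mathbb E\|\hat g_\varepsilon - g_\varepsilon\|_{L^2}^2\lesssim \varepsilon^{-d/2}\cdots n^{-1/2}$ with a logarithmic factor. The dual potentials are $C^k$ with derivative bounds $\varepsilon^{1-k}$, so the empirical-process entropy of the potential class scales like $\varepsilon^{-d/2}$. Combining these gives the rate $\varepsilon^{-s_d}\log(n+1)/\sqrt n$, with $s_d=\max\{1,d/2\}$: the $1$ comes from the $\varepsilon^{-1}$ Lipschitz factor, which dominates in low dimension.

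The main obstacle I expect is the bias step (ii). In $\mathbb R^d$ it relies on the identity $\tfrac12\|x-y\|^2 = \tfrac12\|x\|^2 - \langle x,y\rangle + \tfrac12\|y\|^2$ together with convexity of the Brenier potential. On $\mathcal M$ this must be replaced by a local quadratic lower bound
\[
c(x,y)-\varphi_0(x)-\varphi_0^c(y)\ \ge\ \kappa\, d_g^2\big(y,T_0(x)\big),
\]
which requires the regularity (Ma--Trudinger--Wang-type or a nondegeneracy condition) presumably encoded in Assumption~\ref{ass:A7-monge}. I would first try to derive it from a second-order Taylor expansion of $c(x,\cdot)$ around $T_0(x)$ inside the injectivity radius, using the Hessian bound on $\varphi_0^c$ and compactness to make $\kappa$ uniform.
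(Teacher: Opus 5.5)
\paragraph{Bias term.} This half of your plan is sound and coincides with the paper's Lemmas~\ref{lem:A7-gap-controls-map} and~\ref{lem:A7-entropic-bias}. The quadratic detachment you flag as the main obstacle is exactly what Assumption~\ref{ass:A7-monge} posits (eq.~\eqref{eq:A7-quadratic-detachment}), so no Taylor or MTW argument is needed. Step (ii) needs no extra entropy term: $\int D_x\,d\pi_\varepsilon=\int c\,d\pi_\varepsilon-\tfrac12W_2^2\le S_\varepsilon(\mu,\nu)-\tfrac12W_2^2$ because the KL term is nonnegative.

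\paragraph{The gap: the variance term.} Here the claimed rate does not follow from your ingredients.
\begin{itemize}
\item The Lipschitz-in-potential step, as you state it, needs uniform control of $\hat g-g_\varepsilon$ on the $\sqrt\varepsilon$-neighbourhood of $T_0(x)$ where $\pi_\varepsilon^x$ concentrates. An $L^2$ rate does not give this.
\item Even granting uniform control, squaring the $\varepsilon^{-1}$ Lipschitz factor costs $\varepsilon^{-2}$. Combined with the $\varepsilon^{-d/2}n^{-1/2}$ potential rate you quote, you get $\varepsilon^{-2-d/2}n^{-1/2}$, not $\varepsilon^{-s_d}n^{-1/2}$.
\item Part (b) is not a ``standard bounded-function fluctuation''. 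The density $e^{(f_\varepsilon(x)+g_\varepsilon(y)-c(x,y))/\varepsilon}$ is unbounded as $\varepsilon\to0$; it is of order $\varepsilon^{-d/2}$ where $\pi_\varepsilon^x$ concentrates.
\item The self-normalizing denominator $\frac1n\sum_k e^{(f_\varepsilon(x)+g_\varepsilon(Y_k)-c(x,Y_k))/\varepsilon}$ can degenerate when $n\varepsilon^{d/2}$ is small. So the $\varepsilon$-dependence, which is exactly where $d$ enters, is never controlled.
\end{itemize}

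\paragraph{How the paper avoids this.} It never introduces the population map $T_\varepsilon$ and never compares potentials pointwise. Its intermediate is the one-sample map $T_{\varepsilon,n}$ built from $\pi_{\varepsilon,n}\in\Pi(\mu,\nu_n)$, which shares the target sample with $\widehat T_{\varepsilon,(n,n)}$.
\begin{itemize}
\item \emph{Target sampling} is absorbed into the gap bound $\int D_x\,d\pi_{\varepsilon,n}\le S_\varepsilon(\mu,\nu_n)-\int\varphi_0\,d\mu-\int\varphi_0^c\,d\nu_n$. Since $\int\varphi_0^c\,d\nu_n$ is unbiased, its expectation is $\mathbb E\,S_\varepsilon(\mu,\nu_n)-\tfrac12W_2^2(\mu,\nu)$. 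Only the scalar deviation $\mathbb E|S_\varepsilon(\mu,\nu_n)-S_\varepsilon(\mu,\nu)|$ is then needed (Corollary~\ref{cor:A7-cost-deviation}).
\item \emph{Source sampling} is handled by Proposition~\ref{prop:A7-source-stability}. The modified entropic duality with test functions $\chi_h$, plus Hoeffding's lemma, gives $\frac1{4a}\int\|b_{\varepsilon,n}-\widehat b_{\varepsilon,(n,n)}\|_g^2\,d\mu\le\varepsilon^{-1}\int(f_{\varepsilon,n}-\widehat f)\,d(\mu-\mu_n)$. The squared error thus pays $\varepsilon^{-1}$ only linearly.
\end{itemize}
Both steps rest on one empirical-process bound over entropic $(c,\varepsilon)$-transforms, whose $C^{s_d}$ norms are $\lesssim 1+\varepsilon^{1-s_d}$ (entropy exponent $d/s_d\le 2$). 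The product $\varepsilon^{-1}(1+\varepsilon^{1-s_d})\lesssim\varepsilon^{-s_d}$ is where the maximum in $s_d$ actually comes from, not $\varepsilon^{-2}$ times $\varepsilon^{-d/2}$.
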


\subsection{Algorithm and Implementation}
\label{sec:algos}

\begin{figure*}[t]
\centering
\begin{minipage}{\linewidth}
\begin{algorithm}[H]
\small
\setlength{\belowcaptionskip}{-10pt}
   \caption{RWEFM Training Step}
   \label{alg:training}
\begin{algorithmic}
   \STATE {\bfseries Input:} $\mathbb{P}_0, \mathbb{P}_1$
   \STATE Sample $\mu\sim\mathbb{P}_0, \nu \sim \mathbb{P}_1$ \\ Sample $\{x_i\}_{i=1}^N \sim \mu, \{y_j\}_{j=1}^M \sim \nu$
   \STATE Compute $\pi$ via Sinkhorn$(C, \varepsilon)$, 
   Sample $t \sim \mathcal{U}[0,1]$
   \FOR{each $x_i$}
        \STATE $w_{ij} = \pi_{ij}/\sum_k\pi_{ik}$
        \STATE $T_\varepsilon(x_i) =
\exp_{x_i}\!\left(\sum_j w_{ij}\log_{x_i}(y_j)\right)$
       \STATE $z_i = \exp_{x_i}(t \log_{x_i}(T_\varepsilon(x_i)))$\\ $v_i = \log_{z_i}(T_\varepsilon(x_i)) / (1-t)$
       \STATE $\hat{v}_i = v_\theta(z_i, t, \{z_{i}\}_{i=1}^N)$
   \ENDFOR
  \STATE $\mathcal{L} = \frac{1}{N} \sum_i \| \hat{v}_i - v_i \|_{g(z_i)}^2$
  \STATE Update $\theta \leftarrow \theta - \eta \nabla_\theta \mathcal{L}$
\end{algorithmic}
\end{algorithm}
\begin{algorithm}[H]
\small
\setlength{\belowcaptionskip}{-10pt}
   \caption{RWEFM Generation}
   \label{alg:generation}
\begin{algorithmic}
   \STATE {\bfseries Input:} $\mu \sim \mathbb{P}_0$, discretization step $dt$
   \STATE Sample $X_0 = \{x_i\}_{i=1}^N \sim \mu$
   \FOR{$t \gets 0$ \textbf{to} $1-dt$ \textbf{step} $dt$}
       \STATE Compute $v_i = v_\theta(x_i, t, \{x_i\}_{i=1}^N)$
       \FOR{each $x_i$}
           \STATE $x_i \leftarrow \exp_{x_i}(v_i \cdot dt)$
       \ENDFOR
   \ENDFOR
   \STATE {\bfseries Return:} $X_1 = \{x_i\}_{i=1}^N$
\end{algorithmic}
\end{algorithm}
\end{minipage}
\end{figure*}

Here we detail how RWEFM is implemented in practice. The training algorithm is presented in Algorithm~\ref{alg:training}. At each training step, we sample distributions $\mu$ and $\nu$ from the meta-distributions $\mathbb{P}_0$ and $\mathbb{P}_1$, represented as point clouds $X_{0} = \{x_i\}_{i=1}^N, x_{i}\sim \mu$ and $X_{1} = \{y_j\}_{j=1}^M, y_{j}\sim \nu$. We compute the Riemannian entropic map (Eq~\eqref{eq:riemannian_map}) to generate an interpolating point cloud $X_t = \{z_i\}_{i=1}^N$ with $z_i = \exp_{x_i}(t \log_{x_i}(T_\varepsilon(x_i)))$ (Eq~\eqref{eq:prob_interpolant}) and target velocity vectors $V_t =  \{\frac{1}{1-t}\log_{z_i}(T_\varepsilon(x_i))\}_{i=1}^N$ (Eq~\eqref{eq:cond_vector_field}). Since the data is set-structured, we parameterize the neural network $v_\theta(X_{t},t) : \mathcal{M}^N \to T_{X_t}\mathcal{M}^N$ using self-attention architectures, mapping point sets to tangent vectors. The loss is defined as the mean squared geodesic norm between the network prediction $\hat{v}_i$ and the target $v_i$ (Eq~\eqref{eq:rwefm_loss}). For generation (Algorithm~\ref{alg:generation}), we start from $X_0 \sim \mu \sim \mathbb{P}_0$ and integrate $v_\theta$ over time to obtain the final sample $X_1$.


Appendix \ref{appendix:hyperparameters} details hyperparameters and architecture. RWEFM typically trains in a few hours on a single GPU. For large datasets like the single-cell atlas, we trained with 1024 cells sampled from each distribution, still achieving high-quality results in less than 24 hours of training. The size of the point clouds and the entropic regularization $\varepsilon$ are the main hyper-parameters impacting computational complexity, with per-dataset training times for all methods reported in \cref{tab:timing}. Training time can be meaningfully reduced by increasing $\varepsilon$ or decreasing the number of sampled particles, at a marginal cost to generation quality, as studied in detail in Section~\ref{sec:tradeoff}. 




\newpage
\section{Results}
\label{sec:results}

\subsection{The Riemannian Entropic Map is a faithful estimator of the Monge map on $\mathcal{M}$}

We begin by qualitatively demonstrating the effectiveness of the Riemannian entropic map. \Cref{fig:entropic_map_sphere_oos} illustrates the transport error on the sphere $\mathbb{S}^2$, comparing our Riemannian estimator against a Euclidean baseline. By strictly adhering to the underlying geometry, the Riemannian entropic map yields a significantly more accurate coupling, whereas the Euclidean approach incurs high distortion as it ignores the manifold curvature. We provide quantitative evaluation of its performance on constructed examples on the sphere $\mathbb{S}^2$ and hyperbolic space $\mathbb{H}^2$ in Appendix~\ref{app:benchmark_entropic}. The ground truth OT map is generated by taking the exponential map of the gradient of a distance function between a point $x\in\mathcal{M}$ and a fixed attractor $\bar{x}\in\mathcal{M}$. Our results confirm that our estimator is as accurate as the unregularized OT map (Eq~\eqref{eq:monge}), albeit much faster, and outperforms the Euclidean entropic map, which neglects the underlying geometry (\cref{fig:entropic_map_benchmark}).

\begin{figure}[t]
\centering
\includegraphics[width=\linewidth]{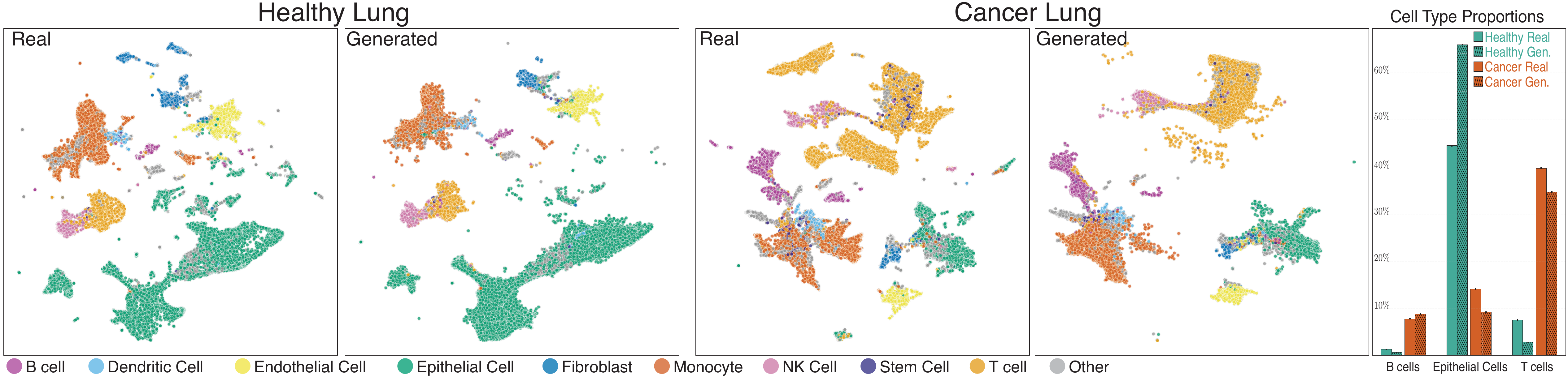}
\caption{\textbf{Generative Modeling of Single-Cell Lung Samples.} UMAP visualization of real and generated scRNA-seq samples from healthy and cancer lung tissues. The generated samples demonstrate high congruence with the real distributions. Furthermore, operating in the foundation model's latent space enables direct analysis of generated data, such as accurate cell typing, highlighting the utility of generating in the (non-Euclidean) latent spaces of foundation models.}
\label{fig:pascientflow}
\vspace{-4mm}
\end{figure}



\subsection{Benchmarking RWEFM on synthetic and real-world scientific datasets}

\subsubsection{Baselines and Evaluation Metrics}

We benchmark RWEFM against several baselines. \textbf{FM}~\citep{lipman2022flow} and \textbf{RFM}~\citep{chen2023flow} operate on individual data points. \textbf{WFM}~\citep{haviv2024wasserstein,atanackovic2024meta} learns flows on the Wasserstein space but is restricted to Euclidean domains. We also introduce \textbf{SetFM}/\textbf{SetRFM}, which apply the FM objective to point clouds without OT couplings. For point cloud data, we include \textbf{PVD}~\citep{zhou20213d} and \textbf{PSF}~\citep{wu2023fast}, trained in Euclidean space and projected onto the manifolds. All set-level methods share the same self-attention architecture, FM/RFM use an MLP and we project generated samples from non-Riemannian methods onto the manifold for fair comparison. We evaluate model performance using classwise 1-NN deviation (1-NN-D) and MMD, with Chamfer Distance (CD) and Earth Mover's Distance (EMD) between point clouds (Appendix~\ref{sec:benchmark-metrics}). For equally sized sets of real and generated point clouds, let $a_{\mathrm{real}}$ and $a_{\mathrm{gen}}$ be their respective classwise nearest-neighbor classification accuracies. We report $\text{1-NN-D}:=\frac12\left(
\left|a_{\mathrm{real}}-\frac12\right|
+\left|a_{\mathrm{gen}}-\frac12\right|
\right)$ This score lies in $[0,0.5]$ and is minimized when both classwise accuracies equal $0.5$. Taking the absolute deviations before averaging prevents opposite classwise deviations from canceling.


\subsubsection{MNIST, EMNIST, and KMNIST on the Sphere, Hyperbolic Space, and Torus}

We evaluate RWEFM on MNIST \citep{lecun1998gradient}, EMNIST \citep{cohen2017emnist}, and KMNIST \citep{kmnist2018} datasets mapped onto Riemannian manifolds: MNIST digits on the sphere $\mathbb{S}^2$, EMNIST letters on hyperbolic space $\mathbb{H}^2$, and KMNIST characters on the torus $\mathbb{T}^2$. RWEFM learns high-quality flows and outperforms baselines in all settings (Figure~\ref{fig:mnist_emnist_figures}, Tables~\ref{tab:nna_mnist_emnist},\ref{tab:mmd_mnist_emnist}). Notably, on the torus $\mathbb{T}^2$, WFM and RWEFM achieve comparable performance. This is expected as $\mathbb{T}^2$ is a flat manifold with zero curvature and the geometric advantage of RWEFM's Riemannian OT over WFM's Euclidean OT diminishes accordingly. Unsurprisingly, FM/RFM perform worst as they are unable to capture the dependencies between individual points required to generate meaningful point clouds.

\suppressfloats[t]
\begin{figure}[t]
\centering
\includegraphics[width=\linewidth]{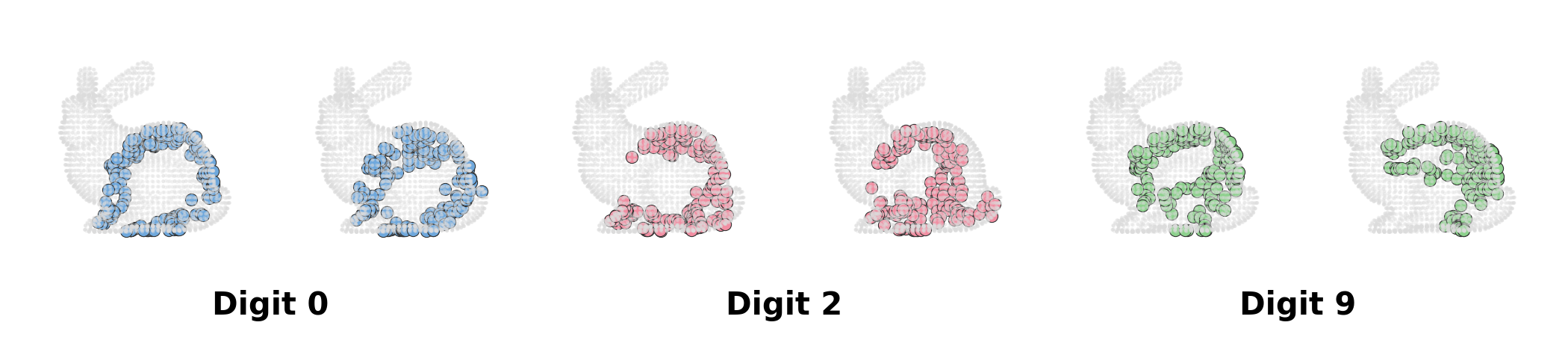}
\caption{\textbf{Distributions generated on the Stanford bunny.} RWEFM-generated empirical distributions on a general triangulated mesh with no closed-form geometry, showing two samples per digit class (0, 2, 9). Generated points lie on the curved surface and recover the digit morphology, illustrating RWEFM on arbitrary geometries (\cref{tab:bunny_mnist}).}
\label{fig:bunny_mnist}
\vspace{-4mm}
\end{figure}

\subsubsection{Beyond closed-form manifolds: distributions on a general triangulated mesh}
\label{sec:bunny}

The manifolds above admit closed-form geodesics, exponential and logarithm maps. Many scientific geometries---triangulated surfaces, learned metrics, implicit surfaces---do not.

RWEFM extends naturally to this setting: both the McCann interpolant and the Riemannian entropic map (Section~\ref{sec:REM}) can be evaluated from only a geodesic distance $d_g$ and a projection operator onto the manifold, without analytic $\exp$/$\log$ maps (Appendix~\ref{app:entropic_no_exp_no_log}).

As a proof of concept, we generate distributions on the surface of the Stanford bunny, a triangulated mesh with no analytic geometry, using MNIST digits as the empirical distributions. We endow the mesh with the spectral (biharmonic) premetric of \citet{chen2023flow}, computed from its smallest Laplace--Beltrami eigenpairs, and lay each digit onto a fixed tangent chart on the bunny's flank (Appendix~\ref{app:bunny_mesh}). \Cref{fig:bunny_mnist} shows point clouds generated by RWEFM for three digit classes and the samples lie on the curved surface and recover the digit morphology.

\begin{wraptable}{R}{0.5\textwidth}
\centering
\resizebox{\linewidth}{!}{%
\begin{tabular}{lcccccc}
\toprule
& \multicolumn{6}{c}{\textbf{Manifold: } $\mathcal{M}_{\mathrm{mesh}}$} \\
\cmidrule(lr){2-7}
& \multicolumn{2}{c}{Digit 0} & \multicolumn{2}{c}{Digit 2} & \multicolumn{2}{c}{Digit 9} \\
\cmidrule(lr){2-3}\cmidrule(lr){4-5}\cmidrule(lr){6-7}
Method & CD & EMD & CD & EMD & CD & EMD \\
\midrule
SetFM  & 0.50 & 0.50 & 0.50 & 0.50 & 0.50 & 0.50 \\
WFM    & 0.50 & 0.50 & 0.40 & 0.42 & 0.50 & 0.50 \\
\addlinespace
SetRFM & \textbf{0.30} & \textbf{0.26} & 0.29 & 0.23 & 0.37 & 0.24 \\
RWEFM  & \textbf{0.30} & 0.28 & \textbf{0.26} & \textbf{0.21} & \textbf{0.29} & \textbf{0.21} \\
\bottomrule
\end{tabular}}
\caption{\textbf{RWEFM on a general triangulated mesh (Stanford bunny).} Here $\mathcal{M}_{\mathrm{mesh}}$ denotes the Stanford bunny surface represented by a triangulated mesh. Classwise 1-NN deviation (1-NN-D; lower is better) under the mesh spectral metric for MNIST digits generated on the bunny (best per column bold; 3 seeds $\times$ 5 samplings). RWEFM and WFM use the sampled map (Appendix~\ref{app:bunny_mesh}); MMD in \cref{tab:bunny_mmd}.}
\label{tab:bunny_mnist}
\vspace{-4mm}
\end{wraptable}

Quantitatively (\cref{tab:bunny_mnist}), the mesh-aware methods (RWEFM, SetRFM) sharply outperform their Euclidean counterparts (WFM, SetFM), which ignore the surface and generally attain or approach the maximum classwise 1-NN deviation ($D_{\mathrm{1NN}}=0.5$), indicating large departures from 50\% classwise accuracy. Respecting the intrinsic geometry therefore remains decisive even when the manifold has no closed-form description, demonstrating that RWEFM applies to arbitrary geometries.

\subsubsection{Single-cell RNA-seq samples on Riemannian spaces}

In single-cell genomics, early generative models focused on synthesizing individual cells. A growing body of work now targets a fundamentally harder task: generating \emph{whole samples}---each sample being an empirical distribution of thousands of cells representing a patient or tissue. This sample-level perspective is essential for capturing inter-sample heterogeneity in disease modeling and perturbation studies \citep{boyeau2025deep,boiarsky2025diffusion,haviv2024wasserstein,atanackovic2024meta,klein2025cellflow}. Concretely, each training example is a point cloud $\mu = \{x_i\}_{i=1}^N$ of cell embeddings for one patient sample, and the goal is to generate new point clouds that faithfully reproduce the distribution of real samples.

\begin{table}[h]
\centering
\resizebox{0.6\linewidth}{!}{%
\begin{tabular}{lccccc}
\toprule
& \multicolumn{2}{c}{\textbf{Manifold: } $\mathbb{S}^{128-1}$} & \multicolumn{3}{c}{\textbf{Manifold: } $\mathbb{T}^2$} \\
\cmidrule(lr){2-3} \cmidrule(lr){4-6}
Method & 1-NN-D (CD) & MMD (EMD) & $W_2$ & $W_1$ & MMD \\
\midrule
SetFM  & 0.4460 & 0.1026 & 0.4472 & 0.4097 & 0.0114 \\
WFM    & 0.2945 & 0.0324 & 0.4059 & 0.4023 & 0.0120 \\
\addlinespace
SetRFM & 0.3754 & 0.0300 & 0.4608 & 0.4126 & 0.0119 \\
RWEFM  & \textbf{0.1889} & \textbf{0.0140} & \textbf{0.3970} & \textbf{0.3850} & \textbf{0.0097} \\
\bottomrule
\end{tabular}}
\caption{\textbf{Generation of cells and torsion angles on manifolds.} (Left) whole single-cell RNAseq samples on $\mathbb{S}^{128-1}$ (SCimilarity embeddings); (right) per-protein torsion-angle distributions on $\mathbb{T}^2$ (ESM-conditioned). Extended metrics in \cref{tab:scrnaseq_cd}; times in \cref{tab:timing}.}
\label{tab:scrnaseq_ot}
\label{tab:torsion_generation}
\vspace{-4mm}
\end{table}

A further challenge is that cell embeddings from foundation models such as SCimilarity \citep{heimberg2025cell} naturally live on non-Euclidean manifolds such as hyperspheres ($\mathbb{S}^{d-1}$) or hyperbolic spaces \citep{ding2021deep}, so naive Euclidean generation distorts the learned geometry. RWEFM directly addresses both challenges: it generates \emph{distributions} (whole samples) on the \emph{manifold} by learning a flow on the Wasserstein space $\mathcal{P}_2(\mathbb{S}^{128-1})$. We evaluate on generating whole single-cell blood samples in SCimilarity's hyperspherical embedding space and find that RWEFM substantially outperforms Euclidean and non-distributional baselines (Table~\ref{tab:scrnaseq_ot}, Figures~\ref{fig:pascientflow},\ref{fig:pascientflow_indv}). The generated samples accurately recapitulate real cell-type compositions and enable direct cell-typing analysis, demonstrating the utility of respecting the geometry of foundation model latent spaces.

\subsubsection{Protein torsion angle distributions on the torus}

\begin{figure}[h]
\centering
\includegraphics[width=\linewidth]{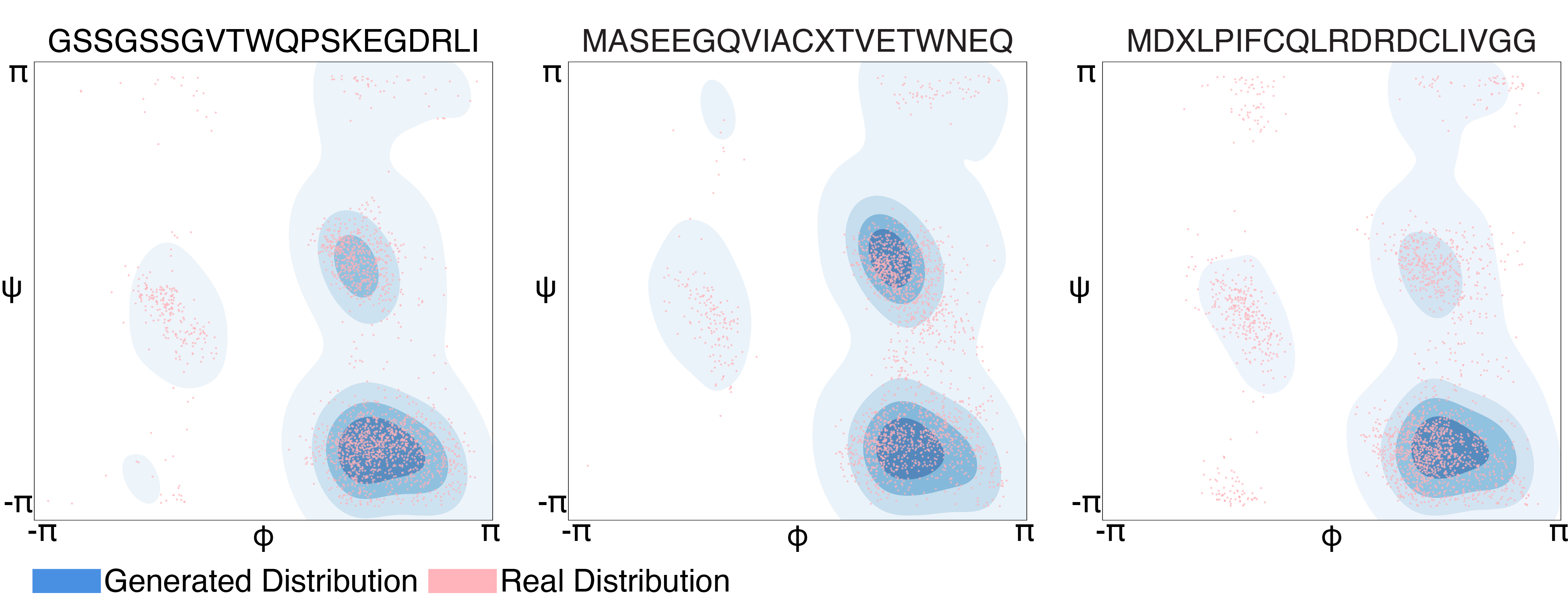}
\caption{\textbf{Torsion angle generation on the torus $\mathbb{T}^2$.} True and generated distributions of protein backbone torsion angles ($\phi$, $\psi$) for held-out mdCATH proteins, conditioned on ESM embeddings of their sequences (first $20$ amino acids shown in subplot titles). RWEFM accurately captures the multimodal structure of conformational distributions.}
\label{fig:torus_torsion}
\vspace{-4mm}
\end{figure}

Protein backbone conformations can be characterized by dihedral angles $\phi$ and $\psi$. The joint angular distribution is depicted using the Ramachandran plot, which reveals energetically favorable conformations and structural motifs such as $\alpha$-helices and $\beta$-sheets. Each protein is thus characterized by a \emph{distribution} of torsion angles. Since both angles are periodic, this distribution lives on the flat torus $\mathbb{T}^2$, a common setting for Riemannian generative models \citep{chen2023flow, davis2025generalised}.

We apply RWEFM to generate per-protein torsion angle distributions from mdCATH~\citep{mirarchi2024mdcath}. For each protein, we aggregate the $(\phi, \psi)$ angles across all residues during MD simulation, producing an empirical distribution over $\mathbb{T}^2$. Unlike prior work that generates individual angle pairs, we generate entire distributions conditioned on ESM embeddings~\citep{hayes2025simulating} of the protein sequence. On unseen test-set proteins, we find that RWEFM accurately captures the complex, multimodal structure of torsion angle distributions (Figure~\ref{fig:torus_torsion}). Quantitatively, RWEFM outperforms Euclidean and non-distributional baselines (Table~\ref{tab:torsion_generation}), demonstrating the benefits of integrating Riemannian geometry with flow matching on $\pspace$ for modeling protein conformational landscapes.

\section{Discussion}
We have introduced Riemannian Wasserstein Entropic Flow Matching (RWEFM), a novel framework for generative modeling of distributions on Riemannian manifolds. We showed that our framework is theoretically sound, computationally tractable, and empirically outperforms  previous approaches that either fail to capture the distributional or geometric aspects of the data. Our proposed Riemannian entropic map is an essential part of our approach, providing highly scalable and accurate estimation of the OT map. Our empirical benchmark spans multiple scientific applications, including single-cell genomics and protein conformation, highlighting the diversity and abundance of scientific problems that could benefit from explicitly incorporating the geometry and distributional nature of the data.

\newpage
\bibliographystyle{plainnat}
\bibliography{ref}


\newpage
\appendix
\onecolumn
\setcounter{figure}{0}
\setcounter{table}{0}
\renewcommand{\thefigure}{S\arabic{figure}}
\renewcommand{\thetable}{S\arabic{table}}

\etocdepthtag.toc{appendix}
\tableofcontents
\clearpage

\let\appendixaddcontentsline\addcontentsline
\renewcommand{\addcontentsline}[3]{%
  \def\contentsfile{#1}\def\tocfile{toc}%
  \ifx\contentsfile\tocfile\phantomsection\fi
  \appendixaddcontentsline{#1}{#2}{#3}}

\section{Entropic Optimal Transport in Riemannian spaces}
\label{app:EOT}

Recovering the optimal transport map $T_0$ between a source distribution $\mu$ and a target distribution $\nu$ is a central task in applied optimal transport. However, closed-form solutions for $T_0$ are exceedingly rare, typically existing only for univariate distributions or Gaussians in Euclidean space. Consequently, in the general setting of Riemannian manifolds, we must resort to statistical estimation from finite samples.

To estimate the map efficiently, we leverage Entropic Optimal Transport (EOT). Unlike unregularized OT, which requires cubic-time combinatorial solvers (e.g., the Hungarian algorithm), EOT can be solved using Sinkhorn's algorithm. On a Riemannian manifold, this simply requires computing the geodesic distance matrix between samples. Once the entropic coupling is obtained, we extend the estimator proposed by \cite{pooladian2021entropic} to the Riemannian setting.

\subsection{Problem Formulation}

Let $(\mathcal{M}, g)$ be a complete Riemannian manifold without boundary, and let $d(x,y)$ denote the geodesic distance. The classical Monge problem seeks a map $T$ minimizing the transport cost:
\begin{equation*}
    \inf_{T \in \mathcal{T}(\mu,\nu)} \int_{\mathcal{M}} \frac{1}{2}d^2(x, T(x)) d\mu(x).
\end{equation*}

The Brenier-McCann theorem shows that this problem is equivalent to 

\begin{equation}
    \sup_{\phi \in L^1(P)} \int_{\mathcal{M}} \phi(x) d\mu(x) + \int_{\mathcal{M}} \phi^c(x) d\nu(x).
\label{eq:brenier_mccann}
\end{equation}

where the c-transform is defined as 

\begin{equation*}
    \phi^c(y) =  \inf_{x\in \mathcal{M}} \{\frac{1}{2}d^2(x,y)-\phi(x) \}.
\end{equation*}

The optimal transport map is then given by $T_0(x) = \exp_x(-\nabla\phi_{0}(x))$ with $\phi_0$ being the maximizer of equation~\eqref{eq:brenier_mccann} and the optimal plan is concentrated where $\phi_{0}(x) + \phi_{0}^c(y) = \frac{1}{2}d^2(x,y)$.

The relaxation to the Kantorovich problem defines the 2-Wasserstein distance:
\begin{equation*}
    \frac{1}{2}W_2^2(\mu,\nu) := \min_{\pi \in \Pi(\mu,\nu)} \int_{\mathcal{M} \times \mathcal{M}} \frac{1}{2}d^2(x,y) d\pi(x,y),
\end{equation*}
where $\Pi(\mu,\nu)$ is the set of couplings with marginals $\mu$ and $\nu$.
For a regularization parameter $\varepsilon > 0$, the Entropic Optimal Transport objective is:
\begin{equation}
    S_\varepsilon(\mu,\nu) := \inf_{\pi \in \Pi(\mu,\nu)} \int_{\mathcal{M} \times \mathcal{M}} \frac{1}{2}d^2(x,y) d\pi(x,y) + \varepsilon D_{KL}(\pi \| \mu \otimes \nu).
\label{eq:entropicOT}
\end{equation}
The unique solution $\pi_\varepsilon$ has the form $d\pi_\varepsilon(x,y) = e^{(f(x) + g(y) - \frac{1}{2}d^2(x,y))/\varepsilon} d\mu(x)d\nu(y)$, where the potentials $f, g \in C(\mathcal{M})$ solve the dual problem:
\begin{equation*}
    \sup_{f, g} \int f d\mu + \int g d\nu - \varepsilon \int_{\mathcal{M} \times \mathcal{M}} e^{\frac{f(x) + g(y) - \frac{1}{2}d^2(x,y)}{\varepsilon}} d\mu(x) d\nu(y) + \varepsilon.
\end{equation*}

\paragraph{Schr\"odinger bridge} The Schr\"odinger bridge problem is closely linked to the Entropic Optimal Transport problem. Let $(X_t)_{t\in[0,1]}$ denote the Brownian motion on $(\mathcal{M},g)$ with generator $\frac{\varepsilon}{2}\Delta_g$, and let $R_\varepsilon$ be its law on path space $\Omega := C([0,1],\mathcal{M})$. Denote by $R_\varepsilon^{01}$ the joint law of the endpoints $(X_0,X_1)$ under $R_\varepsilon$; it admits a density with respect to the product of volume measures given by the heat kernel:
\begin{equation*}
    dR_\varepsilon^{01}(x,y) = p_\varepsilon(1,x,y)\, \dvolm(x)\, \dvolm(y),
\end{equation*}
where $p_\varepsilon(t,x,y)$ solves $\partial_t p_\varepsilon = \frac{\varepsilon}{2}\Delta_g p_\varepsilon$.

The Schr\"odinger bridge problem seeks a coupling $\pi$ that minimizes 
\begin{equation*}
    C_{\varepsilon}(\mu,\nu)
    = \inf_{\pi\in\Pi(\mu,\nu)} D_{\mathrm{KL}}(\pi \,\|\, R_\varepsilon^{01}).
\end{equation*}

\subsection{The Riemannian Entropic Map}

In the Euclidean setting, the entropic map is defined as the barycentric projection $T_\varepsilon(x) = \mathbb{E}_{\pi_\varepsilon}[Y|X=x]$. On a manifold, the direct expectation is not well-defined. Instead, we define the map via the Riemannian exponential map and the conditional expectation in the tangent space.

\begin{definition}[Riemannian Entropic Map]
    Let $(f_\varepsilon, g_\varepsilon)$ be the optimal entropic potentials. We define the Riemannian entropic map $T_\varepsilon: \mathcal{M} \to \mathcal{M}$ as:
    \begin{equation} \label{eq:riemannian_map_app}
        T_\varepsilon(x) := \exp_x\left( \int_{\mathcal{M}} \log_x(y) d\pi_\varepsilon^x(y) \right),
    \end{equation}

where $\log_x = \exp_x^{-1}$ is the Riemannian logarithm and $\pi_\varepsilon^x$ is the conditional distribution of $Y$ given $X=x$ under the optimal plan. With some abuse of notation, $\log_x$ and $\exp_x$ are the (Riemannian) logarithm and exponential maps at point $x$. When no subscript is given, $\log$ and $\exp$ denote the natural logarithm and exponential functions.

\end{definition}

This definition implies a straightforward computational procedure. To evaluate the map at a source point $x$, we rely on the optimal entropic coupling $\pi_\varepsilon$. The procedure involves three steps:
\begin{enumerate}
    \item \textbf{Lift to Tangent Space:} For the fixed source point $x$, we compute the Riemannian logarithm $\log_x(y)$ for every target point $y$ in the support of $\nu$. This maps the geometry from the manifold $\mathcal{M}$ onto the vector space $T_x\mathcal{M}$.
    \item \textbf{Euclidean Averaging:} We compute the weighted average of these tangent vectors, where the weights correspond to the conditional probability mass $\pi_\varepsilon(y|x)$. Because $T_x\mathcal{M}$ is a linear space, this is a simple Euclidean average.
    \item \textbf{Retract to Manifold:} We apply the exponential map $\exp_x$ to this average vector to project the result back onto the manifold, yielding the estimated transport location.
\end{enumerate}

We now establish the relationship between this map and the dual potential gradient. 

\begin{theorem}
    Let $(f_\varepsilon, g_\varepsilon)$ be optimal entropic potentials. The primal feasibility of $\pi_\varepsilon$ requires that its first marginal is $\mu$. In terms of the potentials, this constraint implies:
    \begin{equation} \label{eq:dual_constraint}
        \int_{\mathcal{M}} e^{\frac{f_\varepsilon(x) + g_\varepsilon(y) - \frac{1}{2}d^2(x,y)}{\varepsilon}} d\nu(y) = 1, \quad \forall x \in \text{supp}(\mu).
    \end{equation}
    Under this condition, the Riemannian entropic map satisfies:
    \begin{equation*}
        T_\varepsilon(x) = \exp_x\left( -\nabla f_\varepsilon(x) \right).
    \end{equation*}
\end{theorem}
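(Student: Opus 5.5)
The plan is the Riemannian analogue of the computation of \citet{pooladian2021entropic}: differentiate the marginal constraint \eqref{eq:dual_constraint} in $x$ and use the identity $\nabla_x \tfrac12 d^2(x,y) = -\log_x(y)$.

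First, define $F(x) := \int_{\mathcal{M}} e^{(f_\varepsilon(x)+g_\varepsilon(y)-\frac12 d^2(x,y))/\varepsilon}\,d\nu(y)$. By \eqref{eq:dual_constraint}, $F\equiv 1$ on $\mathrm{supp}(\mu)$. Using the Schr\"odinger system, $f_\varepsilon$ extends canonically to all of $\mathcal{M}$ via
\[
f_\varepsilon(x) = -\varepsilon \log \int e^{(g_\varepsilon(y)-\frac12 d^2(x,y))/\varepsilon}\,d\nu(y),
\]
so that $F\equiv 1$ holds identically. This sidesteps the issue that $\mathrm{supp}(\mu)$ might have empty interior.

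Second, differentiate under the integral sign. I will justify this by assuming $\nu$ is compactly supported (or has enough moments), so that dominated convergence applies. I also need $y\mapsto \tfrac12 d^2(x,y)$ to be differentiable in $x$ for $\nu$-a.e.\ $y$, which holds outside the cut locus $\mathrm{Cut}(x)$. I assume $\nu(\mathrm{Cut}(x))=0$, which is automatic if $\nu$ is absolutely continuous, since the cut locus has volume zero. For empirical $\nu$ this holds for a.e.\ $x$. On $\mathcal{M}\setminus \mathrm{Cut}(x)$ the gradient is $\nabla_x \tfrac12 d^2(x,y) = -\log_x(y)$. Differentiating $F\equiv 1$ then gives
\[
0=\nabla F(x)=\frac1\varepsilon\int \bigl(\nabla f_\varepsilon(x) + \log_x(y)\bigr)\, e^{(f_\varepsilon(x)+g_\varepsilon(y)-\frac12 d^2(x,y))/\varepsilon}\,d\nu(y).
\]

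Third, identify the conditional law. By the form of $\pi_\varepsilon$, $d\pi_\varepsilon^x(y) = e^{(f_\varepsilon(x)+g_\varepsilon(y)-\frac12 d^2(x,y))/\varepsilon}\,d\nu(y)$, which is a probability measure precisely because of \eqref{eq:dual_constraint}. The equation above therefore reads
\[
\nabla f_\varepsilon(x) = -\int \log_x(y)\,d\pi_\varepsilon^x(y).
\]
Applying $\exp_x$ and comparing with \eqref{eq:riemannian_map_app} yields $T_\varepsilon(x)=\exp_x(-\nabla f_\varepsilon(x))$. The Euclidean remark follows because $\log_x(y)=y-x$ and $\exp_x(v)=x+v$.

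The main obstacle is the second step: differentiability of $d^2$ fails on the cut locus, and the differentiation of the integral must be justified there. The fix I will try first is to restrict to $x$ with $\nu(\mathrm{Cut}(x))=0$ and to use the local semiconcavity and Lipschitz bound of $\tfrac12 d^2(\cdot,y)$ on compact sets. Together these give a dominating function and make $f_\varepsilon$ differentiable (in fact locally semiconcave) at such $x$. The identity then holds $\mu$-a.e., which suffices.
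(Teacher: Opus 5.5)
Your proposal is correct and follows essentially the same route as the paper. The paper takes the logarithm of the constraint to write $f_\varepsilon$ as the soft $(c,\varepsilon)$-transform of $g_\varepsilon$, differentiates that expression using $\nabla_x \tfrac12 d^2(x,y) = -\log_x(y)$ (equivalent to your differentiation of $F\equiv 1$), and identifies the normalized Gibbs weights with $\pi_\varepsilon^x$. Your handling of the extension off $\mathrm{supp}(\mu)$, of the cut locus, and of differentiation under the integral adds rigor that the paper omits, since it simply assumes $\log_x$ is single-valued and smooth.
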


\begin{proof}
    We assume the potentials satisfy \eqref{eq:dual_constraint}. Taking the logarithm, we isolate $f_\varepsilon(x)$:
    \begin{equation*}
        f_\varepsilon(x) = -\varepsilon \log \int_{\mathcal{M}} \exp\left( \frac{g_\varepsilon(y) - \frac{1}{2}d^2(x,y)}{\varepsilon} \right) d\nu(y).
    \end{equation*}
    Let $h(x,y) = \exp\left( \frac{g_\varepsilon(y) - \frac{1}{2}d^2(x,y)}{\varepsilon} \right)$. We compute the Riemannian gradient $\nabla f_\varepsilon(x)$:
    \begin{equation*}
        \nabla f_\varepsilon(x) = -\varepsilon \frac{\nabla_x \left( \int_{\mathcal{M}} h(x,y) d\nu(y) \right)}{\int_{\mathcal{M}} h(x,y) d\nu(y)}.
    \end{equation*}
    Using the identity $\nabla_x (\frac{1}{2}d^2(x,y)) = -\log_x(y)$, the gradient of the integrand is:
    \begin{equation*}
        \nabla_x h(x,y) = \frac{1}{\varepsilon} h(x,y) \log_x(y).
    \end{equation*}
    Substituting this back, and identifying the conditional density $d\pi_\varepsilon^x(y) = \frac{h(x,y)d\nu(y)}{\int h(x,z)d\nu(z)}$, we obtain:
    \begin{equation*}
        \nabla f_\varepsilon(x) = - \int_{\mathcal{M}} \log_x(y) d\pi_\varepsilon^x(y).
    \end{equation*}
    The integral term is exactly the argument of the exponential map in our definition of $T_\varepsilon$. Thus, $T_\varepsilon(x) = \exp_x(-\nabla f_\varepsilon(x))$.
\end{proof}

\subsection{Properties of the Estimator}

\paragraph{Connection to the Fréchet Mean.}
The notion of a ``center of mass'' on a Riemannian manifold is formalized by the Fréchet mean. Given the conditional distribution $\pi_\varepsilon^x$ of the target $Y$ given $X=x$, the true barycentric projection is the point $m^*$ that minimizes the expected squared distance:
\begin{equation*}
    m^* = \arg\min_{z \in \mathcal{M}} F(z), \quad \text{where } F(z) := \frac{1}{2}\int_{\mathcal{M}} d^2(z,y) d\pi_\varepsilon^x(y).
\end{equation*}
Finding $m^*$ generally requires an iterative optimization procedure, as the gradient of this objective is $\nabla F(z) = - \int_{\mathcal{M}} \log_z(y) d\pi_\varepsilon^x(y)$, leading to the implicit condition $\int \log_{m^*}(y) d\pi_\varepsilon^x(y) = 0$.

Our estimator $T_\varepsilon(x)$ is computed directly as the exponential of the weighted average of tangent vectors, as defined in \eqref{eq:riemannian_map_app}. This definition possesses a geometric connection to the true center of mass as the vector we compute, $v = \int_{\mathcal{M}} \log_x(y) d\pi_\varepsilon^x(y)$, is exactly the negative gradient of the Fréchet objective evaluated at the source point $x$:
\begin{equation*}
    v = -\nabla F(x).
\end{equation*}
Consequently, the operation $T_\varepsilon(x) = \exp_x(v)$ geometrically corresponds to taking \textbf{a single gradient descent step} on the objective $F(z)$, initialized at $x$ with a step size of 1. We stress that our estimator is the correct generalization of the entropic map to the Riemannian setting, and we found the connection to the Fréchet mean to be a useful geometric intuition. 

\paragraph{Recovery of the Euclidean Case.}
If we reduce $\mathcal{M}$ to the Euclidean space $\mathbb{R}^d$ equipped with the standard metric, the geometry simplifies: $d(x,y) = \|x-y\|$, the exponential map becomes translation $\exp_x(v) = x + v$, and the logarithm becomes subtraction $\log_x(y) = y - x$. Substituting these into our definition:
\begin{equation*}
    T_\varepsilon(x) = \exp_x\left( \int (y-x) d\pi_\varepsilon^x(y) \right) = x + \left( \int y d\pi_\varepsilon^x(y) - x \int d\pi_\varepsilon^x(y) \right).
\end{equation*}

Since $\pi_\varepsilon^x$ is a probability distribution, it integrates to 1. Thus:
\begin{equation*}
    T_\varepsilon(x) = \int y d\pi_\varepsilon^x(y) = \mathbb{E}_{\pi_\varepsilon}[Y|X=x].
\end{equation*}

This recovers the standard barycentric projection estimator from~\citet{pooladian2021entropic}.

\subsection{McCann Interpolation via the Entropic Map}

A fundamental concept in optimal transport geometry is the \textit{displacement interpolation}, or McCann interpolation, which describes the geodesic path between probability measures in Wasserstein space. Specifically, if $T_0$ is the optimal transport map pushing $\mu$ to $\nu$, the interpolation at time $t \in [0,1]$ is the distribution $\mu_t = (T_{0,t})_\# \mu$, where $T_{0,t}(x)$ moves the mass at $x$ a fraction $t$ of the way along the geodesic toward $T_0(x)$.

Our Riemannian entropic map offers a constructive way to approximate this interpolation. Because the map $T_\varepsilon(x)$ is constructed via the exponential map of a tangent vector $v_x = \int \log_x(y) d\pi_\varepsilon^x(y)$, the geodesic connecting $x$ to $T_\varepsilon(x)$ is simply the curve $\gamma(t) = \exp_x(t \cdot v_x)$.

\begin{definition}[Entropic Interpolant]
    For any time $t \in [0,1]$, we define the time-$t$ entropic map $T_{\varepsilon, t}: \mathcal{M} \to \mathcal{M}$ as:
    \begin{equation*}
        T_{\varepsilon, t}(x) := \exp_x\left( t \cdot \int_{\mathcal{M}} \log_x(y) d\pi_\varepsilon^x(y) \right).
    \end{equation*}
    The estimated McCann interpolation at time $t$ is the pushforward measure $\hat{\mu}_t := (T_{\varepsilon, t})_\# \mu$.
\end{definition}

This formulation is geometrically intuitive: we compute the aggregate direction of transport $v_x$ in the tangent space and simply scale it by $t$ before projecting back to the manifold. At $t=0$, the argument to the exponential map is the zero vector, recovering the identity map ($T_{\varepsilon, 0}(x) = x$). At $t=1$, we recover the full Riemannian entropic map ($T_{\varepsilon, 1}(x) = T_\varepsilon(x)$). For intermediate $t$, this generates a distribution supported on the geodesic flow between the source and the estimated target.

\subsection{Out-of-Sample Estimation}

A critical feature of the proposed estimator is its ability to generalize to points outside the initial training set. Let $\hat{\nu} = \sum_{j=1}^n \mathbf{b}_j \delta_{Y_j}$ be the discrete empirical measure of the target distribution, and let $\mathbf{g} \in \mathbb{R}^n$ be the optimal dual potential vector obtained from Sinkhorn's algorithm on training samples.

We seek to evaluate the map $T_\varepsilon(x')$ for a new source point $x' \in \mathcal{M}$ that was not present during training. This evaluation is not an ad-hoc interpolation but a direct consequence of the \textit{extension principle} in semi-discrete optimal transport.

\paragraph{The $(c, \varepsilon)$-Transform and Dual Extension.}
In classical Optimal Transport ($\varepsilon=0$), the relationship between optimal potentials is governed by the $c$-transform (or $c$-conjugate), which generalizes the Legendre-Fenchel transform. The source potential $f$ is obtained from the target potential $g$ via a ``hard'' infimum:
\begin{equation*}
    f(x) = g^c(x) := \inf_{y \in \mathcal{M}} \left( \frac{1}{2}d^2(x,y) - g(y) \right).
\end{equation*}

In Entropic Optimal Transport, this non-smooth operator is replaced by a ``soft'' minimum known as the \textbf{$(c, \varepsilon)$-transform}:
\begin{equation*}
    f_\varepsilon(x) = g^{(c, \varepsilon)}(x) := -\varepsilon \log \left( \int_{\mathcal{M}} \exp\left( \frac{g(y) - \frac{1}{2}d^2(x,y)}{\varepsilon} \right) d\nu(y) \right).
\end{equation*}

In our estimation setting, we solve the dual problem using the discrete empirical measure $\hat{\nu}$. While the resulting optimal potential $\mathbf{g}$ is a vector in $\mathbb{R}^n$, the $(c, \varepsilon)$-transform interprets these values as coefficients of a Riemannian kernel expansion. Substituting the discrete measure $\hat{Q}$ into the integral definition yields a globally defined, smooth potential $f_\varepsilon: \mathcal{M} \to \mathbb{R}$:
\begin{equation} \label{eq:f_global}
    f_\varepsilon(x') = -\varepsilon \log \left( \sum_{j=1}^n \mathbf{b}_j \exp\left( \frac{\mathbf{g}_j - \frac{1}{2}d^2(x', Y_j)}{\varepsilon} \right) \right),
\end{equation}
where $\mathbf{b}_j$ are the target weights (typically $1/n$).

\paragraph{Derivation of the Out-of-Sample Map.}
We apply our main theorem, $T_\varepsilon(x') = \exp_{x'}(-\nabla f_\varepsilon(x'))$, to this extended potential.
Differentiating \eqref{eq:f_global} with respect to $x'$ involves the gradient of the log-sum-exp function. By the chain rule:
\begin{equation*}
\begin{split}
    \nabla f_\varepsilon(x') &= -\varepsilon \frac{\sum_{j=1}^n \nabla_{x'} \left[ \mathbf{b}_j \exp\left( \frac{\mathbf{g}_j - \frac{1}{2}d^2(x', Y_j)}{\varepsilon} \right) \right]}{\sum_{k=1}^n \mathbf{b}_k \exp\left( \frac{\mathbf{g}_k - \frac{1}{2}d^2(x', Y_k)}{\varepsilon} \right)} \\
    &= -\varepsilon \sum_{j=1}^n \frac{\mathbf{b}_j \exp(\frac{\mathbf{g}_j - \frac{1}{2}d^2(x', Y_j)}{\varepsilon})}{\sum \mathbf{b}_k \exp(\frac{\mathbf{g}_k - \frac{1}{2}d^2(x', Y_k)}{\varepsilon})} \cdot \nabla_{x'} \left( \frac{\mathbf{g}_j - \frac{1}{2}d^2(x', Y_j)}{\varepsilon} \right).
\end{split}
\end{equation*}
Since $\mathbf{g}_j$ is constant with respect to $x'$, the inner gradient is simply $-\frac{1}{2\varepsilon}\nabla_{x'} d^2(x', Y_j) = \frac{1}{\varepsilon}\log_{x'}(Y_j)$. Substituting this back yields:
\begin{equation*}
    -\nabla f_\varepsilon(x') = \sum_{j=1}^n w_j(x') \log_{x'}(Y_j),
\end{equation*}
where the attention weights $w_j(x')$ are given by the softmax function:
\begin{equation*}
    w_j(x') = \frac{\mathbf{b}_j \exp\left( \frac{\mathbf{g}_j - \frac{1}{2}d^2(x', Y_j)}{\varepsilon} \right)}{\sum_{k=1}^n \mathbf{b}_k \exp\left( \frac{\mathbf{g}_k - \frac{1}{2}d^2(x', Y_k)}{\varepsilon} \right)}.
\end{equation*}
Finally, the transport map for the new point $x'$ is computed by projecting this weighted average back onto the manifold:
\begin{equation*}
    T_\varepsilon(x') = \exp_{x'} \left( \sum_{j=1}^n w_j(x') \log_{x'}(Y_j) \right).
\end{equation*}

\subsection{Computing the Riemannian entropic map without analytic $\exp$ and $\log$ maps}
\label{app:entropic_no_exp_no_log}
When closed-form $\log$ and $\exp$ maps are not available, the entropic estimator remains computationally viable as long as the geodesic distance function $d_g(\cdot,\cdot)$ on $\mathcal{M}$ is accessible. Specifically, we assume we have access to: (i) the geodesic distance $d_g(x,y)$ and its extrinsic gradient $\nabla_x d_g(x,y) \in \mathbb{R}^d$, (ii) a projection operator $\Pi_{\mathcal{M}}: \mathbb{R}^d \to \mathcal{M}$ mapping ambient points to the closest point on $\mathcal{M}$, and (iii) the Jacobian of the projection operator $J_{\Pi}(x) \in \mathbb{R}^{d\times d}$, which at points $x\in\mathcal{M}$ projects ambient vectors onto the tangent space $T_x\mathcal{M}$. 

\paragraph{Approximating the logarithmic map.}
The $N^2$ logarithmic map evaluations in the lift step can be obtained from the gradient of the squared distance. Define $D(x,y) = \frac{1}{2}d_g(x,y)^2$. Its extrinsic gradient with respect to $x$ is $\nabla_x D(x,y) = d_g(x,y)\,\nabla_x d_g(x,y)$. Projecting onto the tangent space at $x$ via the Jacobian of the manifold projection gives:
\begin{equation*}
    \log_x(y) = -J_{\Pi}(x)\big(d_g(x,y)\,\nabla_x d_g(x,y)\big).
\end{equation*}
Substituting into the Riemannian entropic map, the full lift-average-retract computation becomes:
\begin{equation*}
    T_\varepsilon(x_i) = \exp_{x_i}\!\left( -\sum_{j=1}^N \pi_\varepsilon(y_j|x_i)\, J_{\Pi}(x_i)\big(d_g(x_i,y_j)\,\nabla_{x_i} d_g(x_i,y_j)\big) \right).
\end{equation*}
Since the Sinkhorn algorithm already requires the cost matrix $C_{ij} = \frac{1}{2}d_g^2(x_i,y_j)$, the distance gradients can be obtained by a backward pass through the same computation at negligible additional cost and the tangent projection can be performed with a Jacobian-vector product. 

\paragraph{Approximating the exponential map via projected Euler integration.}
The $N$ exponential map evaluations in the retract step can be approximated by numerically integrating the geodesic using only the projection operator. Given $x\in\mathcal{M}$ and $v\in T_x\mathcal{M}$, we seek $y = \exp_x(v)$. For a given number of steps $K$, we can approximate this geodesic by iteratively taking small steps in the ambient space and projecting the position and velocity back onto the manifold.

\begin{enumerate}[leftmargin=*] 
    \item \textbf{Euler step:} $\tilde{y}_{k+1} = y_k + \frac{1}{K} v_k$
    \item \textbf{Position projection:} $y_{k+1} = \Pi_{\mathcal{M}}(\tilde{y}_{k+1})$
    \item \textbf{Velocity projection:} $\tilde{v}_{k+1} = J_{\Pi}(y_{k+1}) v_k$
    \item \textbf{Speed renormalization:} $v_{k+1} = \lVert v \rVert \cdot \tilde{v}_{k+1} / \lVert \tilde{v}_{k+1} \rVert$
\end{enumerate}

Where we start with $y_0 = x$ and $v_0 = v$. The last step is to ensure that the speed of the geodesic is preserved. This procedure requires only the projection operator and its Jacobian-vector product. This formulation opens the door to applying RWEFM on geometries where analytic $\exp$ and $\log$ maps are unavailable, such as triangulated meshes, learned Riemannian metrics, or implicit surfaces; we demonstrate a proof of concept on a triangulated mesh (the Stanford bunny) in Section~\ref{sec:bunny}.

\subsection{Statistical performance of the estimator}
\label{sec:app-statistical-performance}

We now establish a finite-sample guarantee for the Riemannian entropic map.

Throughout this section, we write
\[
    c(x,y) := \frac12 d_g^2(x,y),
\]
and let \((\varphi_0,\varphi_0^c)\) be a pair of optimal Kantorovich
potentials for the unregularized problem. For \(x\in\Omega\), define
\[
    x^\star := T_0(x),
    \qquad
    v_0(x) := \log_x T_0(x),
\]
and the Kantorovich duality gap
\begin{equation}
    D_x(y)
    := c(x,y)-\varphi_0(x)-\varphi_0^c(y).
    \label{eq:A7-duality-gap}
\end{equation}
By optimality, \(D_x(y)\ge 0\) and \(D_x(T_0(x))=0\).

\subsubsection{Regularity assumptions on \(M\) and \(T_0\)}
\label{sec:A7-assumptions}

\begin{assumption}[Geometry and convexity]
\label{ass:A7-geometry}
There exists a compact, strongly geodesically convex set \(\Omega\subset M\)
such that \(\operatorname{supp}(\mu),\operatorname{supp}(\nu)\subset\Omega\).
For every \(x\in\Omega\), the logarithmic map
\(\log_x:\Omega\to T_xM\) is single-valued and smooth. Moreover, on the
relevant tangent balls the exponential map obeys the uniform Lipschitz bound
\[
    d_g\!\left(\exp_x(u),\exp_x(v)\right)
    \le C_{\exp}\,\|u-v\|_g,
    \qquad x\in\Omega.
\]
\end{assumption}

\begin{assumption}[Density bounds]
\label{ass:A7-density}
The measures \(\mu\) and \(\nu\) admit densities \(f_\mu,f_\nu\) with
respect to Riemannian volume on \(\Omega\), and
\[
    0<m_\rho\le f_\mu(x),f_\nu(x)\le M_\rho<\infty,
    \qquad x\in\Omega.
\]
\end{assumption}

\begin{assumption}[Regularity and quadratic detachment of the Monge map]
\label{ass:A7-monge}
The Monge problem for the cost \(c(x,y)=\frac12d_g^2(x,y)\) admits a unique
optimal map \(T_0:\Omega\to\Omega\), which is a diffeomorphism. There exist
constants \(0<\lambda\le \Lambda<\infty\) such that, uniformly for
\(x,y\in\Omega\),
\begin{equation}
    \frac{\lambda}{2}d_g^2(y,T_0(x))
    \le D_x(y)
    \le \frac{\Lambda}{2}d_g^2(y,T_0(x)).
    \label{eq:A7-quadratic-detachment}
\end{equation}
\end{assumption}

Assumption~\ref{ass:A7-geometry} implies that \(c\) is smooth on the compact
set \(\Omega\times\Omega\), hence all derivatives of \(c\) that appear below
are uniformly bounded. It also implies the following two basic geometric
facts. First, there is a constant \(C_{\log}\) such that
\begin{equation}
    \|\log_x(y)-\log_x(z)\|_g
    \le C_{\log} d_g(y,z),
    \qquad x,y,z\in\Omega.
    \label{eq:A7-log-lipschitz}
\end{equation}
Second, since \(\Omega\) is a compact subset of a \(d\)-dimensional smooth
manifold, it admits measurable partitions into at most \(C\delta^{-d}\) sets
of geodesic diameter at most \(\delta\), for every sufficiently small
\(\delta>0\).

Let
\[
    \mu_n=\frac1n\sum_{i=1}^n\delta_{X_i},
    \qquad
    \nu_n=\frac1n\sum_{j=1}^n\delta_{Y_j},
\]
where \(X_i\stackrel{\mathrm{iid}}\sim\mu\) and
\(Y_j\stackrel{\mathrm{iid}}\sim\nu\), with the two samples independent.
We denote by \(\pi_{\varepsilon,n}\) the optimal entropic plan between
\(\mu\) and \(\nu_n\), and by \(T_{\varepsilon,n}\) its Riemannian
barycentric map,
\[
    T_{\varepsilon,n}(x)
    := \exp_x\!\left(b_{\varepsilon,n}(x)\right),
    \qquad
    b_{\varepsilon,n}(x)
    := \int_\Omega \log_x(y)\,d\pi_{\varepsilon,n}^x(y).
\]

For the two-sample estimator, let
\((\widehat f,\widehat g)\) be optimal entropic potentials for
\((\mu_n,\nu_n)\). We use the canonical out-of-sample extension already
described in Section~A.5: for every \(x\in\Omega\),
\begin{equation}
    \widehat f(x)
    := -\varepsilon\log\!\left[
        \frac1n\sum_{j=1}^n
        \exp\!\left(\frac{\widehat g(Y_j)-c(x,Y_j)}{\varepsilon}\right)
    \right],
    \label{eq:A7-oos-f}
\end{equation}
and
\begin{align}
    \widehat w_j(x)
    &:=
    \frac{\exp\!\left((\widehat g(Y_j)-c(x,Y_j))/\varepsilon\right)}
    {\sum_{k=1}^n
     \exp\!\left((\widehat g(Y_k)-c(x,Y_k))/\varepsilon\right)},
    \label{eq:A7-oos-weights}\\
    \widehat b_{\varepsilon,(n,n)}(x)
    &:= \sum_{j=1}^n \widehat w_j(x)\log_x(Y_j),
    \qquad
    \widehat T_{\varepsilon,(n,n)}(x)
    := \exp_x\!\left(\widehat b_{\varepsilon,(n,n)}(x)\right).
    \label{eq:A7-oos-map}
\end{align}
At the sampled source points, this agrees with the barycentric projection of
the empirical optimal entropic coupling.

Define
\begin{equation}
    s_d:=\max\left\{1,\frac d2\right\}.
    \label{eq:A7-sd}
\end{equation}
The use of \(s_d\) only matters in dimension one; for every \(d\ge 2\),
\(s_d=d/2\).

\begin{theorem}[Statistical performance of the Riemannian entropic map]
\label{thm:A7-statistical-performance}
Under Assumptions~\ref{ass:A7-geometry}--\ref{ass:A7-monge}, there exist
constants \(C<\infty\) and \(\varepsilon_0\in(0,1]\), depending only on the
regularity constants and the geometry of \(\Omega\), such that for every
\(0<\varepsilon\le\varepsilon_0\) and \(n\ge2\),
\begin{equation}
\begin{split}
    \mathbb E\int_\Omega
    d_g^2\!\left(\widehat T_{\varepsilon,(n,n)}(x),T_0(x)\right)
    \,d\mu(x)
    \le C\left[
        \varepsilon\log\!\left(\frac{e}{\varepsilon}\right)
        + \varepsilon^{-s_d}\frac{\log(n+1)}{\sqrt n}
    \right].
\end{split}
\label{eq:A7-main-bound}
\end{equation}
In particular, for \(d\ge2\),
\begin{equation}
    \mathbb E\int_\Omega
    d_g^2\!\left(\widehat T_{\varepsilon,(n,n)}(x),T_0(x)\right)
    \,d\mu(x)
    \lesssim
    \varepsilon\log\!\left(\frac{e}{\varepsilon}\right)
    + \varepsilon^{-d/2}\frac{\log(n+1)}{\sqrt n}.
    \label{eq:A7-main-bound-dge2}
\end{equation}
Consequently, choosing \(\varepsilon\asymp n^{-1/(d+2)}\) for \(d\ge2\)
gives, up to logarithmic factors,
\[
    \mathbb E\int_\Omega
    d_g^2\!\left(\widehat T_{\varepsilon,(n,n)}(x),T_0(x)\right)
    \,d\mu(x)
    = \widetilde O\!\left(n^{-1/(d+2)}\right).
\]
\end{theorem}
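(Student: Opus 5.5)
The plan is to reduce the manifold error to a tangent-space error, then split that error into a bias term and a variance term. By Assumption~\ref{ass:A7-geometry}, $T_0(x)=\exp_x(v_0(x))$ and $\widehat T_{\varepsilon,(n,n)}(x)=\exp_x(\widehat b_{\varepsilon,(n,n)}(x))$. The Lipschitz bound on $\exp_x$ then gives
\[
d_g^2\bigl(\widehat T_{\varepsilon,(n,n)}(x),T_0(x)\bigr)\le C_{\exp}^2\,\|\widehat b_{\varepsilon,(n,n)}(x)-v_0(x)\|_g^2 .
\]
It therefore suffices to bound $\mathbb E\int\|\widehat b-v_0\|^2d\mu$. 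We introduce the population entropic plan $\pi_\varepsilon$ between $\mu$ and $\nu$, with barycentric vector $b_\varepsilon(x)=\int\log_x(y)\,d\pi_\varepsilon^x(y)$. We then split
\[
\|\widehat b-v_0\|^2\le 2\|b_\varepsilon-v_0\|^2+2\|\widehat b-b_\varepsilon\|^2 .
\]
The first term will give the $\varepsilon\log(e/\varepsilon)$ bias, and the second the $\varepsilon^{-s_d}\log(n+1)/\sqrt n$ sampling term.

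\textbf{Bias.} Since $v_0(x)=\log_x(x^\star)$, the bound \eqref{eq:A7-log-lipschitz} and Jensen give
\[
\|b_\varepsilon(x)-v_0(x)\|^2\le C_{\log}^2\int d_g^2(y,T_0(x))\,d\pi_\varepsilon^x(y).
\]
Integrating in $\mu$ and applying the lower quadratic detachment \eqref{eq:A7-quadratic-detachment}, this is at most $\frac{2C_{\log}^2}{\lambda}\int D_x(y)\,d\pi_\varepsilon(x,y)$. That integral equals $\int c\,d\pi_\varepsilon-\frac12W_2^2(\mu,\nu)$, because $\pi_\varepsilon$ has marginals $\mu,\nu$ and the potentials integrate to the OT value. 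This excess cost is bounded by $S_\varepsilon(\mu,\nu)-\frac12W_2^2$, using $\mathrm{KL}\ge0$.

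We then upper bound $S_\varepsilon-\frac12 W_2^2$ by a block approximation of the optimal plan $(\mathrm{id},T_0)_\#\mu$. Partition $\Omega$ into $\lesssim\delta^{-d}$ cells of diameter $\delta$ and spread the plan inside product cells. Using the density bounds of Assumption~\ref{ass:A7-density} and the upper detachment with constant $\Lambda$ (or simply Lipschitzness of $c$), the cost increase is $O(\delta^2)$ and the KL term is $O(d\log(1/\delta))$. Choosing $\delta=\sqrt\varepsilon$ gives $O(\varepsilon\log(e/\varepsilon))$.

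\textbf{Variance.} The main obstacle is controlling $\widehat b-b_\varepsilon$, since both the weights and the potentials are random. Both vectors have the form
\[
x\mapsto\frac{\int\log_x(y)\,e^{(g(y)-c(x,y))/\varepsilon}\,d\nu'(y)}{\int e^{(g(y)-c(x,y))/\varepsilon}\,d\nu'(y)},
\]
with $(g,\nu')$ equal to $(g_\varepsilon,\nu)$ or $(\widehat g,\nu_n)$. I would follow the Pooladian--Niles-Weed strategy, transplanted to $\Omega$ using only smoothness and boundedness of $c$ and $\log_x$ there.

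First, use strong concavity of the semi-dual on the relevant function class to bound $\|\widehat g-g_\varepsilon\|_{L^2(\nu)}$ and $\|\widehat f-f_\varepsilon\|_{L^2(\mu)}$. These bounds come from the empirical-process fluctuation of the dual objective, giving $\mathbb E\lesssim\varepsilon^{-s_d}\log(n+1)/\sqrt n$. The factor $\varepsilon^{-d/2}$ enters through the Lipschitz/Hölder norms of entropic potentials being of order $\varepsilon^{-k}$ in derivative $k$, combined with covering numbers of smooth function classes on a $d$-dimensional compact set. The $\log(n+1)$ comes from a chaining or union bound.

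Second, write the difference of the two barycentric maps as an integral of $\log_x(y)$, which is bounded, against the difference of the conditional densities. Control that difference by $\varepsilon^{-1}$ times the potential perturbation plus the empirical fluctuation $\nu_n-\nu$, with boundedness of $\log_x$ on compact $\Omega$ absorbing constants. The anticipated difficulty is that the $\varepsilon^{-1}$ from linearizing the exponential must be absorbed by the exact Pooladian--Niles-Weed-type identity. That identity expresses the map error through the conditional variance and avoids an extra power of $\varepsilon$. Here we use that the Euclidean argument only needs the cost's joint smoothness and the vector-valued integrand $\log_x(y)$, which is bounded by Assumption~\ref{ass:A7-geometry}.

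Summing the bias and variance contributions gives \eqref{eq:A7-main-bound}. The rate statement then follows by balancing $\varepsilon$ against $\varepsilon^{-d/2}n^{-1/2}$, i.e.\ $\varepsilon\asymp n^{-1/(d+2)}$.
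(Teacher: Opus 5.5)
\paragraph{Overall.} Your reduction to the tangent space and your bias term are correct. They are exactly the paper's Lemma~\ref{lem:A7-gap-controls-map} combined with the block-coupling Lemma~\ref{lem:A7-entropic-bias}. One minor slip there: Lipschitzness of $c$ alone gives only an $O(\delta)$ cost increase, so that variant needs $\delta=\varepsilon$ rather than $\delta=\sqrt\varepsilon$. Also, the density bounds are unnecessary, since the block coupling has relative entropy at most $\log N$.

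\paragraph{The gap: the variance term.} The real problem is the term $\mathbb E\int\|\widehat b_{\varepsilon,(n,n)}-b_\varepsilon\|_g^2\,d\mu$. All the difficulty of the theorem sits there, and no argument is actually given.
\begin{itemize}
\item The entropic (semi-)dual is not uniformly strongly concave. Its second variation along $u\oplus v$ at $(f,g)$ is $-\varepsilon^{-1}\iint(u\oplus v)^2e^{(f\oplus g-c)/\varepsilon}\,d\mu\,d\nu$. This is comparable to $-\varepsilon^{-1}\|u\oplus v\|^2_{L^2(\pi_\varepsilon)}$ only if the potentials already agree to within $O(\varepsilon)$ in sup norm; otherwise it can be as small as $e^{-C/\varepsilon}$.
\item The linearization $e^{(\delta f\oplus\delta g)/\varepsilon}-1\approx(\delta f\oplus\delta g)/\varepsilon$ needs the same sup-norm control. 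It also costs a factor $\varepsilon^{-2}$ in the squared map error. Combined with the potential bound you state, $\mathbb E\|\widehat g-g_\varepsilon\|^2\lesssim\varepsilon^{-s_d}\log(n+1)/\sqrt n$, you land at $\varepsilon^{-s_d-2}$, not $\varepsilon^{-s_d}$.
\item $\widehat g$ is only defined on $\{Y_j\}$, so it must be extended before $\|\widehat g-g_\varepsilon\|_{L^2(\nu)}$ makes sense.
\item The ``exact Pooladian--Niles-Weed-type identity'' that is supposed to absorb the $\varepsilon$-loss is never stated. As a result, the $\varepsilon^{-s_d}\log(n+1)/\sqrt n$ rate is asserted rather than derived.
\end{itemize}

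\paragraph{How the paper avoids this.} The paper needs neither a potential comparison nor a linearization.
\begin{itemize}
\item \emph{Choice of intermediate map.} It uses the one-sample map $T_{\varepsilon,n}$ (plan $\pi_{\varepsilon,n}$ between $\mu$ and $\nu_n$) as the intermediate, not the population map. It applies the gap lemma directly to $\pi_{\varepsilon,n}$, giving $\int D_x\,d\pi_{\varepsilon,n}\le S_\varepsilon(\mu,\nu_n)-\int\varphi_0\,d\mu-\int\varphi_0^c\,d\nu_n$. Sampling of $\nu$ then enters only through the scalar $\mathbb E S_\varepsilon(\mu,\nu_n)-S_\varepsilon(\mu,\nu)$ (Corollary~\ref{cor:A7-cost-deviation}), with no factor $\varepsilon^{-1}$. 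It never has to compare two entropic maps with different target measures, which your split through $b_\varepsilon$ forces you to do.
\item \emph{Source sampling.} It uses the modified duality $a\log a\ge ab-e^b+a$ (Lemma~\ref{lem:A7-modified-duality}) with $\eta=\varepsilon\chi_h+\widehat f\oplus\widehat g$ and $\chi_h=\langle h,\log_x y-\widehat b_{\varepsilon,(n,n)}(x)\rangle_g-a\|h\|_g^2$. Hoeffding's lemma under the normalized kernel $\widehat\gamma(x,\cdot)\,d\nu_n$ makes the exponential term nonpositive. Taking the pointwise supremum over $h$ then gives $\frac1{4a}\int\|b_{\varepsilon,n}-\widehat b_{\varepsilon,(n,n)}\|_g^2\,d\mu\le\varepsilon^{-1}\int(f_{\varepsilon,n}-\widehat f)\,d(\mu-\mu_n)$.
\item By Lemma~\ref{lem:A7-empirical-transform}, the expectation of the right-hand side is $\lesssim\varepsilon^{-1}(1+\varepsilon^{1-s_d})\log(n+1)/\sqrt n\lesssim\varepsilon^{-s_d}\log(n+1)/\sqrt n$.
\end{itemize}
This variational step, which turns scalar dual suboptimality directly into tangent-map error, is the idea your variance argument is missing.
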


\subsubsection{Deterministic and empirical ingredients}
\label{sec:A7-ingredients}

We first record a deterministic observation converting the Kantorovich duality
gap into map error.

\begin{lemma}[Duality gap controls barycentric map error]
\label{lem:A7-gap-controls-map}
Let \(\pi\in\Pi(\mu,\eta)\) be any coupling whose second marginal \(\eta\)
is supported in \(\Omega\). Define
\[
    b_\pi(x):=\int_\Omega\log_x(y)\,d\pi^x(y),
    \qquad
    T_\pi(x):=\exp_x(b_\pi(x)).
\]
Then
\begin{equation}
    \int_\Omega d_g^2(T_\pi(x),T_0(x))\,d\mu(x)
    \le C\int_{\Omega\times\Omega}D_x(y)\,d\pi(x,y).
    \label{eq:A7-gap-map}
\end{equation}
\end{lemma}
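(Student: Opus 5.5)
The plan is to work pointwise in $x$ and then integrate against $\mu$. Fix $x\in\Omega$ and write $v_0(x)=\log_x T_0(x)$. Since $T_0(x)=\exp_x(v_0(x))$ and $T_\pi(x)=\exp_x(b_\pi(x))$, the Lipschitz bound on $\exp_x$ in Assumption~\ref{ass:A7-geometry} gives
\[
d_g\bigl(T_\pi(x),T_0(x)\bigr)\le C_{\exp}\,\|b_\pi(x)-v_0(x)\|_g .
\]
This applies because $b_\pi(x)$ is an average of vectors $\log_x(y)$ with $y\in\Omega$. By geodesic convexity these vectors lie in a bounded tangent region, which we take to be the ``relevant tangent ball'' of that assumption. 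The problem is thereby reduced to controlling a linear average in the vector space $T_x\mathcal M$.

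Next we write
\[
b_\pi(x)-v_0(x)=\int_\Omega\bigl(\log_x(y)-\log_x(T_0(x))\bigr)\,d\pi^x(y),
\]
using that $\pi^x$ is a probability measure. Jensen's inequality (convexity of $\|\cdot\|_g^2$ on the inner-product space $T_x\mathcal M$) together with the Lipschitz bound \eqref{eq:A7-log-lipschitz} gives
\[
\|b_\pi(x)-v_0(x)\|_g^2\le C_{\log}^2\int_\Omega d_g^2\bigl(y,T_0(x)\bigr)\,d\pi^x(y).
\]
The lower quadratic detachment bound in \eqref{eq:A7-quadratic-detachment} gives $d_g^2(y,T_0(x))\le \tfrac{2}{\lambda}D_x(y)$. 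Combining these,
\[
d_g^2\bigl(T_\pi(x),T_0(x)\bigr)\le \frac{2C_{\exp}^2C_{\log}^2}{\lambda}\int D_x(y)\,d\pi^x(y).
\]
Integrating against $d\mu(x)$ and disintegrating $\pi=\int\pi^x\,d\mu(x)$, which is valid because the first marginal of $\pi$ is $\mu$, yields \eqref{eq:A7-gap-map} with $C=2C_{\exp}^2C_{\log}^2/\lambda$.

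The main obstacle is bookkeeping rather than analysis. We must check that every point involved, namely $T_0(x)\in\Omega$ and $\operatorname{supp}\pi^x\subset\Omega$ for $\mu$-a.e.\ $x$, lies where the logarithm is single-valued. We must also check that the averaged vector $b_\pi(x)$ stays in the domain where the exponential Lipschitz bound is assumed. The first point follows from $T_0:\Omega\to\Omega$ and $\eta(\Omega)=1$. The second holds because $\|b_\pi(x)\|_g\le\operatorname{diam}(\Omega)$, so we interpret the relevant tangent ball as having radius $\operatorname{diam}(\Omega)$. Measurability of $x\mapsto b_\pi(x)$ follows from the disintegration theorem and the smoothness of $\log$ on $\Omega\times\Omega$.
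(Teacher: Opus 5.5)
Your proposal is correct and follows the paper's proof essentially step for step. Both use the Lipschitz bound on $\exp_x$, then Jensen in $T_x\mathcal M$, then the log-Lipschitz estimate \eqref{eq:A7-log-lipschitz}, then the lower quadratic detachment bound, and finally integrate against the disintegration of $\pi$. Your explicit constant $C=2C_{\exp}^2C_{\log}^2/\lambda$ and your observation that $b_\pi(x)$ and $v_0(x)$ both have norm at most $\operatorname{diam}(\Omega)$ make precise what the paper leaves implicit.
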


\begin{proof}
By Assumption~\ref{ass:A7-geometry} and Jensen's inequality,
\begin{align*}
    d_g^2(T_\pi(x),T_0(x))
    &\le C_{\exp}^2\|b_\pi(x)-v_0(x)\|_g^2\\
    &\le C_{\exp}^2
    \int_\Omega
    \|\log_x(y)-\log_x(T_0(x))\|_g^2\,d\pi^x(y).
\end{align*}
Using \eqref{eq:A7-log-lipschitz} and the lower bound in
\eqref{eq:A7-quadratic-detachment},
\[
    \|\log_x(y)-\log_x(T_0(x))\|_g^2
    \le C d_g^2(y,T_0(x))
    \le C D_x(y).
\]
Integrating first in \(y\) and then in \(x\) proves the claim.
\end{proof}

The next lemma bounds the regularization bias without any heat-kernel
representation.

\begin{lemma}[Entropic cost bias]
\label{lem:A7-entropic-bias}
There exists \(C<\infty\) such that, for all sufficiently small
\(\varepsilon\in(0,1]\),
\begin{equation}
    0\le
    S_\varepsilon(\mu,\nu)-\frac12W_2^2(\mu,\nu)
    \le C\varepsilon\log\!\left(\frac{e}{\varepsilon}\right).
    \label{eq:A7-entropic-bias}
\end{equation}
\end{lemma}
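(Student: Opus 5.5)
The plan is to prove the two inequalities separately. The lower bound is immediate, and the upper bound comes from a block-approximation competitor plugged into the primal problem~\eqref{eq:entropicOT}.

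\textbf{Lower bound.} Since $D_{KL}(\pi\|\mu\otimes\nu)\ge 0$ for every $\pi\in\Pi(\mu,\nu)$, we have $S_\varepsilon(\mu,\nu)\ge \inf_\pi\int c\,d\pi=\frac12 W_2^2(\mu,\nu)$.

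\textbf{Upper bound.} Take the unregularized optimal plan $\pi_0=(\mathrm{id},T_0)_\#\mu$, which exists by Assumption~\ref{ass:A7-monge}. It has infinite entropy relative to $\mu\otimes\nu$, so it must be smoothed at a scale $\delta$ that will later be tied to $\varepsilon$.

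1. Use the partition of $\Omega$ recorded after Assumption~\ref{ass:A7-geometry}: measurable cells $\{Q_k\}$, at most $C\delta^{-d}$ of them, each of geodesic diameter at most $\delta$.

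2. Define the block coupling
\[
\pi_\delta:=\sum_{k,l}\pi_0(Q_k\times Q_l)\,\frac{\mu|_{Q_k}\otimes\nu|_{Q_l}}{\mu(Q_k)\,\nu(Q_l)} .
\]
Its marginals are exactly $\mu$ and $\nu$.

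3. \emph{Transport cost.} Any $(x,y)$ in the support of $\pi_\delta$ lies in some $Q_k\times Q_l$ that also carries a point $(x',T_0(x'))$. Hence $d_g(x,x')\le\delta$ and $d_g(y,T_0(x'))\le\delta$. Since $c$ is smooth, and in particular Lipschitz, on the compact set $\Omega\times\Omega$,
\[
\int c\,d\pi_\delta\le \frac12W_2^2(\mu,\nu)+C\delta .
\]
This first-order bound is enough. A second-order bound via the quadratic detachment \eqref{eq:A7-quadratic-detachment} would give $C\delta^2$, but it is not needed.

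4. \emph{Entropy.} The density of $\pi_\delta$ with respect to $\mu\otimes\nu$ is piecewise constant, equal to $p_{kl}/(\mu(Q_k)\nu(Q_l))$ on each cell, where $p_{kl}:=\pi_0(Q_k\times Q_l)$. Therefore
\[
D_{KL}(\pi_\delta\|\mu\otimes\nu)=\sum_{k,l}p_{kl}\log\frac{p_{kl}}{\mu(Q_k)\nu(Q_l)} .
\]
Because $p_{kl}\le\mu(Q_k)$, each term is at most $p_{kl}\log(1/\nu(Q_l))$. Assumption~\ref{ass:A7-density} gives $\nu(Q_l)\gtrsim m_\rho\,\delta^d$, provided each cell contains a geodesic ball of radius $c\delta$.

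5. \emph{Cell regularity.} The partition must therefore be chosen with this inner-ball property, for example Voronoi cells of a maximal $\delta$-separated net. With that choice, the entropy is at most $d\log(1/\delta)+C$.

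6. \emph{Choice of scale.} Take $\delta=\varepsilon$. Then
\[
S_\varepsilon(\mu,\nu)-\frac12W_2^2(\mu,\nu)\le C\varepsilon+\varepsilon\bigl(d\log(1/\varepsilon)+C\bigr)\le C\varepsilon\log(e/\varepsilon),
\]
which is the claimed upper bound.

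\textbf{Main obstacle.} The only delicate point is the uniform lower bound on cell masses. It requires an inner-ball condition on the partition, whereas the partition stated after Assumption~\ref{ass:A7-geometry} only controls cell diameter. I would construct the partition from a maximal $\delta$-net and apply volume comparison on the compact set $\Omega$. Volume comparison gives $\mathrm{vol}(B(x,r)\cap\Omega)\gtrsim r^d$ uniformly for small $r$, which is where strong geodesic convexity of $\Omega$ is used. This also fixes $\varepsilon_0$ small enough that all the small-radius estimates apply.
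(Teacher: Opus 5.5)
Your proof is correct, but it takes a different route from the paper's. The paper partitions only the target side and pulls the partition back through the Monge map ($A_k=T_0^{-1}(B_k)$). Its competitor is the block-diagonal coupling $\sum_k p_k^{-1}\mu|_{A_k}\otimes\nu|_{B_k}$, whose relative entropy is simply $\sum_k p_k\log(1/p_k)\le\log N$. Since $y$ and $T_0(x)$ always lie in the same cell, the upper quadratic-detachment bound in \eqref{eq:A7-quadratic-detachment} gives a transport excess $\int D_x(y)\,d\pi^\delta\le C\delta^2$, so the paper takes $\delta=\sqrt{\varepsilon}$.

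You instead discretize an optimal plan on the product grid and use only Lipschitz continuity of $c$ on the compact set $\Omega\times\Omega$. Made precise cellwise, the $\pi_\delta$-average of $c$ on $Q_k\times Q_l$ exceeds the $\pi_0$-average by at most $\operatorname{osc}_{Q_k\times Q_l}c\le C\delta$. This costs $C\delta$ rather than $C\delta^2$, so you need $\delta=\varepsilon$. The only loss is the constant in front of $\varepsilon\log(1/\varepsilon)$, which goes from $d/2$ to $d$; the rate is unchanged. In exchange, your argument needs neither the Monge map nor quadratic detachment, and any optimal plan on a compact set will do.

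The ``main obstacle'' you identify is not actually there. Your own inequality $p_{kl}\le\mu(Q_k)$, summed over $k$ using $\sum_k p_{kl}=\nu(Q_l)$, already gives
\[
D_{\mathrm{KL}}(\pi_\delta\|\mu\otimes\nu)\le\sum_l\nu(Q_l)\log\frac{1}{\nu(Q_l)}\le\log N\le C+d\log\frac1\delta .
\]
Indeed the KL is exactly the mutual information of the two cell labels under $\pi_0$, which is bounded by the entropy of either label. So you need no inner-ball property, Voronoi construction, volume comparison, or density lower bound from Assumption~\ref{ass:A7-density}. The diameter-controlled partition recorded after Assumption~\ref{ass:A7-geometry} suffices as stated.
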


\begin{proof}
The lower bound is immediate because the transport part of the entropic
objective is at least the unregularized optimal cost and the relative entropy
is nonnegative.

For the upper bound, fix \(0<\delta<1\). By compactness of \(\Omega\), choose
a measurable partition \(\{B_k\}_{k=1}^N\) of \(\Omega\) such that
\[
    \operatorname{diam}_g(B_k)\le\delta,
    \qquad
    N\le C\delta^{-d}.
\]
Set \(A_k:=T_0^{-1}(B_k)\) and
\(p_k:=\mu(A_k)=\nu(B_k)\). Ignoring cells with \(p_k=0\), define
\begin{equation}
    \pi^\delta
    :=\sum_{k=1}^N
    \frac{1}{p_k}\,\mu|_{A_k}\otimes\nu|_{B_k}.
    \label{eq:A7-block-coupling}
\end{equation}
Then \(\pi^\delta\in\Pi(\mu,\nu)\) and
\begin{equation}
    D_{\mathrm{KL}}(\pi^\delta\|\mu\otimes\nu)
    =\sum_{k=1}^N p_k\log\frac1{p_k}
    \le \log N
    \le C+d\log\frac1\delta.
    \label{eq:A7-block-kl}
\end{equation}
Moreover, if \((x,y)\in A_k\times B_k\), then both \(T_0(x)\) and \(y\)
belong to \(B_k\), so the upper bound in
\eqref{eq:A7-quadratic-detachment} gives \(D_x(y)\le C\delta^2\). Since
\(\pi^\delta\) and the optimal Monge plan have the same marginals,
Kantorovich duality yields
\begin{equation}
    \int c\,d\pi^\delta-\frac12W_2^2(\mu,\nu)
    =\int D_x(y)\,d\pi^\delta(x,y)
    \le C\delta^2.
    \label{eq:A7-block-cost}
\end{equation}
Using \(\pi^\delta\) as a competitor in the entropic problem therefore gives
\[
    S_\varepsilon(\mu,\nu)-\frac12W_2^2(\mu,\nu)
    \le C\delta^2+C\varepsilon
       +d\varepsilon\log\frac1\delta.
\]
Taking \(\delta=\sqrt\varepsilon\) proves
\eqref{eq:A7-entropic-bias}.
\end{proof}

We next state the empirical-process estimate used twice below. We include the
argument because it is also the point at which the intrinsic dimension \(d\)
enters the statistical rate.

\begin{lemma}[Empirical process bound for entropic transforms]
\label{lem:A7-empirical-transform}
Let \(\rho\) be any probability measure supported on \(\Omega\), and let
\(\rho_n\) be its empirical measure based on \(n\) iid observations. Consider
functions of the form
\begin{equation}
    F(x)
    =-\varepsilon\log\int_\Omega
    \exp\!\left(\frac{a(y)-c(x,y)}{\varepsilon}\right)d\xi(y),
    \label{eq:A7-entropic-transform}
\end{equation}
where \(\xi\) is an arbitrary probability measure on \(\Omega\) and
\(a:\Omega\to\mathbb R\) is bounded and measurable. Since adding a constant to \(F\) does
not change \((\rho_n-\rho)F\), normalize all such functions by fixing
\(F(x_\circ)=0\) at one reference point \(x_\circ\in\Omega\), and denote the
resulting class by \(\mathcal F_\varepsilon\). Then
\begin{equation}
    \mathbb E\sup_{F\in\mathcal F_\varepsilon}
    \left|\int_\Omega F\,d(\rho_n-\rho)\right|
    \le C\left(1+\varepsilon^{1-s_d}\right)
    \frac{\log(n+1)}{\sqrt n}.
    \label{eq:A7-emp-process}
\end{equation}
The constant is uniform over \(\rho\), \(\xi\), and \(a\).
\end{lemma}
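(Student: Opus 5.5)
The plan is the standard symmetrization--chaining argument for a uniformly Lipschitz class on a compact $d$-dimensional set. The rate comes from an $L^\infty$ metric-entropy estimate for $\mathcal F_\varepsilon$.

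\textbf{Step 1: uniform envelope and Lipschitz bound.} Every $F\in\mathcal F_\varepsilon$ is uniformly Lipschitz with constant $L:=\sup_{x,y\in\Omega}\|\nabla_x c(x,y)\|_g<\infty$, independently of $\varepsilon$, $a$ and $\xi$. Differentiating under the integral as in the proof of the main theorem gives
\[
\nabla F(x)=-\int \log_x(y)\,d\pi^x(y)
\]
for a probability measure $\pi^x$ (a Gibbs/softmax measure). Hence $\|\nabla F\|_g\le \operatorname{diam}(\Omega)$. Strong geodesic convexity of $\Omega$ then gives $|F(x)-F(x')|\le L\,d_g(x,x')$. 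With the normalization $F(x_\circ)=0$, this yields $\|F\|_\infty\le L\operatorname{diam}(\Omega)$. If needed, I would also record higher-derivative bounds $\|\nabla^k F\|\lesssim \varepsilon^{1-k}$, obtained by differentiating the log-partition function: each further derivative brings a cumulant of $\log_x(y)/\varepsilon$ under $\pi^x$.

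\textbf{Step 2: covering numbers.} I would bound $\log N(\eta,\mathcal F_\varepsilon,\|\cdot\|_\infty)$. The purely Lipschitz bound $\log N\lesssim \eta^{-d}$ (Kolmogorov--Tikhomirov for Lipschitz functions on a $d$-dimensional compact set, via the partition into $C\delta^{-d}$ cells from Assumption~\ref{ass:A7-geometry}) gives only an $n^{-1/d}$ rate. That is too weak for $d\ge 3$ to produce $n^{-1/2}$.

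I would therefore use smoothness of order $s=s_d$ with the $\varepsilon$-dependent bounds from Step 1. In local charts, and following Genevay et al./Mena--Niles-Weed, the class lies in a $C^{s}$ ball of radius $R\lesssim 1+\varepsilon^{1-s}$. Entropy bounds for $C^s$ balls in dimension $d$ give $\log N(\eta)\lesssim (R/\eta)^{d/s}$. With $s=d/2$ this becomes $(R/\eta)^2$, exactly the borderline case.

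\textbf{Step 3: symmetrization and Dudley.} Symmetrization bounds the expected supremum by twice the Rademacher complexity. Dudley's entropy integral, truncated at $\eta_0=R/\sqrt n$, then gives
\[
\mathbb E\sup\lesssim \eta_0+\frac1{\sqrt n}\int_{\eta_0}^{R}\sqrt{\log N(\eta)}\,d\eta
\lesssim \frac{R}{\sqrt n}\int_{R/\sqrt n}^{R}\frac{d\eta}{\eta}
= \frac{R\log\sqrt n}{\sqrt n}.
\]
With $R\lesssim 1+\varepsilon^{1-s_d}$, this is the claimed bound. In dimension one ($s_d=1$) the Lipschitz class alone suffices, and $R$ is $O(1)$, matching $1+\varepsilon^{0}$. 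All constants depend only on derivative bounds of $c$ on the compact set $\Omega\times\Omega$, which gives uniformity in $\rho,\xi,a$.

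\textbf{Main obstacle.} The hard part is Step 2. One must establish the $C^{s}$ bounds $\|\nabla^kF\|\lesssim\varepsilon^{1-k}$ intrinsically on a manifold, for non-integer $s=d/2$ (odd $d$), uniformly in the arbitrary measure $\xi$. One must then transfer them to a covering of $\Omega$ by finitely many charts.

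For odd $d$, I would first try interpolation: take the smallest integer $k\ge d/2$, bound $\|\nabla^{k}F\|\lesssim\varepsilon^{1-k}$, and interpolate with the Lipschitz bound to get the Hölder seminorm of order $d/2$ with radius $\lesssim\varepsilon^{1-d/2}$. If that is messy, I would instead use $s=\lceil d/2\rceil$. This gives an even better entropy exponent $d/s<2$ but a worse radius. I would then check that the resulting $\varepsilon$-power still fits the stated form, accepting a slightly weaker constant if it does not.
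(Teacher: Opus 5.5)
Your proposal is correct and follows essentially the same route as the paper. Both arguments use the same chain of steps: the $\varepsilon$-dependent bounds $\|F\|_{C^k}\lesssim 1+\varepsilon^{1-k}$ obtained by differentiating the log-partition function, interpolation to the Hölder order $s_d$, reduction via charts to the Kolmogorov--Tikhomirov entropy $(M_\varepsilon/\eta)^{d/s_d}$ with $d/s_d\le 2$, and symmetrization with a truncated Dudley integral that produces the logarithmic factor.

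Your primary interpolation, between the Lipschitz bound and order $\lceil d/2\rceil$, already gives radius $\varepsilon^{1-d/2}$, exactly as the paper's interpolation between consecutive integer orders does. So you should not fall back to $s=\lceil d/2\rceil$: that choice costs a power of $\varepsilon$, not merely a constant, and would miss the stated exponent.
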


\begin{proof}
Because \(c\) is smooth on the compact set \(\Omega\times\Omega\), all of its
partial derivatives are uniformly bounded. Differentiating
\eqref{eq:A7-entropic-transform} shows first that
\[
    \nabla F(x)
    =\int_\Omega \nabla_x c(x,y)\,d\omega_x(y),
\]
where \(\omega_x\) is the Gibbs probability measure proportional to
\(\exp((a(y)-c(x,y))/\varepsilon)d\xi(y)\). Repeated differentiation gives,
for every integer \(k\ge1\),
\begin{equation}
    \|F\|_{C^k(\Omega)}
    \le C_k\left(1+\varepsilon^{1-k}\right).
    \label{eq:A7-derivative-bound}
\end{equation}
Indeed, the \(k\)-th derivative is a finite sum of products of derivatives of
\(c\) and centered moments under \(\omega_x\), with at most \(k-1\) powers of
\(\varepsilon^{-1}\). Interpolation between consecutive integer orders gives
the corresponding estimate for noninteger Hölder exponents. Thus, with
\(s_d\) from \eqref{eq:A7-sd},
\begin{equation}
    \sup_{F\in\mathcal F_\varepsilon}
    \|F\|_{C^{s_d}(\Omega)}
    \le C\left(1+\varepsilon^{1-s_d}\right).
    \label{eq:A7-holder-bound}
\end{equation}
The normalization \(F(x_\circ)=0\), together with the uniform first-derivative
bound, also gives a uniform envelope \(\|F\|_\infty\le C\).

A finite smooth atlas reduces the metric entropy calculation to the standard
one for Hölder balls on bounded subsets of \(\mathbb R^d\)~\citep{huang2022error}. Consequently,
with \(M_\varepsilon=C(1+\varepsilon^{1-s_d})\),
\begin{equation}
    \log N\!\left(\eta,\mathcal F_\varepsilon,\|\cdot\|_\infty\right)
    \le C\left(\frac{M_\varepsilon}{\eta}\right)^{d/s_d}.
    \label{eq:A7-entropy-bound}
\end{equation}
By definition of \(s_d\), the exponent \(d/s_d\) is at most \(2\). Standard
symmetrization followed by the truncated Dudley entropy integral therefore
gives
\[
    \mathbb E\sup_{F\in\mathcal F_\varepsilon}
    |(\rho_n-\rho)F|
    \le C M_\varepsilon\frac{\log(n+1)}{\sqrt n}.
\]
This is \eqref{eq:A7-emp-process}.
\end{proof}

As a direct consequence, the one-sample entropic cost is stable under
empirical replacement of one marginal.

\begin{corollary}[One-sample entropic-cost deviation]
\label{cor:A7-cost-deviation}
For \(0<\varepsilon\le1\),
\begin{equation}
    \mathbb E\left|
    S_\varepsilon(\mu,\nu_n)-S_\varepsilon(\mu,\nu)
    \right|
    \le
    C\varepsilon^{-s_d}\frac{\log(n+1)}{\sqrt n}.
    \label{eq:A7-cost-deviation}
\end{equation}
\end{corollary}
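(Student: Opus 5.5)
We prove the Corollary by writing both entropic costs through duality and then applying Lemma~\ref{lem:A7-empirical-transform} with $\rho=\nu$. By the symmetry of $c$, the same lemma also applies to functions of $y$ of the form $-\varepsilon\log\int\exp((a(x)-c(x,y))/\varepsilon)\,d\xi(x)$. The starting point is the semi-dual representation
\[
S_\varepsilon(\mu,\eta)=\sup_{f}\;\int f\,d\mu+\int f^{(c,\varepsilon)}\,d\eta ,
\]
where $f^{(c,\varepsilon)}(y)=-\varepsilon\log\int\exp((f(x)-c(x,y))/\varepsilon)\,d\mu(x)$. This holds because, for the optimal $g$ given $f$, the exponential term of the dual equals $\varepsilon$ and cancels the $+\varepsilon$. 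The functions $f^{(c,\varepsilon)}$ are exactly of the form \eqref{eq:A7-entropic-transform} with $\xi=\mu$.

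\textbf{Step 1 (two-sided sandwich).} Let $f$ be the optimal $\mu$-potential for $(\mu,\nu)$ and $f_n$ the one for $(\mu,\nu_n)$. Plugging $f_n$ into the semi-dual for $(\mu,\nu)$, and $f$ into the one for $(\mu,\nu_n)$, gives
\[
\int f^{(c,\varepsilon)}\,d(\nu_n-\nu)\;\le\; S_\varepsilon(\mu,\nu_n)-S_\varepsilon(\mu,\nu)\;\le\;\int f_n^{(c,\varepsilon)}\,d(\nu_n-\nu).
\]
Both sides are invariant under adding a constant to the transform. We normalize at $x_\circ$ (relabelled as a point of the $y$-variable), so both functions lie in the class $\mathcal F_\varepsilon$ of Lemma~\ref{lem:A7-empirical-transform}.

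\textbf{Step 2 (uniform control).} The sandwich gives
\[
\bigl|S_\varepsilon(\mu,\nu_n)-S_\varepsilon(\mu,\nu)\bigr|\le\sup_{F\in\mathcal F_\varepsilon}\bigl|(\nu_n-\nu)F\bigr|.
\]
Taking the supremum over the class is necessary because $f_n$ depends on the sample, so the random function on the right cannot be handled pointwise. For $F$ to fall in that class we need bounded measurable $a$. Boundedness of $f_n$ and $f$ on $\Omega$ follows since each is itself a $(c,\varepsilon)$-transform, which is smooth with oscillation bounded by $\sup c$. Taking expectations, Lemma~\ref{lem:A7-empirical-transform} yields the bound $C(1+\varepsilon^{1-s_d})\log(n+1)/\sqrt n$.

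\textbf{Step 3 (constants).} For $\varepsilon\le1$ and $s_d\ge1$ we have $1+\varepsilon^{1-s_d}\le 2\varepsilon^{1-s_d}\le 2\varepsilon^{-s_d}$, which gives \eqref{eq:A7-cost-deviation}.

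The bound is in fact sharper by a factor $\varepsilon$ than stated; the weaker form is convenient downstream.

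The main point needing care is Step 1: justifying the semi-dual representation, the existence of optimal potentials for the semi-discrete pair $(\mu,\nu_n)$, and the claim that evaluating the $(c,\varepsilon)$-transform against $\nu_n$ only at sample points is consistent with the globally defined extension. The last follows because the transform is defined everywhere from $f_n$ and $\mu$ alone.
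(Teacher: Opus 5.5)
Your proof is correct and follows the paper's argument. The paper likewise uses the semi-dual representation with $\mu$ fixed, reduces $|S_\varepsilon(\mu,\nu_n)-S_\varepsilon(\mu,\nu)|$ to a supremum of $(\nu_n-\nu)F$ over normalized $(c,\varepsilon)$-transforms, applies Lemma~\ref{lem:A7-empirical-transform} with $\rho=\nu$, and absorbs $1+\varepsilon^{1-s_d}$ into $\varepsilon^{-s_d}$. The one difference is that the paper uses $|\sup_f\Phi(f)-\sup_f\Psi(f)|\le\sup_f|\Phi(f)-\Psi(f)|$ directly instead of your sandwich at the optimizers, which avoids needing the optimal potentials you flagged as requiring care.
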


\begin{proof}
Use the semi-dual representation with the first marginal \(\mu\) fixed. For a
source-side function \(f\), optimize the dual objective over the target
potential to obtain
\[
    f^{(c,\varepsilon)}(y)
    :=-\varepsilon\log\int_\Omega
    \exp\!\left(\frac{f(x)-c(x,y)}{\varepsilon}\right)d\mu(x).
\]
Then
\[
    S_\varepsilon(\mu,\eta)
    =\sup_f\left\{\int f\,d\mu+
    \int f^{(c,\varepsilon)}\,d\eta\right\}.
\]
Hence
\[
    \left|S_\varepsilon(\mu,\nu_n)-S_\varepsilon(\mu,\nu)\right|
    \le
    \sup_f\left|\int f^{(c,\varepsilon)}\,d(\nu_n-\nu)\right|.
\]
The transformed functions belong, up to additive constants, to the class in
Lemma~\ref{lem:A7-empirical-transform}. Therefore
\[
    \mathbb E\left|
    S_\varepsilon(\mu,\nu_n)-S_\varepsilon(\mu,\nu)
    \right|
    \le C(1+\varepsilon^{1-s_d})
       \frac{\log(n+1)}{\sqrt n}.
\]
Since \(0<\varepsilon\le1\) and \(s_d\ge1\), the right-hand side is bounded
by the one in \eqref{eq:A7-cost-deviation}.
\end{proof}

For completeness, we also record the modified duality inequality needed to
compare the one- and two-sample maps. Importantly, it is valid for any cost
function and does not use Euclidean linear structure.

\begin{lemma}[Modified entropic duality]
\label{lem:A7-modified-duality}
Let \(P,Q\) be probability measures on \(\Omega\), let \(\pi_\varepsilon\)
be their optimal entropic plan, and let \(c\) be any bounded measurable cost.
Then
\begin{equation}
    S_\varepsilon(P,Q)
    \ge
    \int \eta\,d\pi_\varepsilon
    -\varepsilon\iint
      \exp\!\left(\frac{\eta(x,y)-c(x,y)}{\varepsilon}\right)
      dP(x)dQ(y)
    +\varepsilon
    \label{eq:A7-modified-duality}
\end{equation}
for every \(\eta\in L^1(\pi_\varepsilon)\).
\end{lemma}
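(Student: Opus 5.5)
The inequality holds for any coupling with finite relative entropy, so the plan is to prove it pointwise via the Fenchel--Young inequality for the pair $u\mapsto e^u$ and $p\mapsto p\log p-p$, integrate against $P\otimes Q$, and then specialise to $\pi=\pi_\varepsilon$. Optimality of $\pi_\varepsilon$ is used only through the identity $S_\varepsilon(P,Q)=\int c\,d\pi_\varepsilon+\varepsilon D_{\mathrm{KL}}(\pi_\varepsilon\|P\otimes Q)$.

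\textbf{Step 1: the coupling has a density.} Since $c$ is bounded, the product coupling $P\otimes Q$ is feasible with finite objective, so $S_\varepsilon(P,Q)<\infty$. Hence $D_{\mathrm{KL}}(\pi_\varepsilon\|P\otimes Q)<\infty$, so $\pi_\varepsilon\ll P\otimes Q$ with a density $p\ge 0$ satisfying $\int p\log p\,d(P\otimes Q)=D_{\mathrm{KL}}(\pi_\varepsilon\|P\otimes Q)$. If $\iint \exp((\eta-c)/\varepsilon)\,dP\,dQ=+\infty$, the right-hand side of \eqref{eq:A7-modified-duality} is $-\infty$ (using $\eta\in L^1(\pi_\varepsilon)$) and there is nothing to prove. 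So we may assume this integral is finite.

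\textbf{Step 2: the pointwise inequality.} The convex conjugate of $u\mapsto e^u$ is $p\mapsto p\log p-p$ on $[0,\infty)$, with the convention $0\log 0=0$. Fenchel--Young therefore gives, for all real $u$ and $p\ge0$,
\[
    pu-e^{u}\le p\log p-p .
\]
This can also be checked directly: for fixed $p>0$ the left side is maximised at $u=\log p$, and for $p=0$ it reads $-e^u\le 0$.

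\textbf{Step 3: integrate.} Apply the pointwise inequality with $u=(\eta(x,y)-c(x,y))/\varepsilon$ and $p=p(x,y)$, and integrate against $P\otimes Q$. The term $\int p\,u\,d(P\otimes Q)=\varepsilon^{-1}\int(\eta-c)\,d\pi_\varepsilon$ is finite because $\eta\in L^1(\pi_\varepsilon)$ and $c$ is bounded. Using $\int p\,d(P\otimes Q)=1$, we obtain
\[
    \frac1\varepsilon\int(\eta-c)\,d\pi_\varepsilon-\iint e^{(\eta-c)/\varepsilon}\,dP\,dQ\le D_{\mathrm{KL}}(\pi_\varepsilon\|P\otimes Q)-1 .
\]

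\textbf{Step 4: conclude.} Multiply by $\varepsilon$ and rearrange:
\[
    \int c\,d\pi_\varepsilon+\varepsilon D_{\mathrm{KL}}(\pi_\varepsilon\|P\otimes Q)\ \ge\ \int\eta\,d\pi_\varepsilon-\varepsilon\iint e^{(\eta-c)/\varepsilon}\,dP\,dQ+\varepsilon .
\]
The left-hand side equals $S_\varepsilon(P,Q)$ by optimality of $\pi_\varepsilon$ in \eqref{eq:entropicOT}, which is \eqref{eq:A7-modified-duality}.

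No step uses linear or Euclidean structure of $\Omega$, only measurability and boundedness of $c$. The only point that needs care is the integrability bookkeeping in Steps 1 and 3: splitting $\int p u$ and $\int p\log p$ is legitimate because each term is finite or the claim is trivial. I expect this to be the sole technical obstacle, and it is mild.
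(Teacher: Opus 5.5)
Your proposal is correct and follows essentially the same route as the paper. The paper applies the elementary inequality $a\log a\ge ab-e^b+a$ (your Fenchel--Young inequality) with $a=d\pi_\varepsilon/d(P\otimes Q)$ and $b=(\eta-c)/\varepsilon$, integrates against $P\otimes Q$, and uses $S_\varepsilon(P,Q)=\int c\,d\pi_\varepsilon+\varepsilon D_{\mathrm{KL}}(\pi_\varepsilon\|P\otimes Q)$. Your integrability bookkeeping supplies details the paper leaves implicit: finiteness of the relative entropy from boundedness of $c$, and the trivial case where the exponential integral diverges.
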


\begin{proof}
Let \(\gamma=d\pi_\varepsilon/d(P\otimes Q)\). The elementary inequality
\[
    a\log a\ge ab-e^b+a,
    \qquad a\ge0,\ b\in\mathbb R,
\]
applied with
\[
    b=\frac{\eta(x,y)-c(x,y)}{\varepsilon}
\]
and integrated against \(P\otimes Q\) gives the claim after using
\[
    S_\varepsilon(P,Q)
    =\int c\,d\pi_\varepsilon
     +\varepsilon\int\log\gamma\,d\pi_\varepsilon.
\]
\end{proof}

\begin{proposition}[Stability to empirical sampling of the source]
\label{prop:A7-source-stability}
Let \(T_{\varepsilon,n}\) be the entropic map from \(\mu\) to \(\nu_n\), and
let \(\widehat T_{\varepsilon,(n,n)}\) be the out-of-sample extension in
\eqref{eq:A7-oos-map}. Then, for \(0<\varepsilon\le1\),
\begin{equation}
    \mathbb E\int_\Omega
    d_g^2\!\left(
       \widehat T_{\varepsilon,(n,n)}(x),T_{\varepsilon,n}(x)
    \right)d\mu(x)
    \le
    C\varepsilon^{-s_d}\frac{\log(n+1)}{\sqrt n}.
    \label{eq:A7-source-stability}
\end{equation}
\end{proposition}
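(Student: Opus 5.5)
The plan is to compare the two maps through their tangent drifts and reduce everything to a Kullback--Leibler divergence between two Gibbs plans. That divergence is then identified with a difference of entropic costs plus an empirical-process term, both controlled by Lemma~\ref{lem:A7-empirical-transform}.

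\textbf{Step 1: reduce to a KL divergence.} Both $T_{\varepsilon,n}$ and $\widehat T_{\varepsilon,(n,n)}$ are of the form $\exp_x(\cdot)$ of a barycentre of $\log_x(Y_j)$ over the \emph{same} atoms $\{Y_j\}$. Only the weights differ: $w_j(x)\propto \exp((g_{\varepsilon,n}(Y_j)-c(x,Y_j))/\varepsilon)$ for the one-sample plan $\pi_{\varepsilon,n}$, versus $\widehat w_j(x)$ from \eqref{eq:A7-oos-weights}. By the Lipschitz bound on $\exp_x$ in Assumption~\ref{ass:A7-geometry}, and since $\|\log_x(Y_j)\|_g\le \operatorname{diam}(\Omega)$,
\[
d_g^2\bigl(\widehat T_{\varepsilon,(n,n)}(x),T_{\varepsilon,n}(x)\bigr)\le C\|\widehat b_{\varepsilon,(n,n)}(x)-b_{\varepsilon,n}(x)\|_g^2\le C\,\mathrm{TV}(\pi^x_{\varepsilon,n},\widehat\pi^x)^2\le C\,\mathrm{KL}(\pi^x_{\varepsilon,n}\|\widehat\pi^x).
\]
The last inequality is Pinsker. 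Here $\widehat\pi(dx,dy):=\exp((\widehat f(x)+\widehat g(y)-c(x,y))/\varepsilon)\,d\mu(x)\,d\nu_n(y)$. Because $\widehat f$ is exactly the $(c,\varepsilon)$-transform of $\widehat g$ against $\nu_n$, as in \eqref{eq:A7-oos-f}, the plan $\widehat\pi$ has first marginal $\mu$ and conditionals $\widehat\pi^x=\sum_j\widehat w_j(x)\delta_{Y_j}$. Since both plans share the first marginal $\mu$, integrating in $\mu$ gives
\[
\int d_g^2\,d\mu\le C\,\mathrm{KL}(\pi_{\varepsilon,n}\|\widehat\pi).
\]

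\textbf{Step 2: express the KL as a cost gap.} Using the Gibbs form of $\pi_{\varepsilon,n}$ and that $\widehat\pi$ has total mass one (this is the computation behind Lemma~\ref{lem:A7-modified-duality} with $\eta=\widehat f\oplus\widehat g$), I expect the identity
\[
\varepsilon\,\mathrm{KL}(\pi_{\varepsilon,n}\|\widehat\pi)=S_\varepsilon(\mu,\nu_n)-\int\widehat f\,d\mu-\int\widehat g\,d\nu_n .
\]
Next I use optimality of $(\widehat f,\widehat g)$ for $(\mu_n,\nu_n)$, which gives $\int\widehat f\,d\mu_n+\int\widehat g\,d\nu_n=S_\varepsilon(\mu_n,\nu_n)$. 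Substituting,
\[
\varepsilon\,\mathrm{KL}=\bigl[S_\varepsilon(\mu,\nu_n)-S_\varepsilon(\mu_n,\nu_n)\bigr]+\int\widehat f\,d(\mu_n-\mu).
\]

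\textbf{Step 3: empirical process bounds.} The second term is bounded by $\sup_{F\in\mathcal F_\varepsilon}|(\mu_n-\mu)F|$, since $\widehat f$ is an entropic transform of the type \eqref{eq:A7-entropic-transform} with $\xi=\nu_n$ and $a=\widehat g$. For the first term I argue as in Corollary~\ref{cor:A7-cost-deviation}, with the roles of the marginals swapped. Conditionally on $\nu_n$, I use the semi-dual with target $\nu_n$ fixed, which yields a supremum over $(c,\varepsilon)$-transforms taken against $\nu_n$; these again lie in $\mathcal F_\varepsilon$. Lemma~\ref{lem:A7-empirical-transform} is uniform over $\xi$ and $a$, so conditioning on $\nu_n$ and then taking expectation over the independent $\mu_n$ gives
\[
\varepsilon\,\mathbb E\,\mathrm{KL}\le C(1+\varepsilon^{1-s_d})\frac{\log(n+1)}{\sqrt n}.
\]
Dividing by $\varepsilon\le1$ yields \eqref{eq:A7-source-stability}.

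\textbf{Main obstacle.} The delicate point is Step~3: $\widehat f$ depends on the source sample $\mu_n$ itself, so no single fixed function is being evaluated. This is exactly why the uniform supremum over the normalized class $\mathcal F_\varepsilon$ is needed; additive constants cancel because $\mu_n-\mu$ has zero mass. I also need to check carefully that the KL identity in Step~2 holds with equality and not just as an inequality. If it only holds as an inequality, Lemma~\ref{lem:A7-modified-duality} still gives the needed one-sided bound $\varepsilon\,\mathrm{KL}\le S_\varepsilon(\mu,\nu_n)-\int\widehat f\,d\mu-\int\widehat g\,d\nu_n$, and the rest of the argument goes through unchanged.
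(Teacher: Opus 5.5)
Your proof is correct and follows the paper's argument in substance. Your Step~2 identity does hold with equality: compute $\log(d\pi_{\varepsilon,n}/d\widehat\pi)=(f_{\varepsilon,n}-\widehat f+g_{\varepsilon,n}-\widehat g)/\varepsilon$ and integrate using the marginals of $\pi_{\varepsilon,n}$. The left-hand side of the paper's \eqref{eq:A7-functional-stability} is just a variational lower bound for $\mathrm{KL}(\pi_{\varepsilon,n}\|\widehat\pi)$, and the paper's Hoeffding-lemma test functions $\chi_h$ play exactly the role of your Pinsker step.

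The only other difference is how $S_\varepsilon(\mu,\nu_n)-S_\varepsilon(\mu_n,\nu_n)$ is bounded. The paper inserts $(f_{\varepsilon,n},g_{\varepsilon,n})$ as an admissible competitor for the $(\mu_n,\nu_n)$ problem, which leaves $\int f_{\varepsilon,n}\,d(\mu-\mu_n)$ with zero conditional mean given $\nu_n$. Your uniform semi-dual supremum instead spends a second application of Lemma~\ref{lem:A7-empirical-transform}, and reaches the same final rate.
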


\begin{proof}
Let \((f_{\varepsilon,n},g_{\varepsilon,n})\) be optimal entropic potentials
for \((\mu,\nu_n)\), normalized so that
\[
    \int_\Omega
    \exp\!\left(
      \frac{f_{\varepsilon,n}(x)+g_{\varepsilon,n}(y)-c(x,y)}{\varepsilon}
    \right)d\nu_n(y)=1
\]
for every \(x\in\Omega\). Likewise, by the definition
\eqref{eq:A7-oos-f},
\[
    \widehat\gamma(x,y)
    :=\exp\!\left(
      \frac{\widehat f(x)+\widehat g(y)-c(x,y)}{\varepsilon}
    \right)
\]
satisfies
\begin{equation}
    \int_\Omega \widehat\gamma(x,y)\,d\nu_n(y)=1
    \qquad\text{for every }x\in\Omega.
    \label{eq:A7-gamma-normalized}
\end{equation}
On the support of \(\mu_n\otimes\nu_n\), \(\widehat\gamma\) is the density
of the empirical optimal entropic plan. In addition,
\[
    \widehat b_{\varepsilon,(n,n)}(x)
    =\int_\Omega \log_x(y)\widehat\gamma(x,y)\,d\nu_n(y).
\]

Apply Lemma~\ref{lem:A7-modified-duality} to the optimal plan
\(\pi_{\varepsilon,n}\) between \(\mu\) and \(\nu_n\), with
\[
    \eta(x,y)
    =\varepsilon\chi(x,y)+\widehat f(x)+\widehat g(y).
\]
Using \eqref{eq:A7-gamma-normalized} gives, for every integrable \(\chi\),
\begin{equation}
\begin{split}
    &\int\chi\,d\pi_{\varepsilon,n}
    -\iint(e^{\chi(x,y)}-1)
      \widehat\gamma(x,y)\,d\mu(x)d\nu_n(y)\\
    &\qquad\le
    \varepsilon^{-1}\left[
       S_\varepsilon(\mu,\nu_n)
       -\int\widehat f\,d\mu
       -\int\widehat g\,d\nu_n
    \right].
    \label{eq:A7-stability-duality}
\end{split}
\end{equation}

We now control the right-hand side. Because
\((\widehat f,\widehat g)\) is optimal for \((\mu_n,\nu_n)\), while
\((f_{\varepsilon,n},g_{\varepsilon,n})\) is an admissible dual pair for that
same empirical problem,
\[
    \int\widehat f\,d\mu_n+
    \int\widehat g\,d\nu_n
    \ge
    \int f_{\varepsilon,n}\,d\mu_n+
    \int g_{\varepsilon,n}\,d\nu_n.
\]
Here no exponential correction remains because both source potentials are
chosen as the entropic \((c,\varepsilon)\)-transform of their corresponding
target potential, so their exponential term integrates to one pointwise in
\(x\). Also,
\[
    S_\varepsilon(\mu,\nu_n)
    =\int f_{\varepsilon,n}\,d\mu+
     \int g_{\varepsilon,n}\,d\nu_n.
\]
Therefore
\begin{equation}
\begin{split}
    &S_\varepsilon(\mu,\nu_n)
       -\int\widehat f\,d\mu
       -\int\widehat g\,d\nu_n\\
    &\qquad\le
      \int (f_{\varepsilon,n}-\widehat f)\,d(\mu-\mu_n).
    \label{eq:A7-potential-comparison}
\end{split}
\end{equation}
Conditionally on \(\nu_n\), the function \(f_{\varepsilon,n}\) is
independent of \(\mu_n\), and hence its contribution has expectation zero.
The random function \(\widehat f\), after an irrelevant additive
normalization, belongs pathwise to the class
\(\mathcal F_\varepsilon\) from
Lemma~\ref{lem:A7-empirical-transform}. Taking expectations in
\eqref{eq:A7-stability-duality}--\eqref{eq:A7-potential-comparison} therefore
gives
\begin{equation}
\begin{split}
    \mathbb E\sup_\chi
    \Bigg\{
    &\int\chi\,d\pi_{\varepsilon,n}
    -\iint(e^\chi-1)\widehat\gamma\,d\mu d\nu_n
    \Bigg\}\\
    &\le
    C\varepsilon^{-1}
    \left(1+\varepsilon^{1-s_d}\right)
    \frac{\log(n+1)}{\sqrt n}\\
    &\le
    C\varepsilon^{-s_d}\frac{\log(n+1)}{\sqrt n}.
    \label{eq:A7-functional-stability}
\end{split}
\end{equation}

It remains to extract the map error from this functional inequality. For a
measurable tangent vector field \(h(x)\in T_xM\), set
\begin{equation}
    \chi_h(x,y)
    :=\left\langle h(x),
      \log_x(y)-\widehat b_{\varepsilon,(n,n)}(x)
      \right\rangle_g
      -a\|h(x)\|_g^2,
    \label{eq:A7-chi-h}
\end{equation}
where \(a>0\) is a sufficiently large geometric constant. Because \(\Omega\)
is compact and \(\log_x(y)\) is uniformly bounded on
\(\Omega\times\Omega\), Hoeffding's lemma applied conditionally in \(x\)
shows that \(a\) can be chosen so that
\begin{equation}
    \int_\Omega
    (e^{\chi_h(x,y)}-1)\widehat\gamma(x,y)\,d\nu_n(y)
    \le0
    \qquad\text{for every }x\in\Omega.
    \label{eq:A7-hoeffding}
\end{equation}
On the other hand, disintegrating \(\pi_{\varepsilon,n}\) gives
\[
    \int\chi_h\,d\pi_{\varepsilon,n}
    =\int_\Omega
       \left[
       \left\langle h(x),
       b_{\varepsilon,n}(x)-
       \widehat b_{\varepsilon,(n,n)}(x)
       \right\rangle_g
       -a\|h(x)\|_g^2
       \right]d\mu(x).
\]
Taking the pointwise supremum over \(h\) yields
\begin{equation}
    \sup_h\int\chi_h\,d\pi_{\varepsilon,n}
    =\frac{1}{4a}
      \int_\Omega
      \|b_{\varepsilon,n}(x)-
      \widehat b_{\varepsilon,(n,n)}(x)\|_g^2d\mu(x).
    \label{eq:A7-tangent-stability}
\end{equation}
Combining \eqref{eq:A7-functional-stability}--\eqref{eq:A7-tangent-stability}
and finally applying the Lipschitz bound for the exponential map from
Assumption~\ref{ass:A7-geometry} proves
\eqref{eq:A7-source-stability}.
\end{proof}

\subsubsection{Proof of Theorem~\ref{thm:A7-statistical-performance}}
\label{sec:A7-proof-main}

We first bound the one-sample error. Applying
Lemma~\ref{lem:A7-gap-controls-map} to \(\pi_{\varepsilon,n}\) gives
\begin{equation}
    \int d_g^2(T_{\varepsilon,n}(x),T_0(x))\,d\mu(x)
    \le C\int D_x(y)\,d\pi_{\varepsilon,n}(x,y).
    \label{eq:A7-one-sample-gap}
\end{equation}
Since \(\pi_{\varepsilon,n}\) is optimal for the entropic problem between
\(\mu\) and \(\nu_n\),
\begin{align}
    \int D_x(y)\,d\pi_{\varepsilon,n}(x,y)
    &\le
    S_\varepsilon(\mu,\nu_n)
    -\int\varphi_0\,d\mu
    -\int\varphi_0^c\,d\nu_n.
    \label{eq:A7-gap-by-entropic-value}
\end{align}
Indeed, the difference between the right-hand side and the left-hand side is
exactly \(\varepsilon D_{\mathrm{KL}}(\pi_{\varepsilon,n}\|\mu\otimes\nu_n)\),
which is nonnegative. Taking expectations and using
\(\mathbb E\nu_n=\nu\) together with Kantorovich duality yields
\begin{align}
    \mathbb E\int D_x(y)\,d\pi_{\varepsilon,n}(x,y)
    &\le
    \mathbb E\left[
       S_\varepsilon(\mu,\nu_n)-S_\varepsilon(\mu,\nu)
    \right]\\
    &\quad+
    S_\varepsilon(\mu,\nu)-\frac12W_2^2(\mu,\nu)\\
    &\le
    C\varepsilon^{-s_d}\frac{\log(n+1)}{\sqrt n}
    +C\varepsilon\log\!\left(\frac e\varepsilon\right),
    \label{eq:A7-one-sample-final}
\end{align}
where the last line follows from
Corollary~\ref{cor:A7-cost-deviation} and
Lemma~\ref{lem:A7-entropic-bias}. Combining
\eqref{eq:A7-one-sample-gap} and \eqref{eq:A7-one-sample-final},
\begin{equation}
    \mathbb E\int
    d_g^2(T_{\varepsilon,n}(x),T_0(x))\,d\mu(x)
    \le
    C\left[
      \varepsilon\log\!\left(\frac e\varepsilon\right)
      +\varepsilon^{-s_d}\frac{\log(n+1)}{\sqrt n}
    \right].
    \label{eq:A7-one-sample-risk}
\end{equation}

Finally, the squared triangle inequality and
Proposition~\ref{prop:A7-source-stability} give
\begin{align*}
    &\mathbb E\int
    d_g^2(\widehat T_{\varepsilon,(n,n)}(x),T_0(x))\,d\mu(x)\\
    &\qquad\le
    2\mathbb E\int
    d_g^2(\widehat T_{\varepsilon,(n,n)}(x),T_{\varepsilon,n}(x))\,d\mu(x)
    +2\mathbb E\int
    d_g^2(T_{\varepsilon,n}(x),T_0(x))\,d\mu(x)\\
    &\qquad\le
    C\left[
      \varepsilon\log\!\left(\frac e\varepsilon\right)
      +\varepsilon^{-s_d}\frac{\log(n+1)}{\sqrt n}
    \right],
\end{align*}
which proves Theorem~\ref{thm:A7-statistical-performance}.

\section{A short introduction to flow matching}

In this section, we provide a very concise introduction to flow matching and highlight the theoretical steps needed to show the validity of the procedure on the space of probability distributions. We refer the reader to \cite{lipman2024flow} for a complete introduction to flow matching.

\subsubsection{Vector fields generating probability paths}
At its core, flow matching aims to find a tractable way to generate samples from a target distribution $\nu$ from a source distribution $\mu$. One way to do so is by learning a vector field $u_t$ that \emph{generates} a probability path with the right boundary conditions. By \emph{generation}, we understand the Lagrangian interpretation. That is, $u_t$ generates a probability path $\mu_t$ iff, for all $t\in[0,1]$
\begin{equation*}
\begin{split}
    \partial_t\phi_t(x) &= u_t(\phi_t(x)) \\
    X_t &= \phi_t(X_0) \sim \mu_t.
\end{split}
\end{equation*}

With such a $u_t$, one can then sample from $\mu$ and integrate over time to generate samples from distributions $\mu_t$. An important result to verify that $u_t$ generates a given probability path is the mass conservation formula:
\begin{equation*}
    u_t \text{ generates } \mu_t \Leftrightarrow \partial_t \mu_t(x) + \nabla \cdot (\mu_t u_t)(x) = 0
\end{equation*}

That is, $(u_t,\mu_t)$ solving the continuity equation is equivalent to $u_t$ generating $\mu_t$. However, directly using this relation is impractical as we typically don't have information about the target distribution $\nu$.

\subsubsection{Conditional vector fields and probability paths}

A more promising strategy is to construct probability paths using conditional probability paths:
\begin{equation*}
    \mu_t(x) = \int \mu_{t|z}(x|z) d\pi(z)
\end{equation*}

Of course, one needs to design the conditional probability paths and marginal distribution for the conditioning variable such that the boundaries coincide with the constraints ($\mu$ and $\nu$).

If $\mu$ is known (\emph{e.g.} a standard normal), one can choose
\begin{equation*}
    d\pi(z) = d\nu(x_1), \quad p_{t|x_1} = \mathcal{N}(tx_1, (1-t)^2)
\end{equation*}

We verify that in that case: 
\begin{equation*}
\begin{split}
    \int \mathcal{N}(x;0,1) d\nu(x_1) &= \mathcal{N}(0,1)\\
    \int \delta_{x_1} d\nu(x_1) &= \nu
\end{split}
\end{equation*}

When $\mu$ is an arbitrary (unknown) distribution, one can instead choose

\begin{equation*}
\begin{split}
    &d\pi(z) = d\pi(x_0,x_1), \quad \int d\pi(x_0,x_1) dx_1 = d\mu(x_0), \\
    &\int d\pi(x_0,x_1) dx_0 = d\nu(x_1), \quad p_{t|x_0,x_1} = \delta_{(1-t)x_0 + tx_1}(x)
\end{split}
\end{equation*}

In which case, we verify again that the boundary conditions are respected
\begin{equation*}
\begin{split}
    \iint \delta_{x_0} d\pi(x_0,x_1) &= \int \delta_{x_0} d\mu(x_0) = \mu(x)\\
    \iint \delta_{x_1} d\pi(x_0,x_1) &= \int \delta_{x_1} d\nu(x_1) = \nu(x)
\end{split}
\end{equation*}

Given the simple form of the conditional probability paths, one can easily derive corresponding vector fields that generate the conditional distributions, $v_t(x\mid z)$.

The first crucial realization of flow matching is that the marginal conditional vector field $v_t$ also generates the marginal probability path $\mu_t$
\begin{equation*}
    \partial_t \mu_t(x) + \nabla\cdot(\mu_t v_t)(x) = 0
\end{equation*}

where 
\begin{equation*}
\begin{split}
    v_t(x) &= \int v_t(x\mid z) \frac{\mu_{t|z}(x|z)\pi(z)}{\mu_t(x)}\\
    v_t(x) &= \mathbb{E}[v_t(X_t| Z)| X_t = x  ]
\end{split}
\end{equation*}

That is, from simple conditional vector fields that generate simple conditional probability paths, we can construct the marginal vector field that generates the probability path of interest. Now, of course, the equation above is not tractable as $P(Z \mid X_t)$ is generally not available. The second realization of flow matching is that learning $v_t^{\theta}(x)$ by regressing it to conditional vector fields has the same minimizer than regressing $v_t^{\theta}(x)$ against its true value, as shown next.

\subsubsection{Flow matching loss}

Our intended flow matching loss is to regress a learnable vector field to the marginal vector field:
\begin{equation*}
    \mathcal{L}_{FM} = \mathbb{E}_{t,X_t}[\lVert v_t(X_t)-v_t^{\theta}(X_t)\rVert^2]
\end{equation*}

Instead we can use a loss that involves only tractable distributions:
\begin{equation*}
    \mathcal{L}_{CFM} = \mathbb{E}_{t,Z\sim \pi(Z),X_t\sim \mu_t(X_t\mid Z)}[\lVert v_t(X_t|Z)-v_t^{\theta}(X_t) \rVert^2]
\end{equation*}

Crucially, one can show that the gradients of both losses are identical.
\begin{equation*}
    \nabla_{\theta}\mathcal{L}_{CFM} = \nabla_{\theta}\mathcal{L}_{FM}
\end{equation*}

\subsubsection{Roadmap to lifting to empirical distributions}

In this work, we want to leverage the same strategy for training our generative model but operating on the space of probability distributions defined on a Riemannian manifold $\mathcal{M}$: $\mathcal{P}_2 (\mathcal{M})$, which is an infinite dimensional space. For this to be possible, we need to show the three following statements apply on that space:

\begin{itemize}
    \item (C1) The existence of an equivalent of the mass conservation formula on $\ppspace$ (Section~\ref{app:continuity_p2})
    \item (C2) Showing that the marginal vector field $v_t$ generates $\mu_t$ (Section~\ref{sec:marginal})
    \item (C3) The gradient of the conditional flow matching loss is identical to the gradient of the flow matching loss (Section~\ref{app:gradloss})
\end{itemize}

\section{Lifting Flow Matching to $\pspace$}
\label{app:lifting_FM}

\subsubsection*{Setup and Definitions}

We work on the Wasserstein space $\pspace$, the space of probability measures on a manifold $\mathcal{M}$ with finite second moments. We endow $\pspace$ with the Borel $\sigma-$algebra generated by the $W_2$ metric (with the underlying geodesic distance).

\subsection{Continuity equation and Lagrangian flows on $\pspace$}
\label{app:continuity_p2}

\paragraph{Objectives of this section}

In the following, we show that if $(\mathbb{P}_t,V_t)$ solve the continuity equation on $\pspace$(\emph{i.e.} with test functionals on $\pspace$), then $V_t$ generates $\mathbb{P}_t$ in the Lagrangian sense $\mathbb{P}_t = (\Phi_t)_{\#}\mathbb{P}_0$. That is, conceptually, one can generate the probability path $\mathbb{P}_t$ by flowing individual elements from $0$ to $t$ according to the vector field $V_t$, which underlies the generation procedure used in flow matching.

\subsubsection{Definitions}

\paragraph{Tangent space and path space of continuous curves}
For each $\mu \in \pspace$, we identify the tangent space $T_{\mu}\pspace$ with the closure of the gradient vector fields in $L^2(\mu;T\mathcal{M})$ and endow it with the norm $\lVert v\rVert^2_{L^2(\mu)} = \int_{\mathcal{M}}\lVert v(x)\rVert_g^2 d\mu(x)$.  We further write $L^2(\mathbb{P}_t, T\pspace)$ the space of functions $V_t :\pspace \rightarrow T\pspace$ such that $\int_{\pspace} \lVert V_t(\mu) \rVert^2_{L^2(\mu)}(\mu) d\mathbb{P}_t < \infty$. We write $\Gamma_T := C([0,T]; \pspace)$ the path space of continuous curves $\gamma$ from $[0,T]$ to $\pspace$.

We define the elevation maps $e_t$ as
\begin{equation*}
e_t: (\gamma) \in \Gamma_T \rightarrow \gamma(t) \in \pspace \quad \text{for }t\in[0,T]
\end{equation*}

\paragraph{Cylinder Functions}

\begin{definition} A functional $\mathcal{F}:\pspace\to \mathbb{R}$ is called a cylinder function if there exists an integer $k \ge 1$, a smooth function with compact support $F \in C_c^\infty(\mathbb{R}^k)$, and a set of smooth functions with compact support $\{V_1, \dots, V_k\} \subset C_c^\infty(\mathcal{M})$, such that for any measure $\mu\in\pspace$:

\begin{equation*}
\mathcal{F}(\mu) = F\left(\int_{\mathcal{M}} V_1 d\mu, \dots, \int_{\mathcal{M}} V_k d\mu\right)
\end{equation*}
\end{definition}

This definition can be extended to time-dependent functionals $\varphi_t(\mu) = \varphi(t, \mu)$, where $\varphi(t, \mu) = F(t, \int V_1 d\mu, \dots, \int V_k d\mu)$ for some $F \in C_c^\infty(I \times \mathbb{R}^k)$.

In this text, we will assume that time-dependent cylinder functions have a compact support in time $(0,T)$, such that $\varphi_0(x) = \varphi_T(x) = 0$ for all $x$. 

Using the chain rule, the gradient of a cylinder functional $\mathcal{F}$, the Wasserstein gradient, at a measure $\mu$ can be computed.

\begin{definition}  The Wasserstein gradient of a cylinder function $\mathcal{F}$, denoted $\nabla_{\mathcal{W}}\mathcal{F}(\mu)$, is the vector field on $\mathcal{M}$:

\begin{equation*}
\nabla_{\mathcal{W}}\mathcal{F}(\mu) = \sum_{i=1}^k \frac{\partial F}{\partial x_i}\left(\int V_1 d\mu, \dots, \int V_k d\mu\right) \nabla V_i
\end{equation*}
\end{definition}

Here, $\frac{\partial F}{\partial x_i}$ is the partial derivative of $F$ with respect to its $i$-th argument, and $\nabla V_i$ is the gradient of the function $V_i$ on the manifold $\mathcal{M}$.

\subsubsection{Continuity equation and superposition principle on $\pspace$}

We give precise sufficient hypotheses for the Lagrangian interpretation
used in the main text. The argument adapts the superposition principle
for random measures of \citet[Theorem~1.2]{pinzi2025nested} through a
smooth embedding.

Recall that $(\mathbb P_t,V_t)$ satisfies the weak continuity
equation on $\pspace$ if
\begin{equation}
\begin{aligned}
&\int_0^T\int_{\pspace}\Bigl[\partial_t\varphi_t(\mu)\\
&\qquad+\bigl\langle\nabla_{\mathcal W}\varphi_t(\mu),
V_t(\mu)\bigr\rangle_{L^2(\mu)}\Bigr]\,
d\mathbb P_t(\mu)\,dt=0,
\end{aligned}
\tag{Weak CE}
\label{eq:weak_CE_app}
\end{equation}
for every smooth cylinder functional $\varphi_t$ with compact
support in time in $(0,T)$.

\begin{theorem}[Weak continuity equation and superposition in $\pspace$]
\label{thm:weak_continuity_superposition}
Let $(\mathcal M,g)$ be a connected, complete, smooth Riemannian manifold
without boundary, and let $(\mathbb P_t)_{t\in[0,T]}$ be an absolutely
continuous curve in $\mathcal P_2(\pspace)$. Assume there is a compact
set $K\subset\mathcal M$ such that
$\mathbb P_t(\mathcal P(K))=1$ for every $t$, where $\mathcal P(K)$
denotes the probability measures supported in $K$.
Let $V_t(\mu)\in T_\mu\pspace$ admit a jointly Borel representative
$b(t,x,\mu)=V_t(\mu)(x)\in T_x\mathcal M$, and assume
\begin{equation*}
\int_0^T\int_{\pspace}
\|V_t(\mu)\|_{L^2(\mu)}^2\,d\mathbb P_t(\mu)\,dt<\infty.
\end{equation*}
Suppose $(\mathbb P_t,V_t)$ satisfies the weak continuity
equation~\eqref{eq:weak_CE_app}. Then:
\begin{enumerate}
\item There exists a probability measure $\Gamma$ on
$\Gamma_T=C([0,T];\pspace)$, concentrated on
$W_2$-absolutely continuous curves taking values in $\mathcal P(K)$,
such that
\begin{equation*}
(e_t)_\#\Gamma=\mathbb P_t,\qquad t\in[0,T].
\end{equation*}

\item For $\Gamma$-almost every trajectory $\gamma$, the
measure-valued continuity equation
\begin{equation}
\partial_t\gamma(t)+\operatorname{div}_{\mathcal M}
\bigl(\gamma(t)V_t(\gamma(t))\bigr)=0
\tag{Measure CE}
\label{eq:measure_characteristic}
\end{equation}
holds weakly on $\mathcal M$. Consequently, for each smooth
time-dependent cylinder functional $\varphi_t$, along
$\Gamma$-almost every trajectory and for almost every $t$,
\begin{equation*}
\frac{d}{dt}\varphi_t(\gamma(t))
=\partial_t\varphi_t(\gamma(t))
+\bigl\langle\nabla_{\mathcal W}\varphi_t(\gamma(t)),
V_t(\gamma(t))\bigr\rangle_{L^2(\gamma(t))}.
\end{equation*}
This is the weak interpretation of
$\dot\gamma(t)=V_t(\gamma(t))$ used here.

\item Assume additionally that, for $\mathbb P_0$-almost every
$\mu$, equation~\eqref{eq:measure_characteristic} has at most one
$W_2$-absolutely continuous solution in $\mathcal P(K)$ with
$\gamma(0)=\mu$. Then, for $\mathbb P_0$-almost every $\mu$,
there exists a unique $W_2$-absolutely continuous curve
$\gamma_\mu:[0,T]\to\mathcal P(K)\subset\pspace$ solving
\begin{equation*}
\begin{aligned}
\dot\gamma_\mu(t)&=V_t(\gamma_\mu(t)),
\qquad\text{for a.e. }t\in[0,T],\\
\gamma_\mu(0)&=\mu,
\end{aligned}
\end{equation*}
where the evolution equation is understood in the weak sense
of~\eqref{eq:measure_characteristic}, equivalently through the
cylinder chain rule above. The map $\Phi_t(\mu):=\gamma_\mu(t)$
defines a measurable flow, up to $\mathbb P_0$-null sets, such that
\begin{equation*}
\mathbb P_t=(\Phi_t)_\#\mathbb P_0,\qquad t\in[0,T].
\end{equation*}
Thus $V_t$ generates the probability path $\mathbb P_t$ by
evolving each initial measure along its trajectory $\gamma_\mu$.
\end{enumerate}
\end{theorem}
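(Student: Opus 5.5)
The strategy is to reduce to the superposition principle for random measures of \citet[Theorem~1.2]{pinzi2025nested}, which is stated on $\mathbb{R}^d$, by embedding $\mathcal M$ smoothly into a Euclidean space. Since $\mathbb P_t$ is concentrated on $\mathcal P(K)$ with $K$ compact, we only ever need the geometry near $K$.

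\textbf{Step 1 (embedding).} By Whitney/Nash, choose a smooth embedding $\iota:\mathcal M\to\mathbb R^D$. Push everything forward:
\begin{itemize}
\item $\tilde{\mathbb P}_t:=(\iota_\#)_\#\mathbb P_t$, a law on $\mathcal P(\iota(K))\subset\mathcal P_2(\mathbb R^D)$;
\item $\tilde b(t,z,\tilde\mu):=d\iota_x\, b(t,x,\mu)$ for $z=\iota(x)$ and $\tilde\mu=\iota_\#\mu$.
\end{itemize}
The map $\mu\mapsto\iota_\#\mu$ is a bi-Lipschitz homeomorphism of $\mathcal P(K)$ onto $\mathcal P(\iota(K))$, since $d_g$ and the Euclidean distance are comparable on the compact set $K$. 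Hence measurability and the $L^2$ bound transfer, with $\|d\iota\|$ bounded on $K$.

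Next we check that $(\tilde{\mathbb P}_t,\tilde b)$ satisfies the Euclidean weak continuity equation for cylinder functions built on $C_c^\infty(\mathbb R^D)$. Given $\tilde V_i\in C_c^\infty(\mathbb R^D)$, the functions $\tilde V_i\circ\iota$ agree on a neighbourhood of $K$ with compactly supported smooth functions $V_i$ on $\mathcal M$. Moreover
\[
\langle\nabla\tilde V_i(\iota x),d\iota_x b\rangle=\langle\nabla V_i(x),b\rangle_g .
\]
So the Euclidean identity is exactly \eqref{eq:weak_CE_app} tested with those cylinder functionals.

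\textbf{Step 2 (superposition).} Apply \citet{pinzi2025nested} to obtain a measure $\tilde\Gamma$ on $C([0,T];\mathcal P_2(\mathbb R^D))$. It is concentrated on absolutely continuous curves solving $\partial_t\tilde\gamma+\operatorname{div}(\tilde\gamma\,\tilde b_t(\cdot,\tilde\gamma))=0$, and satisfies $(e_t)_\#\tilde\Gamma=\tilde{\mathbb P}_t$.

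Since $\tilde{\mathbb P}_t(\mathcal P(\iota K))=1$ for each $t$, a countable dense set of times together with continuity of curves shows that $\tilde\Gamma$-a.e. curve stays in the closed set $\mathcal P(\iota K)$. Pulling back by $\iota_\#^{-1}$ then gives $\Gamma$, which proves item 1.

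For item 2, the Euclidean measure continuity equation tested against $\psi\circ\iota^{-1}$, extended off $\iota(\mathcal M)$, becomes the weak continuity equation on $\mathcal M$ via the identity of Step 1. The tangency of $\tilde b$ to $\iota(\mathcal M)$ makes the extension irrelevant. The cylinder chain rule then follows by differentiating $t\mapsto\int V_i\,d\gamma(t)$, which is absolutely continuous with derivative $\int\langle\nabla V_i,V_t(\gamma(t))\rangle_g\,d\gamma(t)$, and applying the ordinary chain rule to $F$.

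\textbf{Step 3 (flow).} Disintegrate $\Gamma=\int\Gamma_\mu\,d\mathbb P_0(\mu)$ with respect to $e_0$. For $\mathbb P_0$-a.e. $\mu$, $\Gamma_\mu$ is concentrated on solutions starting at $\mu$. The assumed uniqueness forces $\Gamma_\mu=\delta_{\gamma_\mu}$. Measurability of $\mu\mapsto\gamma_\mu$ follows from measurability of the disintegration, since $\gamma_\mu$ is the barycenter of a Dirac mass, or equivalently because the graph $\{(\mu,\gamma)\}$ is the $\Gamma$-full Borel set $\{\gamma(0)=\mu\}$. This gives
\[
(e_t)_\#\Gamma=\int\delta_{\gamma_\mu(t)}\,d\mathbb P_0=(\Phi_t)_\#\mathbb P_0,
\]
and by item 1 this equals $\mathbb P_t$.

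\textbf{Main obstacle.} The main difficulty is verifying that the hypotheses of the cited theorem hold after embedding. We must match its class of test functionals and its integrability condition, which is presumably $\int\!\!\int\|V_t\|^2\,d\mathbb P_t\,dt<\infty$ or an $L^1$ variant implied by it. We must also confirm that its trajectories are $W_2$-absolutely continuous rather than only narrowly continuous; compact support makes these topologies coincide on $\mathcal P(K)$, which should settle this point. I would handle both by using the compactness of $K$ throughout.
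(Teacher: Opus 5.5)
Your route is the paper's: a smooth embedding, transfer of the weak continuity equation by pulling back cylinder tests, \citet[Theorem~1.2]{pinzi2025nested}, rational times plus closedness of $\mathcal P(\iota K)$, pull-back through $\iota_\#^{-1}$, and disintegration with respect to $e_0$ plus uniqueness to get $\Gamma_\mu=\delta_{\gamma_\mu}$. Your identity $\langle\nabla\tilde V_i(\iota x),d\iota_x b\rangle=\langle\nabla V_i(x),b\rangle_g$ is exactly what the paper's cutoff argument uses.

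The one step that does not work as written is $W_2$-absolute continuity of the trajectories. In Step~2 you simply attribute it to the cited theorem. Your fallback, that the narrow and $W_2$ topologies coincide on $\mathcal P(K)$, only gives $W_2$-\emph{continuity}. Absolute continuity is a metric, not a topological, property: you need some $m\in L^1(0,T)$ with $W_2(\gamma(s),\gamma(t))\le\int_s^t m(r)\,dr$. No equivalence of topologies produces such a modulus; a continuous but singular reparametrization of a geodesic is a counterexample.

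The missing ingredient is a trajectory-wise energy bound, which the paper obtains by Fubini. Since $(e_t)_\#\Gamma=\mathbb P_t$ and $b$ is jointly Borel,
\[
\int_{\Gamma_T}\int_0^T\|V_t(\gamma(t))\|^2_{L^2(\gamma(t))}\,dt\,d\Gamma(\gamma)=\int_0^T\int_{\pspace}\|V_t(\mu)\|^2_{L^2(\mu)}\,d\mathbb P_t(\mu)\,dt<\infty,
\]
so $\Gamma$-a.e.\ trajectory has finite kinetic energy. The velocity--energy estimate for the Euclidean continuity equation (Ambrosio--Gigli--Savar\'e, Thm.~8.3.1) then gives
\[
W_2(\tilde\gamma(s),\tilde\gamma(t))\le\int_s^t\|\tilde b(r,\cdot,\tilde\gamma(r))\|_{L^2(\tilde\gamma(r))}\,dr
\]
along these embedded curves. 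Your bi-Lipschitz comparison of $d_g$ with the Euclidean distance on $K$ then transfers absolute continuity to the intrinsic $W_2$.

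The same bound also justifies your item-2 claim that $t\mapsto\int V_i\,d\gamma(t)$ is absolutely continuous with the stated derivative. That claim requires $t\mapsto\int\langle\nabla V_i,V_t(\gamma(t))\rangle_g\,d\gamma(t)$ to lie in $L^1(0,T)$ along the trajectory, which is exactly what Fubini supplies.

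A minor point in Step~3: the set $\{(\mu,\gamma):\gamma(0)=\mu\}$ is the graph of $e_0$, not of $\mu\mapsto\gamma_\mu$, since it contains every curve starting at $\mu$. So your ``equivalently'' justification is off. The disintegration argument you give first is the correct one, and it is the paper's.
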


\begin{proof}[Proof]
Choose a smooth embedding $\jmath:\mathcal M\to\mathbb R^D$ and
write $J(\mu)=\jmath_\#\mu$ for $\mu\in\mathcal P(K)$. Set
$\widetilde{\mathbb P}_t=J_\#\mathbb P_t$ and define
\begin{equation*}
\widetilde b(t,\jmath(x),J(\mu))
=d\jmath_x\,b(t,x,\mu),
\qquad x\in K,\quad\mu\in\mathcal P(K),
\end{equation*}
extending $\widetilde b$ by zero elsewhere. On the compact set $K$,
the embedding has bounded differential and its inverse has bounded
metric distortion. Pulling back cylinder tests, using a smooth cutoff
equal to one near $K$, transfers the weak continuity equation to
$(\widetilde{\mathbb P}_t,\widetilde b)$ on
$\mathcal P(\mathbb R^D)$. The integrated squared-velocity bound is
preserved up to a constant and implies the integrability required by
\citet[Theorem~1.2]{pinzi2025nested}.

That theorem gives a measure $\widetilde\Gamma$ on measure-valued
trajectories with marginals $\widetilde{\mathbb P}_t$, concentrated
on solutions of the continuity equation driven by $\widetilde b$.
These trajectories remain in $\mathcal P(\jmath(K))$: the marginal
identities imply this simultaneously at rational times, and continuity
and closedness extend it to every time. We can therefore pull them
back through $J^{-1}$ to obtain $\Gamma$ with the desired marginals.
Extension of smooth tests from the embedded manifold and the chain
rule give~\eqref{eq:measure_characteristic}~\citep{pinzi2026study}.

By Fubini and the marginal identities, the assumed energy is finite
along almost every represented trajectory. The velocity-energy
estimate for the Euclidean continuity equation gives $W_2$ absolute
continuity there; metric comparison on $K$ transfers this to the
intrinsic $W_2$ metric. Applying the cylinder chain rule to
\eqref{eq:measure_characteristic} proves the second claim.

\paragraph{Disintegration and the deterministic flow.}
For the third claim, consider the joint law of the initial measure
and the entire trajectory:
\begin{equation*}
\widehat{\Gamma}:=(e_0,\operatorname{id})_\#\Gamma
\in\mathcal P(\pspace\times\Gamma_T).
\end{equation*}
Its first marginal is $(e_0)_\#\Gamma=\mathbb P_0$, and its
second marginal is $\Gamma$. Since $\pspace$ and $\Gamma_T$ are
Polish spaces, disintegration with respect to the first marginal
gives a measurable family of conditional path measures
$\{\Gamma_\mu\}_{\mu\in\pspace}$ such that
\begin{equation*}
\widehat{\Gamma}
=\int_{\pspace}\delta_\mu\otimes\Gamma_\mu\,d\mathbb P_0(\mu).
\end{equation*}
Here $\Gamma_\mu$ is the conditional law of the trajectory given
its initial value $\mu$. In particular, for $\mathbb P_0$-almost
every $\mu$,
\begin{equation*}
\Gamma_\mu\bigl(\{\gamma\in\Gamma_T:\gamma(0)=\mu\}\bigr)=1.
\end{equation*}
The concentration properties of $\Gamma$ also hold under
$\Gamma_\mu$ for $\mathbb P_0$-almost every $\mu$: its trajectories
are $W_2$-absolutely continuous, remain in $\mathcal P(K)$,
and solve~\eqref{eq:measure_characteristic}.

Under the additional uniqueness assumption, there is at most one
such trajectory starting from $\mu$. Since $\Gamma_\mu$ is a
probability measure concentrated on these trajectories, there
is exactly one, denoted $\gamma_\mu$, and
\begin{equation*}
\Gamma_\mu=\delta_{\gamma_\mu}
\qquad\text{for }\mathbb P_0\text{-almost every }\mu.
\end{equation*}
The measurability of the conditional measures therefore gives
a measurable trajectory map $\mu\mapsto\gamma_\mu$ up to null
sets. Define $\Phi_t(\mu):=\gamma_\mu(t)$, so that
$\Phi_0(\mu)=\mu$ for $\mathbb P_0$-almost every $\mu$.

To identify the law at time $t$, take any bounded Borel
functional $\varphi:\pspace\to\mathbb R$. The marginal identity
and disintegration give
\begin{align*}
\int_{\pspace}\varphi(\nu)\,d\mathbb P_t(\nu)
&=\int_{\Gamma_T}\varphi(\gamma(t))\,d\Gamma(\gamma)\\
&=\int_{\pspace}\left(
  \int_{\Gamma_T}\varphi(\gamma(t))\,d\Gamma_\mu(\gamma)
  \right)d\mathbb P_0(\mu)\\
&=\int_{\pspace}\varphi(\gamma_\mu(t))\,d\mathbb P_0(\mu)\\
&=\int_{\pspace}\varphi(\Phi_t(\mu))\,d\mathbb P_0(\mu).
\end{align*}
This is precisely the pushforward identity
\begin{equation*}
\mathbb P_t=(\Phi_t)_\#\mathbb P_0,\qquad t\in[0,T],
\end{equation*}
which establishes the deterministic Lagrangian representation.
\end{proof}

\subsubsection{Example: Flowing between two diracs centered at $\mu_0$ and $\mu_1$, with $\mathcal{M}=\mathbb{R}^d$}

We consider a probability path in $\ppspace$ between two dirac distributions, centered at $\mu_0$ and $\mu_1$. In that case, we have $\mathbb{P}_t = \delta_{\mu_t}$. We first consider the case where the underlying manifold is the Euclidean space. We treat the general case where $\mathcal{M}$ is an arbitrary Riemannian manifold in the next section. 

We take the path $\gamma$ to be the constant-speed geodesics connecting $\mu_0$ and $\mu_1$:
\begin{equation*}
\gamma(t) = \mu_t = (\psi_t)_{\#}\mu_0
\end{equation*}

\begin{equation*}
\psi_{t}(x):=(1-t)x+t\,T_0(x),\qquad t\in[0,1],
\end{equation*}

and $T_0$ the optimal transport map on $\mathbb{R}^d$ between $\mu_0$ and $\mu_1$.

The minimal norm tangent vector is the vector field $v_t \in T_{\mu_t}\pspace$:

\begin{equation*}
v_t(z) = \left.\partial_t\psi_t(x)\right|_{x=S_t(z)} = T(S_t(z)) - S_t(z) = (T-Id)(\psi_t^{-1}(z)),
\end{equation*}

with $S_t(x) = \psi_t^{-1}$.

We now need to define a vector $V_t$ on $\pspace$ that generates this path:
\begin{equation*}
V_t(\mu) =
\begin{cases}
v_t \quad \text{if } \mu = \mu_t \\
0 \quad \text{otherwise}
\end{cases}
\end{equation*}

We now show that $(\delta_{\mu_t},V_t)$ solves the continuity equation on $\pspace$. We first show that $(\mu_t,v_t)$ follows the continuity equation on $\mathbb{R}^d$, then lift it to show that it follows the continuity equation on $\mathcal{P}_2(\mathbb{R}^d)$.

\paragraph{$(\mu_t,v_t)$ on $\mathbb{R}^d$}

From the definition of $\mu_t$ and the change of variable formula, we have that 

\begin{equation*}
\int_{\mathbb{R}^d} \xi(z) d\mu_t(z) = \int_{\mathbb{R}^d} \xi(\psi_t(x))d\mu_0(x)
\end{equation*}

for any continuous test function $\xi$. Differentiating with respect to $t$ (and noting that $\dot{\psi}_t = T(x)-x$), we obtain

\begin{equation*}
\begin{split}
    \frac{d}{dt} \int_{\mathbb{R}^d} \xi(z)d\mu_t(z) &= \int_{\mathbb{R}^d} \nabla\xi(\psi_t(x))\cdot(T(x)-x)d\mu_0(x) \\
    &= \int_{\mathbb{R}^d} \nabla\xi(z) \cdot (T-Id)(\psi_t^{-1}(z))d\mu_t(z) = \int_{\mathbb{R}^d} \nabla \xi(z) \cdot v_t(z) d\mu_t(z)
\end{split}
\end{equation*}

which is the weak formulation of the continuity equation on $\mathbb{R}^d$.

We can now adapt it to time cylindrical functions of the form
\[
\varphi_t(\mu) = F\!\Bigl(t,\int_{\mathbb{R}^d}\phi_1\, d\mu,\ldots,\int_{\mathbb{R}^d}\phi_k\, d\mu\Bigr).
\]
Using the identity above, we have

\begin{equation*}
\frac{d}{dt} \int_{\mathbb{R}^d} \phi_i(z)d\mu_t(z) =  \int_{\mathbb{R}^d} \nabla \phi_i(z) \cdot v_t(z) d\mu_t(z)
\end{equation*}

 We thus have 

\begin{equation*}
\frac{d}{dt}\varphi_t(\mu_t) = \partial_tF(t,G(t)) + \sum_{i=1}^k \partial_{x_i} F(t,\int_{\mathbb{R}^d}\phi_1 d\mu,...,\int_{\mathbb{R}^d}\phi_k d\mu) \cdot \int_{\mathbb{R}^d} \nabla \phi_i(z) \cdot v_t(z) d\mu_t(z)
\end{equation*}

Using the definition of the Wasserstein gradient of the cylinder function, we write

\begin{equation*}
\begin{split}
\frac{d}{dt}\varphi_t(\mu_t) &= \partial_t \varphi_t(\mu_t) + \int_{\mathbb{R}^d} \nabla_W\varphi_t(\mu_t) \cdot v_t d\mu_t\\
&= \partial_t \varphi_t(\mu_t) + \langle \nabla_W\varphi_t(\mu_t),v_t  \rangle_{L^2(\mu_t)}
\end{split}
\end{equation*}

which is the our desired results on $\mathbb{R}^d$ that we now need to lift to $\mathcal{P}_2(\mathbb{R}^d)$.

\paragraph{$(\mathbb{P}_t,V_t)$ on $\mathcal{P}_2(\mathbb{R}^d)$}

We have

\begin{equation*}
\begin{split}
&\int_0^1\int_{\mathcal{P}_2(\mathbb{R}^d)} \left(\partial_t\varphi_t(\mu) + \langle \nabla_{\mathcal{W}}\varphi_t(\mu), V_t(\mu)\rangle_{L^2(\mu)}\right)\ d\mathbb{P}_t(\mu)dt = \\
&\int_0^1 \left(\partial_t\varphi_t(\mu_t) + \langle \nabla_{\mathcal{W}}\varphi_t(\mu_t), V_t(\mu_t)\rangle_{L^2(\mu_t)}\right) dt
\end{split}
\end{equation*}

which is equal to $\int_0^1 \frac{d}{dt}\varphi_t(\mu_t) dt = \varphi_1(\mu_1) - \varphi_0(\mu_0)$ and vanishes with the assumption that $\varphi_t$ has a compact support in time. Hence, we have

\begin{equation*}
\int_0^1\int_{\mathcal{P}_2(\mathbb{R}^d)} \left(\partial_t\varphi_t(\mu) + \langle \nabla_{\mathcal{W}}\varphi_t(\mu), V_t(\mu)\rangle_{L^2(\mu)}\right)\ d\mathbb{P}_t(\mu)dt = 0,
\end{equation*}

which shows that $(\mathbb{P}_t,V_t)$ solves the weak continuity equation on $\mathcal{P}_2(\mathbb{R}^d)$.

\subsubsection{Example: Flowing between two diracs on arbitrary Riemannian manifolds}

We now generalize the example above to general Riemannian manifolds. Considering arbitrary manifolds $(\mathcal{M},g)$, and writing geodesics between $x$ and $y$ $\in \mathcal{M}$.

\begin{equation*}
\alpha_{t}(x,y)\;:=\;\exp_{x}\!\bigl(t\,\log_{x}y\bigr),\qquad t\in[0,1],
\end{equation*}

where,
\begin{itemize}
\item $\exp_{x}:T_{x}M\!\to\!M$ is the exponential map,
\item $\log_{x}y\in T_{x}M$ is the inverse ($\exp_{x}(\log_{x}y)=y$).
\end{itemize}

The McCann displacement interpolation on $M$ is defined as 

\begin{equation*}
\psi_{t}(x):=\alpha_{t}(x,T(x))
           =\exp_{x}(t\,\log_x T(x)),
\qquad
\mu_{t}\;:=\;(\psi_{t})_{\#}\mu_{0},
\end{equation*}

which is the Riemannian analogue of the Euclidean straight-line interpolation and is still a constant-speed $W_{2}$-geodesic on $\mathcal P_{2}(M)$.

The velocity field then writes
\begin{equation*}
v_t(\cdot| \mu_0, \mu_1)=\frac{d}{dt}\psi_t(x).
\end{equation*}

We define $\mathbb{P}_t = \delta_{\mu_t}$ and $V_t(\mu) = v_t$ if $\mu=\mu_t$ and $0$ otherwise.

We can now follow the same procedure as in the Euclidean case to show that $(V_t,\mathbb{P}_t)$ solves the weak continuity equation on $\pspace$. We first verify it on $\mathcal{M}$ and then lift it to $\pspace$.

\paragraph{$(\mu_t,v_t)$ on $\mathcal{M}$}

From the definition of $\mu_t$ and the change of variable formula, we have that 

\begin{equation*}
\int_{\mathcal{M}} \xi(z) d\mu_t(z) = \int_{\mathcal{M}} \xi(\psi_t(x))d\mu_0(x)
\end{equation*}

for any continuous test function $\xi$. Differentiating with respect to $t$ (and using that $v_t(\psi_t(x)) = \dot{\psi}_t(x)$), we obtain

\begin{equation*}
\begin{split}
    \frac{d}{dt} \int_{\mathcal{M}} \xi(z)d\mu_t(z) &= \int_{\mathcal{M}} \langle \nabla\xi(\psi_t(x)), \dot{\psi}_t \rangle_g d\mu_0(x) \\
    &= \int_{\mathcal{M}} \langle \nabla\xi(\psi_t(x)), v_t(\psi_t(x))\rangle_g d\mu_0(x) \\
    &= \int_{\mathcal{M}} \langle \nabla\xi(z), v_t(z)\rangle_g d\mu_t(z)
\end{split}
\end{equation*}

This can again be extended to time-dependent cylinder functions $\varphi_t$ to yield

\begin{equation*}
\begin{split}
\frac{d}{dt}\varphi_t(\mu_t) &= \partial_t \varphi_t(\mu_t) + \int_{\mathcal{M}} \langle \nabla_W\varphi_t(\mu_t), v_t \rangle_g d\mu_t\\
&= \partial_t \varphi_t(\mu_t) + \langle \nabla_W\varphi_t(\mu_t),v_t  \rangle_{L^2(\mu_t)}
\end{split}
\end{equation*}

where $\langle\cdot,\cdot\rangle_{L^2(\mu_t)}$ implicitly encodes the Riemannian metric in the inner product.

\paragraph{$(\mathbb{P}_t,V_t)$ on $\pspace$}

Since $\mathbb{P}_t$ is concentrated on $\mu_t$, we have:

\begin{equation*}
\begin{split}
&\int_0^1\int_{\pspace} \left(\partial_t\varphi_t(\mu) + \langle \nabla_{\mathcal{W}}\varphi_t(\mu), V_t(\mu)\rangle_{L^2(\mu)}\right)\ d\mathbb{P}_t(\mu)dt = \\
&\int_0^1 \left(\partial_t\varphi_t(\mu_t) + \langle \nabla_{\mathcal{W}}\varphi_t(\mu_t), V_t(\mu_t)\rangle_{L^2(\mu_t)}\right) dt
\end{split}
\end{equation*}

which, from our derivation above, is equal to $\int_0^1 \frac{d}{dt}\varphi_t(\mu_t) dt = \varphi_1(\mu_1) - \varphi_0(\mu_0)$ and vanishes with the assumption that $\varphi_t$ has a compact support in time. Hence, we have

\begin{equation*}
\int_0^1\int_{\pspace} \left(\partial_t\varphi_t(\mu) + \langle \nabla_{\mathcal{W}}\varphi_t(\mu), V_t(\mu)\rangle_{L^2(\mu)}\right)\ d\mathbb{P}_t(\mu)dt = 0,
\end{equation*}

which shows that $(\mathbb{P}_t,V_t)$ solves the weak continuity equation on $\pspace$.

\subsection{Marginal vector field generates marginal probability path}
\label{sec:marginal}

In this section, we show that marginalizing conditional velocity fields preserves the weak continuity equation. Under the hypotheses of Theorem~\ref{thm:weak_continuity_superposition} for the marginal pair, including trajectory uniqueness, the marginal vector field also generates the marginal probability path.

\begin{assumption}[Conditional velocity and probability path solve continuity equation]
\label{ass:conditional_continuity}
We assume that for any given boundary measures $\mu_0, \mu_1$, the conditional flow $(\mathbb{P}_t(\cdot |\mu_0,\mu_1))_t$ and its velocity field $V_t(\mu | \mu_0, \mu_1)$ satisfy the weak continuity equation. This means for any suitable test (cylinder) function $\varphi_t(\mu)$, the following holds:

\begin{equation*}
\int_0^T\int_{\pspace} \left(\partial_t\varphi_t(\mu) + \langle \nabla_{\mathcal{W}}\varphi_t(\mu), V_t(\mu|\mu_0,\mu_1)\rangle_{L^2(\mu)}\right)\ d\mathbb{P}_t(\mu|\mu_0,\mu_1)dt = 0.
\end{equation*}

\end{assumption}

\begin{definition}[Marginal velocity field]
\label{def:marginal_v}
Analogously to classical flow matching, we define the marginal velocity field $V_t(\mu)$ as the conditional expectation of the conditional velocity fields $V_t(\mu|\mu_0,\mu_1)$:

\begin{equation*}
V_t(\mu) := \mathbb{E}_{\mathbb{P}_t(\mu_0,\mu_1|\mu)}[V_t(\mu|\mu_0,\mu_1)] = \int_{\pspace} V_t(\mu|\mu_0,\mu_1)\ d\mathbb{P}_t(\mu_0,\mu_1|\mu)
\end{equation*}

where $d\mathbb{P}_t(\mu_0,\mu_1|\mu) = \frac{d\mathbb{P}_t(\mu|\mu_0,\mu_1)d\Pi(\mu_0,\mu_1)}{d\mathbb{P}_t(\mu)}$ is the conditional probability measure. 

\end{definition}

This definition implies the key relation:
\begin{equation*}
V_t(\mu)d\mathbb{P}_t(\mu) = \int_{\pspace\times \pspace } V_t(\mu|\mu_0,\mu_1)\ d\mathbb{P}_t(\mu|\mu_0,\mu_1) d\Pi(\mu_0,\mu_1)
\end{equation*}

\begin{theorem}[Marginal vector field generates the marginal probability path]
\label{thm:marginalization}
Assume the conditional pair $(\mathbb{P}_t(\cdot|\mu_0,\mu_1),V_t(\cdot|\mu_0,\mu_1))$ satisfies Assumption~\ref{ass:conditional_continuity}. Then the marginal vector field $V_t$ from Definition~\ref{def:marginal_v} and the marginal probability path $\mathbb{P}_t$ solve~\eqref{eq:weak_CE_app}. Under the hypotheses of the superposition theorem (Theorem~\ref{thm:weak_continuity_superposition}) for the marginal pair, including trajectory uniqueness, $V_t$ generates the marginal probability path $\mathbb{P}_t$.
\end{theorem}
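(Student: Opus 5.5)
The argument has two parts: a linearity computation that yields the weak continuity equation for the marginal pair, followed by a direct appeal to Theorem~\ref{thm:weak_continuity_superposition}. I will use the disintegration $d\mathbb{P}_t(\mu)=\int d\mathbb{P}_t(\mu|\mu_0,\mu_1)\,d\Pi(\mu_0,\mu_1)$ and the key relation following Definition~\ref{def:marginal_v}. For the latter, I read $V_t(\mu)\,d\mathbb{P}_t(\mu)$ as a vector-valued measure on $\pspace$ whose values are, fibrewise, elements of $L^2(\mu;T\mathcal M)$.

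Fix a time-dependent cylinder functional $\varphi_t$ with compact support in $(0,T)$.

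First, the $\partial_t\varphi_t$ term. It is bounded and continuous, so Fubini gives
\begin{equation*}
\int_0^T\!\!\int \partial_t\varphi_t\,d\mathbb{P}_t\,dt=\int\!\!\int_0^T\!\!\int \partial_t\varphi_t(\mu)\,d\mathbb{P}_t(\mu|\mu_0,\mu_1)\,dt\,d\Pi(\mu_0,\mu_1).
\end{equation*}

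Second, the transport term. Because $\nabla_{\mathcal W}\varphi_t(\mu)$ is a fixed vector field on $\mathcal M$ for each $\mu$, the pairing $\langle\nabla_{\mathcal W}\varphi_t(\mu),\cdot\rangle_{L^2(\mu)}$ is a continuous linear functional on $L^2(\mu;T\mathcal M)$. It therefore commutes with the conditional expectation defining $V_t$ (a Bochner integral), so
\begin{equation*}
\int\langle\nabla_{\mathcal W}\varphi_t(\mu),V_t(\mu)\rangle_{L^2(\mu)}\,d\mathbb{P}_t(\mu)=\iint\langle\nabla_{\mathcal W}\varphi_t(\mu),V_t(\mu|\mu_0,\mu_1)\rangle_{L^2(\mu)}\,d\mathbb{P}_t(\mu|\mu_0,\mu_1)\,d\Pi(\mu_0,\mu_1).
\end{equation*}

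Summing the two terms and integrating in $t$, the inner integral (for $\Pi$-a.e.\ $(\mu_0,\mu_1)$) is exactly the left-hand side of Assumption~\ref{ass:conditional_continuity}, which vanishes. Integrating $0$ against $\Pi$ gives \eqref{eq:weak_CE_app} for $(\mathbb{P}_t,V_t)$.

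\textbf{Main obstacle: justifying Fubini and the interchange of the linear pairing with the conditional expectation.} This requires integrability of the conditional velocities. I would assume (or derive from the finiteness hypothesis in Theorem~\ref{thm:weak_continuity_superposition}, strengthened to the conditional level)
\begin{equation*}
\int_0^T\!\!\iint\|V_t(\mu|\mu_0,\mu_1)\|_{L^2(\mu)}^2\,d\mathbb{P}_t(\mu|\mu_0,\mu_1)\,d\Pi\,dt<\infty.
\end{equation*}
I would also use that $\nabla_{\mathcal W}\varphi_t$ is uniformly bounded, since $F$ and the $V_i$ are smooth and compactly supported. Cauchy--Schwarz then gives absolute integrability of the integrand, and Jensen's inequality gives
\begin{equation*}
\int\|V_t(\mu)\|^2_{L^2(\mu)}\,d\mathbb{P}_t\le\iint\|V_t(\mu|\cdot)\|^2_{L^2(\mu)}\,d\mathbb{P}_t(\mu|\cdot)\,d\Pi.
\end{equation*}
So the marginal field inherits the $L^2$ energy bound required downstream. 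A further subtlety is measurability of the conditional measure $\mathbb{P}_t(\mu_0,\mu_1|\mu)$. Since $\pspace$ is Polish, this follows from the disintegration theorem applied to the joint law of $(\mu,\mu_0,\mu_1)$, which gives a jointly Borel representative of $V_t$.

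Finally, given the weak continuity equation, the energy bound above, the support-in-$\mathcal P(K)$ hypothesis on $\mathbb{P}_t$, and the assumed trajectory uniqueness, part 3 of Theorem~\ref{thm:weak_continuity_superposition} applies verbatim to $(\mathbb{P}_t,V_t)$. It produces the flow $\Phi_t$ with $\mathbb{P}_t=(\Phi_t)_\#\mathbb{P}_0$, i.e.\ $V_t$ generates $\mathbb{P}_t$.
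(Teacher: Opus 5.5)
Your proposal is correct and follows the paper's proof essentially step for step: Fubini on the $\partial_t\varphi_t$ term, linearity of the $L^2(\mu)$ pairing to pull the conditional expectation defining $V_t$ outside the transport term, cancellation via Assumption~\ref{ass:conditional_continuity}, and then a direct appeal to Theorem~\ref{thm:weak_continuity_superposition}. Your integrability hypothesis, the Jensen energy bound, and the disintegration/measurability remarks are justifications the paper leaves implicit (its Fubini step tacitly needs the conditional velocities to be integrable), so they make the same argument more careful rather than change it.
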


\begin{proof}[Proof of Theorem~\ref{thm:marginalization}]

We want to prove that the marginal flow $\mathbb{P}_t$ also satisfies the weak continuity equation with an appropriately defined marginal velocity field $V_t(\mu)$. That is, we aim to show that:

\begin{equation*}
\int_0^T\int_{\pspace} \left(\partial_t\varphi_t(\mu) + \langle \nabla_{\mathcal{W}}\varphi_t(\mu), V_t(\mu)\rangle_{L^2(\mu)}\right)\ d\mathbb{P}_t(\mu)dt = 0.
\end{equation*}

Let's evaluate the left-hand side of the target equation by substituting the definition of the marginal flow $d\mathbb{P}_t(\mu)$. We can split the expression into two parts.

\paragraph{Temporal derivative term}

We begin with the term containing the partial derivative with respect to time, $\partial_t\varphi_t(\mu)$. We have

 \begin{equation*}
 \int_0^T \int_{\pspace} \partial_t\varphi_t(\mu)\ d\mathbb{P}_t(\mu)dt
 \end{equation*}

1.  Substitute the definition of the marginal measure $d\mathbb{P}_t(\mu)$: \\
    \begin{equation*}
    = \int_0^T \int_{\pspace} \partial_t\varphi_t(\mu) \left( \int_{\pspace \times \pspace} d\mathbb{P}_t(\mu|\mu_0,\mu_1) d\Pi(\mu_0,\mu_1) \right) dt
    \end{equation*}

2.  By Fubini's theorem, we can exchange the order of integration: \\
    \begin{equation*}
    = \int_{\pspace\times \pspace} \left( \int_0^T \int_{\pspace} \partial_t\varphi_t(\mu)\ d\mathbb{P}_t(\mu|\mu_0,\mu_1) dt \right) d\Pi(\mu_0,\mu_1)
    \end{equation*}

3.  Using the assumed conditional continuity equation, we replace the inner integral:\\
    \begin{multline*}
    = \int_{\pspace\times \pspace} \Bigl( - \int_0^T \int_{\pspace} \langle \nabla_{\mathcal{W}}\varphi_t(\mu), V_t(\mu|\mu_0,\mu_1)\rangle_{L^2(\mu)}\\
    \cdot\, d\mathbb{P}_t(\mu|\mu_0,\mu_1)\, dt \Bigr) d\Pi(\mu_0,\mu_1)
    \end{multline*}

4.  Combining the integrals gives our final expression for the first part: \\
    \begin{multline*}
    \text{Part 1} = - \int_{\pspace\times \pspace} \int_0^T \int_{\pspace} \langle \nabla_{\mathcal{W}}\varphi_t(\mu), V_t(\mu|\mu_0,\mu_1)\rangle_{L^2(\mu)}\\
    \cdot\, d\mathbb{P}_t(\mu|\mu_0,\mu_1)\,dt\, d\Pi(\mu_0,\mu_1)
    \end{multline*}

\paragraph{Velocity term}

Now we analyze the second term involving the marginal velocity field $V_t(\mu)$:
\begin{equation*}
\text{Part 2} = \int_0^T \int_{\pspace} \langle \nabla_{\mathcal{W}}\varphi_t(\mu), V_t(\mu)\rangle_{L^2(\mu)}\ d\mathbb{P}_t(\mu)dt
\end{equation*}

1.  Substituting the definition of the marginal velocity field: \\
\begin{multline*}
\text{Part 2} = \int_0^T \int_{\pspace} \left\langle \nabla_{\mathcal{W}}\varphi_t(\mu), \int_{\pspace \times \pspace} V_t(\mu|\mu_0,\mu_1)\ d\mathbb{P}_t(\mu|\mu_0,\mu_1) \right\rangle_{L^2(\mu)}\\
\cdot\, d\Pi(\mu_0,\mu_1)\,dt
\end{multline*}

2.  Using the linearity of the inner product and the definition of conditional probability, we can combine the integrals:\\
    \begin{equation*}
    = \int_0^T \int_{\pspace} \int_{\pspace \times \pspace} \langle \nabla_{\mathcal{W}}\varphi_t(\mu), V_t(\mu|\mu_0,\mu_1) \rangle_{L^2(\mu)}\ d\mathbb{P}_t(\mu|\mu_0,\mu_1) d\Pi(\mu_0,\mu_1) dt
    \end{equation*}

Hence, $\text{Part 2} = - \text{Part 1}$, establishing the weak continuity equation for the marginal pair. Under the stated additional hypotheses, Theorem~\ref{thm:weak_continuity_superposition} then gives $\mathbb P_t=(\Phi_t)_\#\mathbb P_0$, completing the proof.
\end{proof}

We have thus shown that if we define the marginal velocity field as the conditional expectation of the conditional velocity fields, the resulting marginal flow satisfies the weak continuity equation on the Wasserstein manifold.

\subsection{Equivalence of conditional and non-conditional flow matching losses}
\label{app:gradloss}

A natural loss for the flow matching objective is 
\begin{equation*}
    \mathcal{L}_{FM} = \mathbb{E}_{t,\mu_t\sim \mathbb{P}_t}[\lVert V_t(\mu_t)-V_t^{\theta}(\mu_t)\rVert^2_{L^2(\mu_t)}]
\end{equation*}

where $\lVert V_t(\mu)\rVert^2_{L^2(\mu)} = \int_{\mathcal{M}} \lVert V_t(\mu)(x)\rVert_g^2 d\mu(x)$.

The conditional version would then be
\begin{equation*}
    \mathcal{L}_{CFM} = \mathbb{E}_{t,\mu_0,\mu_1\sim \Pi, \mu_t\sim \mathbb{P}_t(\cdot\mid \mu_0,\mu_1)} [\lVert V_t(\mu_t\mid \mu_0, \mu_1)-V_t^{\theta}(\mu_t)\rVert^2_{L^2(\mu_t)}]
\end{equation*}

Following the original flow matching proof \cite{lipman2024flow}, we obtain
\begin{equation*}
\begin{split}
    &\nabla_{\theta} \mathcal{L}_{CFM} = \mathbb{E}_{t,\mu_0,\mu_1\sim \Pi, \mu_t\sim \mathbb{P}_t(\cdot\mid \mu_0,\mu_1)} [\nabla_{\theta}\lVert V_t(\mu_t\mid \mu_0, \mu_1)-V_t^{\theta}(\mu_t)\rVert^2_{L^2(\mu_t)}]\\
    &= \mathbb{E}_{t,\mu_0,\mu_1\sim \Pi, \mu_t\sim \mathbb{P}_t(\cdot\mid \mu_0,\mu_1)} [2 \langle \nabla_{\theta} V_t^{\theta}(\mu_t), V_t^{\theta}(\mu_t)\rangle - 2\langle V_t(\mu_t\mid \mu_0, \mu_1),\nabla_{\theta} V_t^{\theta}(\mu_t)\rangle] \\
\end{split}
\end{equation*}

Focusing on the right handside in the expectation, we have

\begin{align*}
    &\mathbb{E}_{t,\mu_0,\mu_1\sim \Pi, \mu_t\sim \mathbb{P}_t(\cdot\mid \mu_0,\mu_1)} [\langle V_t(\mu_t\mid \mu_0, \mu_1),\nabla_{\theta} V_t^{\theta}(\mu_t)\rangle] \\
    &= \mathbb{E}_{t,\mu_t \sim \mathbb{P}_t} \bigl[\mathbb{E}_{\mu_0,\mu_1 \sim \mathbb{P}(\cdot\mid \mu_t)} [\langle V_t(\mu_t\mid \mu_0, \mu_1),\nabla_{\theta} V_t^{\theta}(\mu_t)\rangle] \bigr]\\
    &= \mathbb{E}_{t,\mu_t \sim \mathbb{P}_t} [\langle \mathbb{E}_{\mu_0,\mu_1 \sim \mathbb{P}(\cdot\mid \mu_t)} [ V_t(\mu_t\mid \mu_0, \mu_1)], \nabla_{\theta} V_t^{\theta}(\mu_t)\rangle]\\
    &= \mathbb{E}_{t,\mu_t \sim \mathbb{P}_t} [\langle V_t(\mu_t), \nabla_{\theta} V_t^{\theta}(\mu_t)\rangle]
\end{align*}

Hence,
\begin{equation}
    \nabla_{\theta} \mathcal{L}_{CFM} = \mathbb{E}_{t,\mu_t \sim \mathbb{P}_t} [2 \langle \nabla_{\theta} V_t^{\theta}(\mu_t), V_t^{\theta}(\mu_t)\rangle - 2 \langle V_t(\mu_t), \nabla_{\theta} V_t^{\theta}(\mu_t)\rangle ]
    \label{eq:gradlossCFM}
\end{equation}

To finish the proof, we derive the gradient of $\mathcal{L}_{FM}$
\begin{equation*}
\begin{split}
    \nabla_{\theta} \mathcal{L}_{FM} &= \mathbb{E}_{t,\mu_t \sim \mathbb{P}_t} [\nabla_{\theta}\lVert V_t(\mu_t)-V_t^{\theta}(\mu_t)\rVert^2_{L^2(\mu_t)}]\\
    &= \mathbb{E}_{t,\mu_t \sim \mathbb{P}_t} [2 \langle \nabla_{\theta} V_t^{\theta}(\mu_t), V_t^{\theta}(\mu_t)\rangle - 2\langle V_t(\mu_t),\nabla_{\theta} V_t^{\theta}(\mu_t)\rangle],
\end{split}
\end{equation*}

which coincides exactly with~\eqref{eq:gradlossCFM}.


\section{Experimental Details and additional results}\label{appendix:hyperparameters}

Here we lay out the settings and hyperparameters used in our experiments. We discuss how datasets were processed, errors and metrics were computed, and model architectures and training procedures.

\subsection{Neural Network Architecture \& Training}

We parameterize the velocity field as a neural network $v_\theta(X_t, t, c) : \mathcal{M}^N \times [0,1] \times \mathcal{C} \to T_{X_t}\mathcal{M}^N$, which maps a point cloud $X_t = \{x_1, \ldots, x_N\}$ with $x_i \in \mathcal{M}$, time $t \in [0,1]$, and optional condition $c \in \mathcal{C}$ to a set of tangent vectors at $X_t$. The network architecture is designed to respect the permutation equivariance of point clouds and optimal transport maps.

The architecture consists of three components: (1) an embedding layer that maps each point $x_i \in \mathcal{M}$ to a latent representation, (2) six multi-head self-attention blocks that process the embedded point cloud, and (3) an unembedding layer that projects back to the tangent space $T_{X_t}\mathcal{M}^N$. Each self-attention block processes the latent representation $h \in \mathbb{R}^{N \times d}$ as follows. First, we incorporate temporal and conditional information by adding learned embeddings to form $\tilde{h} = h + \text{Embed}_t(\phi(t)) + \text{Embed}_c(c)$, where $\phi(t)$ denotes Fourier features of time $t$. The block then applies:
\begin{align*}
h' &\leftarrow h + \text{MHA}(\text{LayerNorm}(\tilde{h})) \\
h &\leftarrow h' + \text{MLP}(\text{LayerNorm}(h'))
\end{align*}
where the multi-head attention (MHA) uses 4 heads, and the residual connections are to the original stream $h$. After the final attention block, the unembedding layer projects the latent representation back to the ambient dimension of $\mathcal{M}$ to produce the tangent velocity vectors.

\paragraph{Classifier-Free Guidance} For conditional generation, we employ classifier-free guidance to improve sample quality and controllability. The conditioning embedding $\text{Embed}_c(c)$ projects the condition $c$ to the unit sphere in the embedding dimension, which creates a meaningful semantic separation between null and real conditions. Real conditions lie on the unit sphere while the null condition is represented by the zero vector at the origin. During training, we randomly drop conditions with a probability of $p$, and for these null conditions, we set the embedding $\text{Embed}_c(c)$ to the zero vector (not the input $c$ itself). At inference time, we use the classifier-free guidance formula:
\begin{equation*}
v_\theta^{\text{CFG}}(X_t, t, c) = v_\theta(X_t, t, \emptyset) + w \cdot \left(v_\theta(X_t, t, c) - v_\theta(X_t, t, \emptyset)\right)
\end{equation*}
where $w \geq 1$ is the guidance weight, and $\emptyset$ denotes the null condition. By default we set $p=0.1$ and $w=2$.

\paragraph{Training Details} By default, we train all models for 500,000 steps using the Adam optimizer \citep{kingma2014adam} with a learning rate of $3 \times 10^{-4}$. We apply learning rate decay with a factor of 0.99 every 5,000 steps. During training, we sample point clouds of $\min(N,1024)$ (where N is the number of particles in the empirical measure) points from each distribution with a batch size of 32. The time variable $t$ is sampled uniformly from $[0,1]$ for each training step, and generation is performed using integration via 1000 Euler steps. 

For computing the Riemannian entropic map, we set the entropic regularization parameter $\varepsilon = 0.002$, where all distance matrices are scaled to have a maximum value of 1. The number of Sinkhorn iterations is determined automatically before training by sampling 100 pairs of distributions and finding the minimum number of iterations required to achieve convergence for 95\% of the pairs.  

\paragraph{Mini-batch OT Coupling} For unconditional generation, we enhance training by matching samples within mini-batches using optimal transport \citep{pooladian2023multisample}. This approach effectively performs OT in the Wasserstein space itself \citep{emami2025optimal, bonet2025flowing}. Given source distributions $\{\mu_i\}_{i=1}^{B_s} \sim \mathbb{P}_0$ (noise) and target distributions $\{\nu_j\}_{j=1}^{B_t} \sim \mathbb{P}_1$ (data) within a mini-batch, we construct a cost matrix $C \in \mathbb{R}^{B_s \times B_t}$ and sample training pairs $(\mu_i, \nu_j) \sim \pi^*$ from the resulting optimal coupling $\pi^*$.

To ensure scalability, we compute the cost matrix using the geometric Chamfer distance rather than the Wasserstein distance:
\begin{equation*}
C_{i,j} = \text{CD}(\mu_i, \nu_j) = \frac{1}{|\mu_i|}\sum_{x \in \mu_i} \min_{y \in \nu_j} d_{\mathcal{M}}(x,y) + \frac{1}{|\nu_j|}\sum_{y \in \nu_j} \min_{x \in \mu_i} d_{\mathcal{M}}(x,y)
\end{equation*}
where $d_{\mathcal{M}}$ is the intrinsic geodesic distance on the manifold $\mathcal{M}$. The optimal transport plan $\pi^*$ is computed using the Sinkhorn algorithm with entropic regularization $\varepsilon = 0.001$, where distance matrices are scaled to maximum value 1 and the number of iterations is determined automatically as described above.

All models were implemented in JAX \citep{frostig2019compiling} using optimal transport tools from \citep{cuturi2022optimal} and trained on a single NVIDIA B200 GPU, taking 3-6 hours depending on the dataset and manifold geometry.

\paragraph{Source Noise Generation} 
While rigorously defined distributions exist on non-Euclidean domains (e.g., von Mises or wrapped normal distributions), flow matching only requires a source that is easy to sample from, without needing a closed-form likelihood. We note that naively sampling noise as $\text{proj}_{\mathcal{M}}(\mathcal{N}(0, I))$ (where $\text{proj}_{\mathcal{M}}$ is the projection onto the manifold) fails to produce meaningful variability. Our source distribution must be a distribution over \textit{distributions} on $\mathcal{M}$. Indeed, for large enough sample sizes $n$, all generated ``noise'' point clouds become nearly identical, resulting in a degenerate distribution. 

Instead, following \citet{haviv2024wasserstein}, we employ a two-level hierarchical sampling scheme. First, we randomly sample a mean $\mu$ and covariance $\Sigma$, then generate points from $\pi(\mathcal{N}(\mu, \Sigma))$. The distribution for $\mu$ is derived from the empirical mean and covariance of all points in the training data, while the distribution for $\Sigma$ is based on the per-sample covariances. Specifically, let $\{L_i\}$ be the Cholesky factors of the per-sample covariances. We compute the element-wise mean $\bar{L}$ and standard deviation $\sigma_L$ of these lower-triangular matrices, sample a random lower-triangular matrix $\tilde{L} \sim \mathcal{N}(\bar{L}, \text{diag}(\sigma_L^2))$, and set $\Sigma = \tilde{L}\tilde{L}^\top$. This construction ensures $\Sigma$ is positive semidefinite. Samples from the  resulting mean and covariance are then projected onto the manifold geometry. For high-dimensional datasets (e.g., single-cell data on $\mathbb{S}^{128-1}$), we only sample diagonal covariances for computational efficiency.

\subsection{Geometric Operations on Manifolds}

We provide explicit formulas for the geometric operations used in our method across different manifolds. For each geometry, we define: (1) the squared geodesic distance $d^{2}_{\mathcal{M}}(p, q)$, (2) the geodesic interpolant $\gamma(t; p_0, p_1)$ connecting $p_0$ to $p_1$ at time $t \in [0,1]$, (3) the tangent velocity $v(t; p_0, p_1) = \frac{d}{dt}\gamma(t; p_0, p_1)$, (4) the exponential map $\exp_p(v, \Delta t)$ that moves from point $p$ along tangent vector $v$ by time $\Delta t$, and (5) the tangent norm $\|\cdot\|_{T_p\mathcal{M}}^2$ used for computing the training loss between predicted and target velocities. 

\textbf{Important note on representation:} We always represent manifolds using extrinsic coordinates embedded in Euclidean space (e.g., $\mathbb{S}^2$ as 3D unit vectors in $\mathbb{R}^3$, $\mathbb{H}^2$ as 3D vectors in Lorentz space). This extrinsic representation significantly simplifies neural network modeling, as the network can operate on fixed-dimensional Euclidean vectors while geometric constraints are enforced through projection operations. These formulas correspond directly to our implementation and are included here for completeness and reproducibility.

\paragraph{Euclidean Space $\mathbb{R}^d$}
The standard flat geometry. Points $p \in \mathbb{R}^d$ have trivial tangent spaces $T_p\mathbb{R}^d \cong \mathbb{R}^d$. The geodesics are straight lines.

\noindent\textit{Distance and interpolation:}
\begin{equation*}
d^{2}_{\mathbb{R}^d}(p, q) = \|p - q\|_{2}^2, \qquad \gamma(t; p_0, p_1) = (1-t)p_0 + tp_1
\end{equation*}

\noindent\textit{Velocity and exponential map:}
\begin{equation*}
v(t; p_0, p_1) = p_1 - p_0, \qquad \exp_p(v, \Delta t) = p + v \cdot \Delta t
\end{equation*}

\noindent\textit{Tangent space:} $\|v - w\|_{T_p\mathbb{R}^d}^2 = \|v - w\|^2$. Projection: $\text{proj}(p) = p$.

\paragraph{$d$-Torus $\mathbb{T}^d$}
The $d$-dimensional torus $\mathbb{T}^d = (\mathbb{S}^1)^d$. Points are represented as $d$ angles $p \in [0, 2\pi)^d$. The geodesic distance accounts for periodic wraparound in each coordinate.

\noindent\textit{Distance and logarithmic map:}
\begin{align*}
d^{2}_{\mathbb{T}^d}(p, q) &= \sum_{i=1}^d \min(\lvert p_i - q_i \rvert, 2\pi - \lvert p_i - q_i \rvert)^2,\\
\log_{p_0}(p_1) &= \arctan2(\sin(p_1 - p_0), \cos(p_1 - p_0))
\end{align*}

\noindent\textit{Interpolation and velocity:}
\begin{equation*}
\gamma(t; p_0, p_1) = (p_0 + t \cdot \log_{p_0}(p_1)) \mod 2\pi, \qquad v(t; p_0, p_1) = \log_{p_0}(p_1)
\end{equation*}

\noindent\textit{Exponential map:} $\exp_p(v, \Delta t) = (p + v \cdot \Delta t) \mod 2\pi$.

\noindent\textit{Tangent space:} $\|v - w\|_{T_p\mathbb{T}^d}^2 = \|v - w\|^2$. Projection: $\text{proj}(p) = p \mod 2\pi$.

\paragraph{$d$-Sphere $\mathbb{S}^d$}
The $d$-dimensional sphere embedded in $\mathbb{R}^{d+1}$. Points are unit vectors $p \in \mathbb{R}^{d+1}$ with $\|p\| = 1$. The tangent space at $p$ is $T_p\mathbb{S}^d = \{v \in \mathbb{R}^{d+1} : \langle v, p \rangle = 0\}$. We use spherical linear interpolation (SLERP).

\noindent\textit{Distance:} $d^{2}_{\mathbb{S}^d}(p, q) = \arccos(\text{clip}(\langle p, q \rangle, -1, 1))^2$, where $\theta = \arccos(\langle p_0, p_1 \rangle)$.

\noindent\textit{Interpolation:}
\begin{equation*}
\gamma(t; p_0, p_1) = \begin{cases}
\frac{\sin((1-t)\theta)}{\sin(\theta)} p_0 + \frac{\sin(t\theta)}{\sin(\theta)} p_1 & \text{if } \sin(\theta) \geq 10^{-6} \\
(1-t)p_0 + tp_1 & \text{otherwise}
\end{cases}
\end{equation*}

\noindent\textit{Velocity:}
\begin{equation*}
v(t; p_0, p_1) = \begin{cases}
-\frac{\theta \cos((1-t)\theta)}{\sin(\theta)} p_0 + \frac{\theta \cos(t\theta)}{\sin(\theta)} p_1 & \text{if } \sin(\theta) \geq 10^{-6} \\
-p_0 + p_1 & \text{otherwise}
\end{cases}
\end{equation*}

\noindent\textit{Exponential map:}
\begin{equation*}
\exp_p(v, \Delta t) = \begin{cases}
\cos(\|v\| \Delta t) p + \sin(\|v\| \Delta t) \frac{v}{\|v\|} & \text{if } \|v\| \geq 10^{-6} \\
\text{normalize}(p + v \Delta t) & \text{otherwise}
\end{cases}
\end{equation*}

\noindent\textit{Tangent space:} $\|v - w\|_{T_p\mathbb{S}^d}^2 = \|v_{\tan} - w_{\tan}\|^2$ where $v_{\tan} = v - \langle v, p \rangle p$. Projection: $\text{proj}(p) = p/\|p\|$.

\paragraph{Hyperbolic Space $\mathbb{H}^d$ (Lorentz Model)}
Hyperbolic space in the Lorentz (hyperboloid) model. Points lie on $\{x \in \mathbb{R}^{d+1} : \langle x, x \rangle_L = -1, x_0 > 0\}$ with Minkowski inner product $\langle x, y \rangle_L = -x_0 y_0 + \sum_{i=1}^d x_i y_i$. The tangent space at $p$ is $T_p\mathbb{H}^d = \{v : \langle v, p \rangle_L = 0\}$.

\noindent\textit{Distance:}
\[
d^{2}_{\mathbb{H}^d}(p, q) = \mathrm{arccosh}\!\bigl(\mathrm{clip}(-\langle p, q \rangle_L,\, 1 + 10^{-7},\, \infty)\bigr)^2,
\quad \omega = \mathrm{arccosh}(-\langle p_0, p_1 \rangle_L).
\]

\noindent\textit{Interpolation:}
\begin{equation*}
\gamma(t; p_0, p_1) = \begin{cases}
\frac{\sinh((1-t)\omega)}{\sinh(\omega)} p_0 + \frac{\sinh(t\omega)}{\sinh(\omega)} p_1 & \text{if } \sinh(\omega) \geq 10^{-6} \\
(1-t)p_0 + tp_1 & \text{otherwise}
\end{cases}
\end{equation*}

\noindent\textit{Velocity:}
\begin{equation*}
v(t; p_0, p_1) = \begin{cases}
\tfrac{-\omega \cosh((1-t)\omega)}{\sinh(\omega)} p_0 + \tfrac{\omega \cosh(t\omega)}{\sinh(\omega)} p_1 & \sinh(\omega) \geq 10^{-6} \\
-p_0 + p_1 & \text{otherwise}
\end{cases}
\end{equation*}

\noindent\textit{Exponential map} (with $c = \|v\|_L$, $\|v\|_L = \sqrt{\max(\langle v,v\rangle_L,0)}$):
\hfuzz=9pt
\begin{equation*}
\exp_p(v,\Delta t) = \begin{cases}
\cosh(c\Delta t)\,p + \tfrac{\sinh(c\Delta t)}{c}\,v & c\geq 10^{-6} \\
\mathrm{proj}(p+v\Delta t) & \text{otherwise}
\end{cases}
\end{equation*}
\hfuzz=0.1pt

\noindent\textit{Tangent space:} $\|v - w\|_{T_p\mathbb{H}^d}^2 = \langle v_{\tan} - w_{\tan}, v_{\tan} - w_{\tan} \rangle_L$ where $v_{\tan} = v + \langle v, p \rangle_L p$. Projection: $\text{proj}(p) = [\sqrt{1 + \|p_{1:d}\|^2}, p_1, \ldots, p_d]^\top$.

\begin{table}[p]
\centering
\scriptsize
\renewcommand{\arraystretch}{0.85}
\setlength{\tabcolsep}{3pt}
\begin{tabular}{llrrrlr}
\toprule
Dataset & Method & Train $N$ & Test $N$ & $|\mathcal{P}|$ & Sink.\ iters & Time \\
\midrule
  MNIST 3 & FM & 6,131 & 1,010 & 237 & -- & 2h\,15m \\
   & RFM &  &  &  & -- & 1h\,39m \\
   & Set-FM &  &  &  & -- & 4h\,17m \\
   & Set-RFM &  &  &  & -- & 3h\,08m \\
   & WFM  &  &  &  & 1790 & 9h\,12m \\
   & RWEFM  &  &  &  & 850 & 6h\,04m \\
   & PSF &  &  &  & -- & $\sim$12h \\
   & PVD &  &  &  & -- & $\sim$12h \\
  \cmidrule(l){1-7}
  MNIST 4 & FM & 5,842 & 982 & 215 & -- & 2h\,11m \\
   & RFM &  &  &  & -- & 1h\,36m \\
   & Set-FM &  &  &  & -- & 4h\,13m \\
   & Set-RFM &  &  &  & -- & 3h\,11m \\
   & WFM  &  &  &  & 2400 & 10h\,54m \\
   & RWEFM  &  &  &  & 930 & 6h\,00m \\
   & PSF &  &  &  & -- & $\sim$12h \\
   & PVD &  &  &  & -- & $\sim$12h \\
  \cmidrule(l){1-7}
  MNIST 8 & FM & 5,851 & 974 & 276 & -- & 2h\,12m \\
   & RFM &  &  &  & -- & 1h\,41m \\
   & Set-FM &  &  &  & -- & 4h\,02m \\
   & Set-RFM &  &  &  & -- & 3h\,12m \\
   & WFM  &  &  &  & 1710 & 10h\,29m \\
   & RWEFM  &  &  &  & 810 & 6h\,34m \\
   & PSF &  &  &  & -- & $\sim$12h \\
   & PVD &  &  &  & -- & $\sim$12h \\
\midrule
  EMNIST h & FM & 4,800 & 800 & 300 & -- & 46m \\
   & RFM &  &  &  & -- & 46m \\
   & Set-FM &  &  &  & -- & 2h\,03m \\
   & Set-RFM &  &  &  & -- & 1h\,58m \\
   & WFM  &  &  &  & 490 & 3h\,33m \\
   & RWEFM  &  &  &  & 600 & 4h\,06m \\
   & PSF &  &  &  & -- & $\sim$12h \\
   & PVD &  &  &  & -- & $\sim$12h \\
  \cmidrule(l){1-7}
  EMNIST w & FM & 4,800 & 800 & 306 & -- & 46m \\
   & RFM &  &  &  & -- & 47m \\
   & Set-FM &  &  &  & -- & 2h\,01m \\
   & Set-RFM &  &  &  & -- & 2h\,00m \\
   & WFM  &  &  &  & 530 & 3h\,52m \\
   & RWEFM  &  &  &  & 720 & 4h\,22m \\
   & PSF &  &  &  & -- & $\sim$12h \\
   & PVD &  &  &  & -- & $\sim$12h \\
  \cmidrule(l){1-7}
  EMNIST y & FM & 4,800 & 800 & 278 & -- & 43m \\
   & RFM &  &  &  & -- & 44m \\
   & Set-FM &  &  &  & -- & 2h\,09m \\
   & Set-RFM &  &  &  & -- & 1h\,50m \\
   & WFM  &  &  &  & 540 & 3h\,42m \\
   & RWEFM  &  &  &  & 650 & 4h\,00m \\
   & PSF &  &  &  & -- & $\sim$12h \\
   & PVD &  &  &  & -- & $\sim$12h \\
\midrule
  KMNIST ki & FM & 6,000 & 1,000 & 464 & -- & 1h\,30m \\
   & RFM &  &  &  & -- & 1h\,44m \\
   & Set-FM &  &  &  & -- & 2h\,13m \\
   & Set-RFM &  &  &  & -- & 2h\,15m \\
   & WFM  &  &  &  & 420 & 4h\,51m \\
   & RWEFM  &  &  &  & 440 & 5h\,11m \\
  \cmidrule(l){1-7}
  KMNIST na & FM & 6,000 & 1,000 & 418 & -- & 1h\,51m \\
   & RFM &  &  &  & -- & 2h\,03m \\
   & Set-FM &  &  &  & -- & 2h\,43m \\
   & Set-RFM &  &  &  & -- & 2h\,53m \\
   & WFM  &  &  &  & 350 & 4h\,23m \\
   & RWEFM  &  &  &  & 480 & 5h\,09m \\
  \cmidrule(l){1-7}
  KMNIST ma & FM & 6,000 & 1,000 & 428 & -- & 1h\,47m \\
   & RFM &  &  &  & -- & 1h\,37m \\
   & Set-FM &  &  &  & -- & 2h\,13m \\
   & Set-RFM &  &  &  & -- & 2h\,41m \\
   & WFM  &  &  &  & 420 & 4h\,37m \\
   & RWEFM  &  &  &  & 460 & 5h\,05m \\
\midrule
  MDcath & Set-FM & 3,432 & 858 & 127,000 & -- & 4h\,23m \\
   & Set-RFM &  &  &  & -- & 5h\,11m \\
   & WFM  &  &  &  & 600 & 7h\,32m \\
   & RWEFM  &  &  &  & 1090 & 9h\,15m \\
\midrule
  scRNA-seq & Set-FM & 1,200 & 240 & 5,000 & -- & 7h\,33m \\
   & Set-RFM &  &  &  & -- & 8h\,44m \\
   & WFM  &  &  &  & 420 & 16h\,08m \\
   & RWEFM  &  &  &  & 310 & 14h\,47m \\
\bottomrule
\end{tabular}
\caption{\textbf{Wall-clock training times} Set-FM/Set-RFM pay an extra self-attention cost over point-cloud pairs; WFM and RWEFM pay a per-step Sinkhorn OT cost whose iteration count (column 6) is determined automatically per experiment. Even on the same dataset, WFM and RWEFM require different iteration counts because they use different distance kernels (ambient Euclidean vs.\ Riemannian geodesic) leading to different convergence rates. Dashes (--): no OT coupling.}
\label{tab:timing}
\end{table}

\paragraph{Handling cut loci on the manifolds.} The validity of our theorems relies on the assumption that the $\log$ map is single valued and smooth. That said, for the manifolds considered above, the cut locus of a fixed point is a measure-zero set, such that ambiguity occurs with probability zero. In practice, our implementation numerically disambiguates these situations by returning a single value for the logarithmic map, as the above velocity expressions show.

\subsection{Benchmarking metrics for generation of distributions on Manifolds}
\label{sec:benchmark-metrics}

\begin{figure}[h]
\centering
\includegraphics[width=0.7\linewidth]{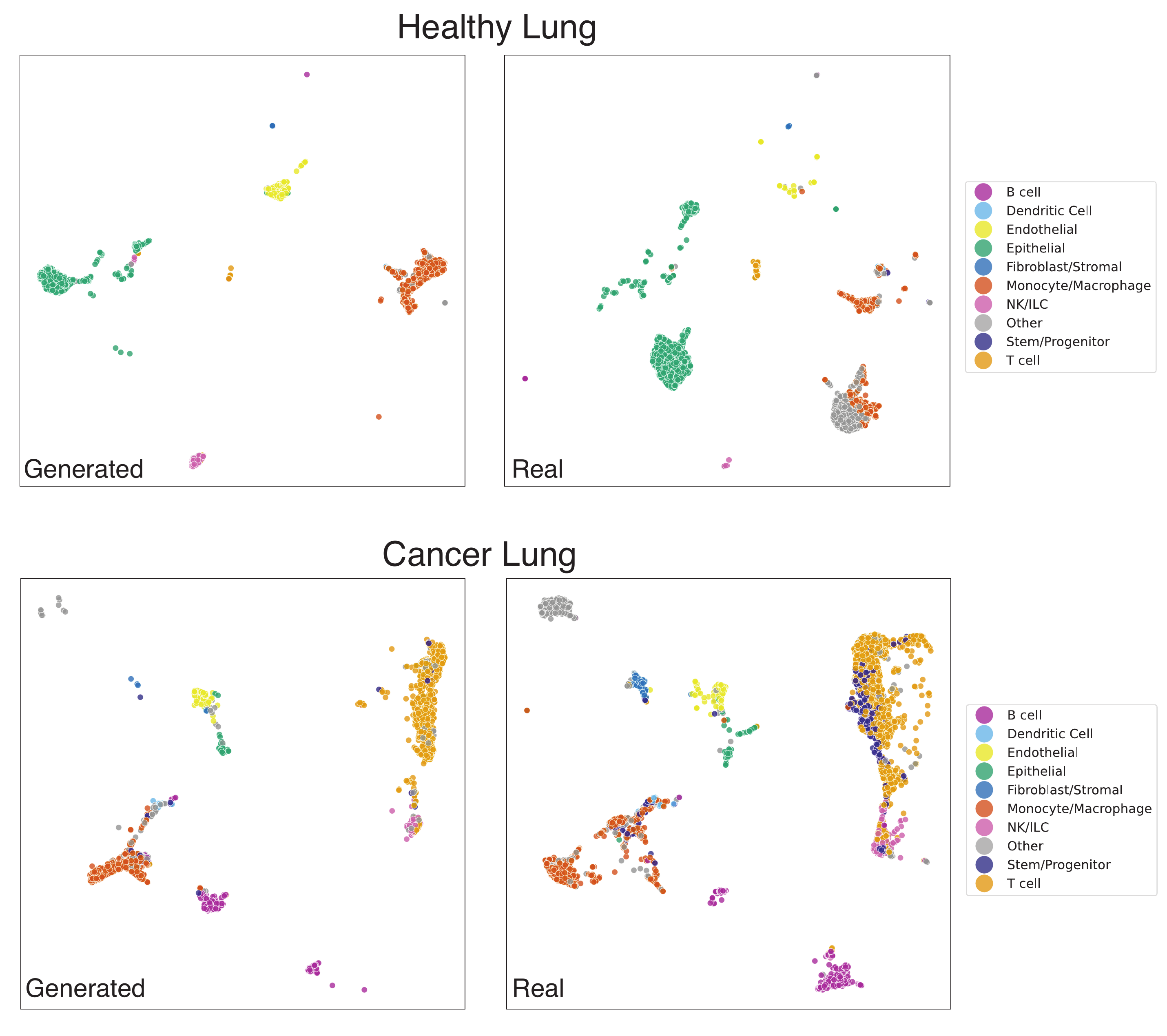}
\caption{\textbf{Individual whole single-cell samples generated by RWEFM.} We show additional individual samples generated by RWEFM in the latent space of SCimilarity \citep{heimberg2025cell}, a foundation model for single-cell data that operates in $\mathbb{S}^{128-1}$. Each panel shows the UMAP visualization of a single-cell sample, with cells colored by their cell type. Generated samples (left) closely match the cellular composition and structure of true samples (right).}
\label{fig:pascientflow_indv}
\end{figure}

For unconditional generation, we compare real and generated distributions using a classwise nearest-neighbor accuracy deviation and Maximum Mean Discrepancy (MMD), building on point-cloud evaluation measures such as those in~\citet{zhou20213d}. Each observation in this evaluation is an entire point cloud. We generate as many clouds as there are clouds in the held-out test set, pool the two sets, and classify each cloud as real or generated using the label of its nearest neighbor, excluding the cloud itself. Distances between clouds are computed using geometric CD or EMD.

Let $a_{\mathrm{real}}$ be the fraction of real clouds correctly classified as real and $a_{\mathrm{gen}}$ the fraction of generated clouds correctly classified as generated. We report the classwise 1-NN deviation abbreviated 1-NN-D in the tables. In particular, this is not the raw pooled classification accuracy. With equally sized classes, the pooled accuracy is $(a_{\mathrm{real}}+a_{\mathrm{gen}})/2$. Both $(a_{\mathrm{real}},a_{\mathrm{gen}})=(1,0)$ and $(0.5,0.5)$ give pooled accuracy $0.5$, although the first case labels every cloud as real. Our score distinguishes these cases, giving $0.5$ and $0$, respectively.

Lower values therefore indicate smaller classwise deviations from 50\% accuracy. The minimum value $0$ means that both empirical classwise accuracies equal $0.5$; the maximum is $0.5$. Finite-sample fluctuations can yield a nonzero score even when the real and generated distributions coincide.

We also compute the Maximum Mean Discrepancy (MMD) using both geometric CD and EMD kernels. The MMD is defined as:
\begin{equation*}
\text{MMD}^2(\mu, \nu) = \mathbb{E}_{x,x' \sim \mu}[k(x,x')] + \mathbb{E}_{y,y' \sim \nu}[k(y,y')] - 2\mathbb{E}_{x \sim \mu, y \sim \nu}[k(x,y)]
\end{equation*}
where $k(\cdot, \cdot)$ is a kernel function. We use $k(x,y) = \exp(-d(x,y)/\sigma)$ where $d$ is either the geometric CD or EMD distance, and $\sigma = 0.1$ is a scaling factor. We report both MMD-CD and MMD-EMD values.

For conditional generation, where we have a known ground-truth target distribution we are trying to match, we directly compare the generated distribution to the ground-truth using geometric Wasserstein distances $W_1$ and $W_2$, as well as MMD. All distances are computed using the intrinsic geometry of the underlying manifold.

\subsection{MNIST \& EMNIST on Sphere and Hyperbolic Space}

In \cref{fig:mnist_emnist_figures} we show how RWEFM can learn to generate distributions on the sphere and hyperbolic space. We use the MNIST digit dataset \citep{lecun1998gradient} for the sphere experiments and EMNIST letters \citep{cohen2017emnist} for the hyperbolic experiments. Both datasets consist of $28\times 28$ grayscale images of handwritten digits/letters. We convert each image to a point cloud in $\mathbb{R}^2$ by thresholding each pixel and normalizing the coordinates to $[-1,1]$. This transforms each image from a point in $\mathbb{R}^{28\times 28}$ to a distribution over $\mathbb{R}^2$.

For MNIST, we convert the point-cloud over $\mathbb{R}^2$ to a distribution over $\mathbb{S}^2$. We treat the $x$ and $y$ coordinates of each point as longitude and latitude on the sphere and produce the $3D$ coordinates via the spherical to Cartesian conversion. For EMNIST, we use the Lorentz model of hyperbolic space $\mathbb{H}^2$. We convert the point-cloud over $\mathbb{R}^2$ to a distribution over $\mathbb{H}^2$ by simply keeping the $x$ and $y$ coordinates as is and adding a $z$ coordinate such that each point lies on the hyperboloid defined by $-x^2 - y^2 + z^2 = 1$.

For the torus experiments, we use KMNIST \citep{kmnist2018}, a dataset of $28\times28$ grayscale images of 10 handwritten Kanji characters. As with MNIST and EMNIST, we threshold each pixel and normalize to obtain a point cloud in $\mathbb{R}^2$. To embed on the 2-torus $\mathbb{T}^2 = \mathbb{S}^1 \times \mathbb{S}^1$, we rescale the $x$ and $y$ coordinates from $[-1,1]$ to $[0,2\pi]$, treating each as an angular coordinate on the torus.

\begin{figure}[h]
\centering
\includegraphics[width=0.8\linewidth]{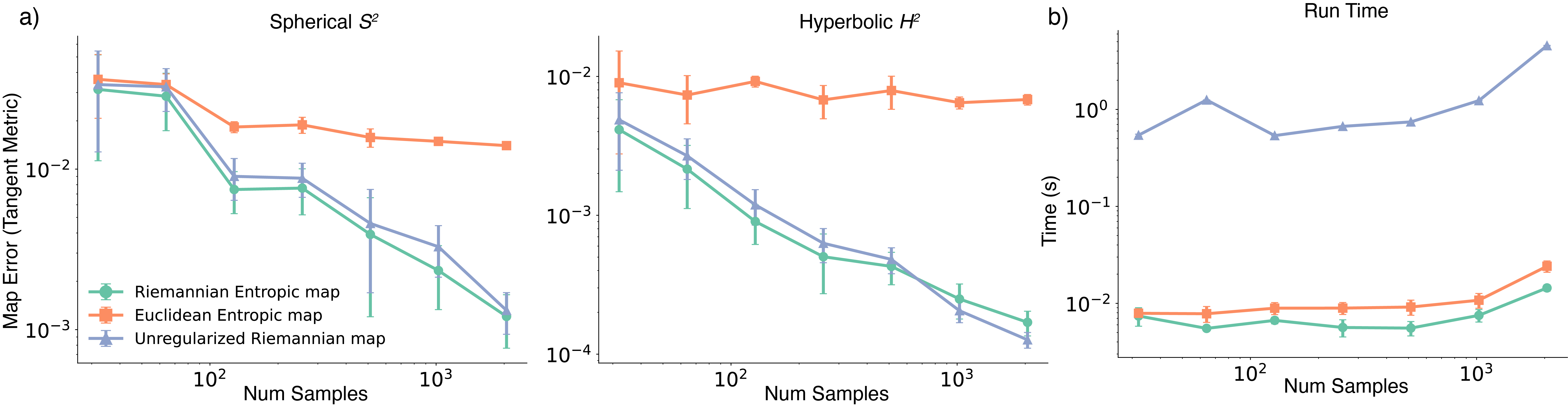}
\caption{\textbf{Entropic Map Benchmark.} \textbf{a.} We evaluate our Riemannian entropic map estimator on constructed examples on the sphere $\mathbb{S}^2$ and hyperbolic space $\mathbb{H}^2$ where the ground-truth map is known. This ground truth arises from shifting an original distribution via the Riemannian gradient of a convex function over each space. We compare the estimation error against the Euclidean entropic map and the unregularized OT map. \textbf{b.} Computational efficiency of our Riemannian entropic map implementation compared the Euclidean map and solving unregularized OT.}
\label{fig:entropic_map_benchmark}
\end{figure}

\subsection{Distributions on a General Mesh (Stanford Bunny)}
\label{app:bunny_mesh}

To demonstrate RWEFM on a geometry without closed-form $\exp$/$\log$ maps, we generate MNIST digit distributions on the surface of the Stanford bunny. We reduce the mesh resolution to $\sim$7k faces by vertex clustering, then center it at its centroid and rescale by the largest absolute coordinate to map it isotropically into $[-1,1]^3$. We equip the mesh with the spectral (biharmonic) premetric of \citet{chen2023flow}. Concretely, we build the discrete Laplace--Beltrami operator from the cotangent stiffness matrix $L$---with edge weights $\tfrac12(\cot\alpha_{ij}+\cot\beta_{ij})$, where $\alpha_{ij},\beta_{ij}$ are the two angles opposite edge $(i,j)$---and the lumped (barycentric) mass matrix $M=\mathrm{diag}(m_i)$, where $m_i=\tfrac13\sum_{f\ni i}A_f$ sums one third of the areas $A_f$ of the faces incident to vertex $i$. We solve the generalized eigenproblem $L\phi_i=\lambda_i M\phi_i$ for its $k=100$ smallest eigenpairs $(\lambda_i,\phi_i)$ and define the squared distance $d^2(x,y)=\sum_{i}\lambda_i^{-2}\,(\phi_i(x)-\phi_i(y))^2$, evaluated at surface points by barycentric interpolation over the nearest triangle. Because the spectral premetric is not geodesic ($\lVert\nabla d\rVert\neq 1$), we integrate the premetric conditional vector field of \citet{chen2023flow} directly for the interpolant, and evaluate the Riemannian entropic map from the distance function and mesh projection only (Appendix~\ref{app:entropic_no_exp_no_log}). Each MNIST image is binarized with Otsu's threshold; from the foreground pixels we sample a fixed $N=150$ points (with replacement when fewer than $150$ are available), normalize them to $[-1,1]^2$, and add small Gaussian jitter. Each resulting planar cloud is then laid onto a fixed tangent chart on the bunny's flank via the mesh exponential map.

We train one model per digit class $\{0,2,9\}$ for $100{,}000$ steps, using the same network architecture and optimizer as the closed-form experiments. \textbf{In this experiment, the methods denoted RWEFM and WFM use the sampled OT map} rather than the barycentric entropic map used elsewhere in the paper: instead of transporting each source particle to a weighted average of target particles, we assign it to a single target particle drawn from the entropic plan (the same sampled map used for the single-cell experiments). RWEFM and SetRFM operate on the mesh with the spectral metric, whereas WFM and SetFM operate in ambient $\mathbb{R}^3$ and their generated clouds are projected onto the surface for scoring; SetRFM and SetFM use random (identity) couplings without OT. Extended MMD metrics are reported in \cref{tab:bunny_mmd}.

\begin{table}[h!]
\centering
\small
\begin{tabular}{lcccccc}
\toprule
& \multicolumn{2}{c}{Digit 0} & \multicolumn{2}{c}{Digit 2} & \multicolumn{2}{c}{Digit 9} \\
\cmidrule(lr){2-3}\cmidrule(lr){4-5}\cmidrule(lr){6-7}
Method & CD & EMD & CD & EMD & CD & EMD \\
\midrule
SetFM  & 0.0414 & 0.2233 & 0.0229 & 0.1705 & 0.0287 & 0.1699 \\
WFM    & 0.0034 & 0.0799 & 0.0012 & 0.0604 & 0.0085 & 0.1157 \\
\addlinespace
SetRFM & 0.0008 & 0.0483 & 0.0010 & 0.0552 & 0.0010 & 0.0539 \\
RWEFM  & \textbf{0.0007} & \textbf{0.0468} & \textbf{0.0010} & \textbf{0.0538} & \textbf{0.0009} & \textbf{0.0527} \\
\bottomrule
\end{tabular}
\caption{\textbf{MMD on the Stanford bunny mesh} (lower is better), with Chamfer (CD) and Earth Mover's (EMD) ground metrics under the mesh spectral distance, for MNIST digits generated on the bunny. Companion to \cref{tab:bunny_mnist}; best per column in bold. RWEFM and WFM use the sampled OT map.}
\label{tab:bunny_mmd}
\end{table}

\subsection{de novo generation of Single-Cell Samples on Spherical spaces}

We demonstrate the ability of RWEFM to generate de-novo single-cell samples in the latent space of SCimilarity \citep{heimberg2025cell}, which is a foundation model for single-cell data that operates in $\mathbb{S}^{128-1}$. SCimilarity was used to embed a single-cell atlas from healthy blood samples of $1,200$ human donors, each consisting of $5,000$ cells profiled with $20,000$ genes across various patient conditions. For benchmarking (as in Tables \ref{tab:scrnaseq_cd} \& \ref{tab:scrnaseq_ot}), we perform unconditional generation, randomly holding out $240$ donors and evaluating the quality of $240$ generated samples against the true held-out samples. We note that due to the high dimensionality of the space, we used a \textit{sampled} map instead of the entropic. Briefly, instead of assigning each particle in the source distribution to a weighted average of all particles in the target distribution, we assign each particle to a single particle in the target distribution based on the optimal transport plan. We believe the curse of dimensionality makes the entropic map less effective in this setting, as the entropic map points to unrealistic barycenters of target samples, whereas the sampled map points to actual target samples.

Furthermore, we demonstrate class-conditional generation by training a flow to cell distribution conditioned on tissue status (healthy vs diseased) on an expanded dataset which included pathological samples. In \cref{fig:pascientflow}, we show that generated samples match the profiles of true samples, and display a marked shift in cellular composition between healthy and diseased samples. This in turn demonstrates the value of using a foundation model, as opposed to Euclidean generation on raw gene expression data or other lower-dimensional Euclidean embeddings. Since RWEFM operates directly in the spherical latent space of SCimilarity, it can leverage the functionality that the foundation model provides, such as accurate cell typing and batch effect robust embeddings. 

In Figure~\ref{fig:pascientflow_indv}, we present examples of individual samples generated with RWEFM for different tissue and disease combinations. For each generated sample, we match it with the closest sample in the real data by Riemannian Wasserstein distance. We observe that RWEFM can generate realistic tissue samples, showing great potential for downstream biological applications.

\subsection{Protein Torsion Angle Generation on Torus}

To generate the underlying data, we utilized the MD-CATH dataset \citep{mirarchi2024mdcath}, which contains Molecular Dynamics simulations for domain structures from the CATH database. For each protein in the dataset, we extracted the backbone torsion angles ($\phi, \psi$) for every residue at every time step of the simulation. We then aggregated these angle pairs across all residues and time points into a single collection for each protein. Since $\phi$ and $\psi$ are periodic, this process effectively converts each protein into a single empirical distribution over the flat torus $\mathbb{T}^2$.

We performed conditional generation by embedding the specific amino acid sequence of each protein using the ESM-2 protein language model to obtain a conditioning vector. We randomly withheld 20\% of the proteins as a test set. For these test proteins, we generated their torsion angle distributions (point clouds of size $N=2048$) conditioned on their ESM embeddings (\cref{fig:torus_torsion}). Finally, we compared the generated distributions to the ground-truth distributions derived from the MD simulations using standard distributional distance metrics on the torus (\cref{tab:torsion_generation}).

\begin{figure}[h]
\centering
\includegraphics[width=0.6\linewidth]{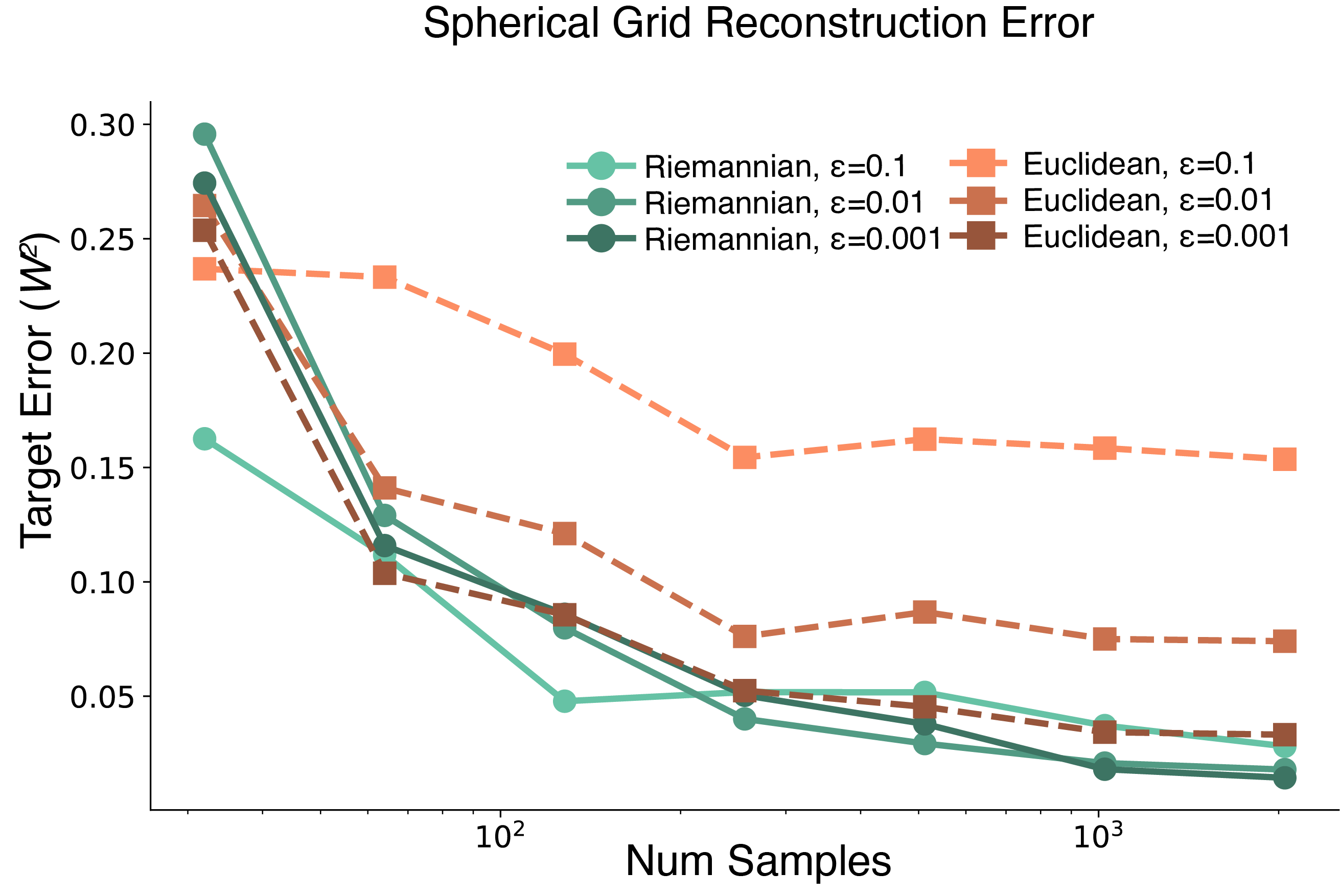}
\caption{\textbf{Entropic map reconstruction error} For the spherical grid example from \cref{fig:entropic_map_sphere_oos}, we benchmark the map quality of entropic and euclidean map estimators as a function of sample size and regularization strength. We report the (Spherical) EMD between the source samples pushed forward by the estimated map and the target samples.}
\label{fig:entropic_map_benchmark_map_error}
\end{figure}

\subsection{Benchmarking of Riemannian Entropic Map}
\label{app:benchmark_entropic}

In this manuscript we introduced the Riemannian analogue of the entropic optimal transport map estimator. In \cref{fig:entropic_map_benchmark}, we benchmark the accuracy and computational efficiency of this estimator against the Euclidean entropic map and the unregularized OT map on constructed examples on the sphere $\mathbb{S}^2$ and hyperbolic space $\mathbb{H}^2$. The ground-truth map is known in these examples, as it arises from shifting an original distribution via the Riemannian gradient of a convex function over each space. In both cases, the source is a (projected) Gaussian distribution centered at the north pole, and the target is obtained by applying the ground-truth map to the source samples. We vary the number of samples and the entropic regularization strength, and report the estimation error of each method in terms of the (spherical/hyperbolic) tangent norm between the estimated and ground-truth map velocities.

Next, we ask how does the quality of the pushed forward distribution vary as a function of map estimator, sample size and regularization strength. In \cref{fig:entropic_map_benchmark_map_error}, we report the (spherical) EMD between the source samples pushed forward by the estimated map and the target samples, for varying sample sizes and regularization strengths. Here too the source sample is a (projected) Gaussian centered at the north pole, and the target is the gridded sphere as shown in \cref{fig:entropic_map_sphere_oos}. In each experiment, we estimate the map from a limited number of samples, and use the out-of-sample extension of map to push forward the entire ($n=15,000$) set of source samples. We see that the Riemannian entropic map consistently outperforms the Euclidean entropic map across sample sizes and regularization strengths.

\subsection{Training Time versus Generation Quality Tradeoff}
\label{sec:tradeoff}

A key practical consideration when applying RWEFM is the tradeoff between computational cost and generation quality. To provide users with concrete guidance on this tradeoff, we conduct a comprehensive ablation study on the two primary hyperparameters that control both training efficiency and sample quality: the entropic regularization parameter $\varepsilon$ and the number of particles $n$ sampled from each distribution during training.

\begin{figure}[h]
\centering
\includegraphics[width=0.8\linewidth]{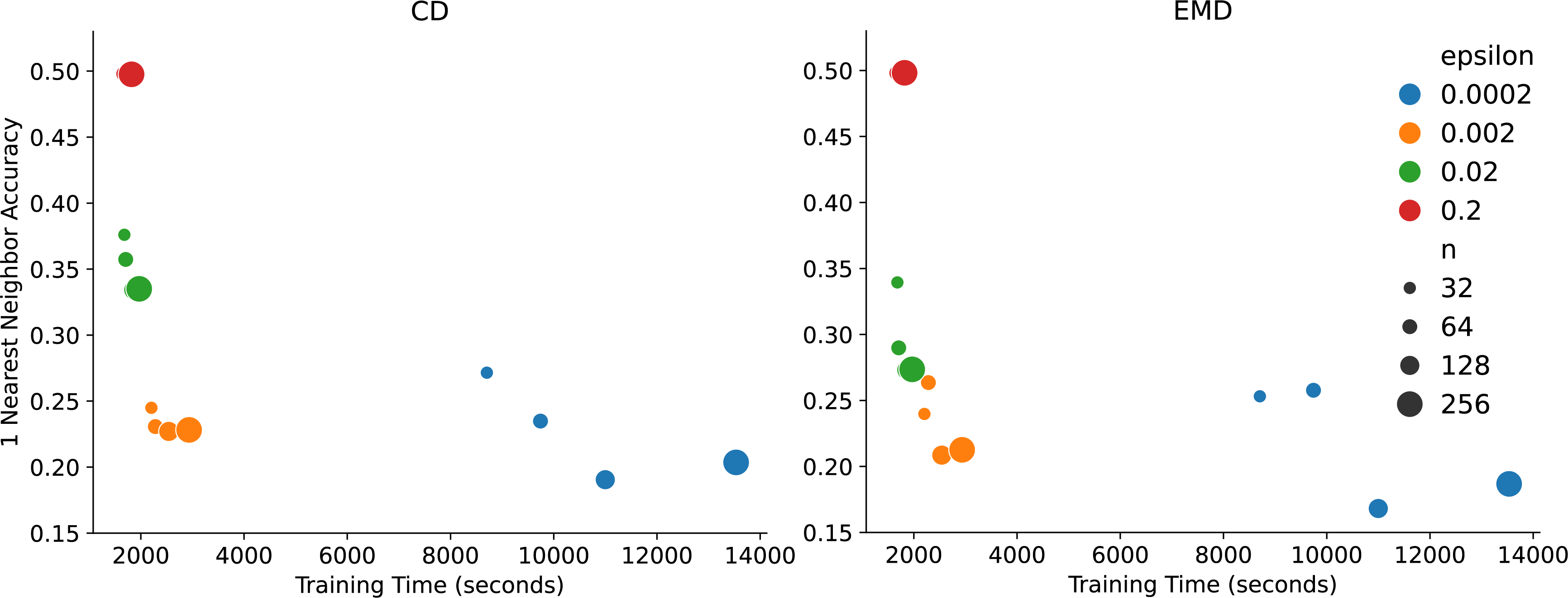}
\caption{\textbf{Training time versus generation quality tradeoff.} We benchmark RWEFM on generating MNIST digit 3 on $\mathbb{S}^2$, varying the entropic regularization parameter $\varepsilon$ and number of particles $n$ per distribution . The quantity plotted on both vertical axes is the classwise 1-NN deviation (1-NN-D), computed using Chamfer Distance and Earth Mover's Distance and shown against total training time in seconds. Lower scores indicate smaller classwise deviations from 50\% accuracy; the minimum is 0. As $\varepsilon$ decreases and $n$ increases, training time grows substantially due to increased Sinkhorn iterations required for convergence and larger network capacity needed to process more particles, but generation quality improves markedly as the estimated optimal transport map becomes more accurate. This demonstrates the concrete tradeoff between computational cost and sample quality that practitioners can tune based on their application requirements and computational budget.}
\label{fig:eps_n_benchmark}
\end{figure}

We benchmark RWEFM on the task of generating MNIST digit 3 on the sphere $\mathbb{S}^2$, systematically varying $\varepsilon \in \{0.0002, 0.002, 0.02, 0.2\}$ and $n \in \{32, 64, 128, 256\}$ across 16 experimental configurations. For each configuration, we train the model for 500,000 steps and measure both total wall-clock training time and generation quality on a held-out test set. Generation quality is assessed using classwise 1-NN deviation (1-NN-D) with both Chamfer Distance (CD) and Earth Mover's Distance (EMD) as ground metrics, where lower values indicate smaller classwise deviations from 50\% accuracy.

The results in \cref{fig:eps_n_benchmark} reveal a clear and predictable tradeoff. As $\varepsilon$ decreases, the entropic optimal transport map becomes less regularized and thus more accurate, requiring more Sinkhorn iterations to converge during training. Similarly, as $n$ increases, the model must process more particles per distribution, necessitating larger kernel sizes in the Sinkhorn algorithm and greater capacity in the transformer feedforward layers. Both factors increase training time—models with $\varepsilon = 0.0002$ and $n = 256$ take over 4 hours to train, compared to under an hour for $\varepsilon = 0.2$ and $n = 32$.

However, this computational investment yields substantial improvements in generation quality. The most accurate models use small $\varepsilon$ and large $n$, while the fastest models exhibit significantly worse quality. Interestingly, the relationship is roughly monotonic: intermediate configurations provide intermediate performance on both axes, allowing practitioners to select hyperparameters that balance their computational budget against their quality requirements. All experiments were conducted on a single NVIDIA B200 GPU, with the reported timings providing practitioners with realistic expectations for their own deployments. This analysis demonstrates that RWEFM offers a tunable spectrum of performance characteristics.

\subsection{Additional Metrics and Error Values}
\label{sec:additional_metrics}

These results complement the main-text benchmarks with additional ground metrics and variability estimates. We first report the overall MMD comparison and the single-cell results, then compare the sampled and entropic RWEFM variants across random seeds. Lower values are better for all metrics; 1-NN-D is reported as the classwise deviation from 0.5.

\begingroup
\raggedbottom
\setlength{\intextsep}{6pt}
\captionsetup[table]{font=small,skip=4pt}

\paragraph{Benchmark summaries}
\Cref{tab:mmd_mnist_emnist} complements the 1-NN-D results in \cref{tab:nna_mnist_emnist} with MMD under both CD and EMD ground metrics. The comparison includes point-cloud baselines alongside the flow-matching methods.

\begin{table}[H]
\centering
\caption{\textbf{MMD on MNIST, EMNIST, and KMNIST.} Extended metrics for \cref{tab:nna_mnist_emnist}, using Chamfer Distance (CD) and Earth Mover's Distance (EMD). Dashes indicate methods not evaluated on $\mathbb{T}^2$.}
\label{tab:mmd_mnist_emnist}
\resizebox{\textwidth}{!}{%
\begin{tabular}{lllcccccccc}
\toprule
& & & PVD & PSF & FM & SetFM & WFM & RFM & SetRFM & RWEFM \\
\midrule
$\mathbb{H}^2$ & h & CD  & $1.61 \cdot 10^{-3}$ & $3.50 \cdot 10^{-3}$ & $3.11 \cdot 10^{-2}$ & $7.79 \cdot 10^{-2}$ & $2.87 \cdot 10^{-2}$ & $4.10 \cdot 10^{-2}$ & $3.17 \cdot 10^{-3}$ & $3.39 \cdot 10^{-3}$ \\
               &   & EMD & $2.84 \cdot 10^{-3}$ & $5.26 \cdot 10^{-2}$ & $6.59 \cdot 10^{-2}$ & $5.94 \cdot 10^{-2}$ & $2.60 \cdot 10^{-2}$ & $5.68 \cdot 10^{-2}$ & $4.54 \cdot 10^{-3}$ & $2.81 \cdot 10^{-3}$ \\
\addlinespace
               & w & CD  & $2.23 \cdot 10^{-3}$ & $7.22 \cdot 10^{-3}$ & $6.77 \cdot 10^{-2}$ & $8.58 \cdot 10^{-2}$ & $6.87 \cdot 10^{-2}$ & $2.20 \cdot 10^{-2}$ & $1.04 \cdot 10^{-2}$ & $6.21 \cdot 10^{-3}$ \\
               &   & EMD & $6.22 \cdot 10^{-3}$ & $5.65 \cdot 10^{-2}$ & $1.29 \cdot 10^{-1}$ & $6.43 \cdot 10^{-2}$ & $7.54 \cdot 10^{-2}$ & $2.61 \cdot 10^{-2}$ & $9.91 \cdot 10^{-3}$ & $5.41 \cdot 10^{-3}$ \\
\addlinespace
               & y & CD  & $4.05 \cdot 10^{-4}$ & $3.88 \cdot 10^{-3}$ & $6.75 \cdot 10^{-2}$ & $5.42 \cdot 10^{-2}$ & $4.16 \cdot 10^{-2}$ & $6.76 \cdot 10^{-2}$ & $4.99 \cdot 10^{-3}$ & $2.33 \cdot 10^{-3}$ \\
               &   & EMD & $1.96 \cdot 10^{-3}$ & $6.22 \cdot 10^{-2}$ & $7.50 \cdot 10^{-2}$ & $4.12 \cdot 10^{-2}$ & $3.82 \cdot 10^{-2}$ & $7.13 \cdot 10^{-2}$ & $6.37 \cdot 10^{-3}$ & $2.28 \cdot 10^{-3}$ \\
\midrule
$\mathbb{S}^2$ & 3 & CD  & $4.05 \cdot 10^{-3}$ & $5.85 \cdot 10^{-3}$ & $2.17 \cdot 10^{-2}$ & $1.65 \cdot 10^{-2}$ & $1.20 \cdot 10^{-2}$ & $9.25 \cdot 10^{-3}$ & $2.47 \cdot 10^{-3}$ & $2.40 \cdot 10^{-3}$ \\
               &   & EMD & $7.81 \cdot 10^{-3}$ & $5.17 \cdot 10^{-2}$ & $2.15 \cdot 10^{-2}$ & $1.73 \cdot 10^{-2}$ & $1.16 \cdot 10^{-2}$ & $1.31 \cdot 10^{-2}$ & $3.16 \cdot 10^{-3}$ & $2.29 \cdot 10^{-3}$ \\
\addlinespace
               & 4 & CD  & $2.33 \cdot 10^{-3}$ & $4.53 \cdot 10^{-3}$ & $1.54 \cdot 10^{-2}$ & $2.80 \cdot 10^{-2}$ & $1.51 \cdot 10^{-2}$ & $9.23 \cdot 10^{-3}$ & $2.49 \cdot 10^{-3}$ & $2.55 \cdot 10^{-3}$ \\
               &   & EMD & $4.90 \cdot 10^{-3}$ & $2.24 \cdot 10^{-2}$ & $2.54 \cdot 10^{-2}$ & $2.55 \cdot 10^{-2}$ & $1.36 \cdot 10^{-2}$ & $2.13 \cdot 10^{-2}$ & $3.20 \cdot 10^{-3}$ & $2.38 \cdot 10^{-3}$ \\
\addlinespace
               & 8 & CD  & $5.92 \cdot 10^{-3}$ & $6.71 \cdot 10^{-3}$ & $1.63 \cdot 10^{-2}$ & $2.46 \cdot 10^{-2}$ & $1.88 \cdot 10^{-2}$ & $8.35 \cdot 10^{-3}$ & $2.48 \cdot 10^{-3}$ & $2.57 \cdot 10^{-3}$ \\
               &   & EMD & $1.03 \cdot 10^{-2}$ & $5.80 \cdot 10^{-2}$ & $2.60 \cdot 10^{-2}$ & $2.18 \cdot 10^{-2}$ & $1.60 \cdot 10^{-2}$ & $1.59 \cdot 10^{-2}$ & $3.81 \cdot 10^{-3}$ & $2.25 \cdot 10^{-3}$ \\
\midrule
$\mathbb{T}^2$ & ki & CD  & $-$ & $-$ & $4.57 \cdot 10^{-2}$ & $3.29 \cdot 10^{-2}$ & $1.18 \cdot 10^{-2}$ & $4.49 \cdot 10^{-2}$ & $1.28 \cdot 10^{-2}$ & $1.46 \cdot 10^{-2}$ \\
               &    & EMD & $-$ & $-$ & $4.51 \cdot 10^{-2}$ & $3.48 \cdot 10^{-2}$ & $1.39 \cdot 10^{-2}$ & $4.37 \cdot 10^{-2}$ & $1.26 \cdot 10^{-2}$ & $1.45 \cdot 10^{-2}$ \\
\addlinespace
               & na & CD  & $-$ & $-$ & $4.65 \cdot 10^{-2}$ & $5.91 \cdot 10^{-3}$ & $5.29 \cdot 10^{-3}$ & $4.50 \cdot 10^{-2}$ & $7.30 \cdot 10^{-3}$ & $1.12 \cdot 10^{-2}$ \\
               &    & EMD & $-$ & $-$ & $5.24 \cdot 10^{-2}$ & $5.72 \cdot 10^{-3}$ & $5.27 \cdot 10^{-3}$ & $5.28 \cdot 10^{-2}$ & $6.84 \cdot 10^{-3}$ & $1.04 \cdot 10^{-2}$ \\
\addlinespace
               & ma & CD  & $-$ & $-$ & $1.73 \cdot 10^{-2}$ & $6.47 \cdot 10^{-3}$ & $5.88 \cdot 10^{-3}$ & $1.63 \cdot 10^{-2}$ & $3.68 \cdot 10^{-2}$ & $7.12 \cdot 10^{-3}$ \\
               &    & EMD & $-$ & $-$ & $3.00 \cdot 10^{-2}$ & $9.49 \cdot 10^{-3}$ & $6.65 \cdot 10^{-3}$ & $2.95 \cdot 10^{-2}$ & $4.15 \cdot 10^{-2}$ & $7.92 \cdot 10^{-3}$ \\
\bottomrule
\end{tabular}%
}
\end{table}

\paragraph{Single-cell sample generation}
\Cref{tab:scrnaseq_cd} expands the single-cell benchmark in \cref{tab:scrnaseq_ot} to both ground metrics. Reporting 1-NN-D alongside MMD summarizes classwise nearest-neighbor label mixing and kernel-based distributional similarity.

\begin{table}[H]
\centering
\caption{\textbf{Extended metrics for scRNA-seq generation.} Whole-sample generation on $\mathbb{S}^{128-1}$, with mean $\pm$ standard deviation for classwise 1-NN deviation (1-NN-D) and MMD.}
\label{tab:scrnaseq_cd}
\small
\setlength{\tabcolsep}{4pt}
\begin{tabular}{lcccc}
\toprule
& \multicolumn{4}{c}{\textbf{Manifold: } $\mathbb{S}^{128-1}$} \\
\cmidrule(lr){2-5}
Method & 1-NN-D (EMD) & 1-NN-D (CD) & MMD (EMD) & MMD (CD) \\
\midrule
SetFM  & $0.2672 \pm 0.0218$ & $0.4460 \pm 0.0072$ & $0.1026 \pm 0.0410$ & $0.1174 \pm 0.0323$ \\
WFM    & $0.3714 \pm 0.0957$ & $0.2945 \pm 0.0442$ & $0.0324 \pm 0.0221$ & $0.0355 \pm 0.0162$ \\
\addlinespace
SetRFM & $0.4645 \pm 0.0251$ & $0.3754 \pm 0.0470$ & $0.0300 \pm 0.0068$ & $0.0341 \pm 0.0120$ \\
RWEFM  & $0.3039 \pm 0.0456$ & $0.1889 \pm 0.0588$ & $0.0140 \pm 0.0074$ & $0.0137 \pm 0.0072$ \\
\bottomrule
\end{tabular}%
\end{table}

\paragraph{Variability and transport-map variants}
\Cref{tab:nna_std_all,tab:mmd_std_all} report mean $\pm$ standard deviation over three seeds and five samplings per seed, with separate columns for sampled OT assignments and the barycentric entropic map.

\begin{table}[H]
\centering
\caption{\textbf{1-NN-D across manifolds.} Mean $\pm$ standard deviation of the classwise deviation from 0.5; lower is better. Dashes indicate unevaluated configurations.}
\label{tab:nna_std_all}
\resizebox{\textwidth}{!}{%
\begin{tabular}{lllcccccccc}
\toprule
& & & FM & RFM & SetFM & WFM & PSF & SetRFM & \shortstack{RWEFM\\(sample)} & \shortstack{RWEFM\\(entropic)} \\
\midrule
$\mathbb{H}^2$ & h & CD  & $0.492 \pm 0.003$ & $0.480 \pm 0.003$ & $0.372 \pm 0.055$ & $0.302 \pm 0.045$ & $0.179 \pm 0.002$ & $0.301 \pm 0.041$ & $0.192 \pm 0.019$ & $0.149 \pm 0.022$ \\
               &   & EMD & $0.487 \pm 0.003$ & $0.483 \pm 0.003$ & $0.355 \pm 0.060$ & $0.278 \pm 0.038$ & $0.325 \pm 0.001$ & $0.306 \pm 0.013$ & $0.157 \pm 0.019$ & $0.085 \pm 0.012$ \\
\addlinespace
               & w & CD  & $0.481 \pm 0.015$ & $0.436 \pm 0.011$ & $0.321 \pm 0.026$ & $0.412 \pm 0.014$ & $0.058 \pm 0.002$ & $0.337 \pm 0.009$ & $0.253 \pm 0.023$ & $0.229 \pm 0.021$ \\
               &   & EMD & $0.482 \pm 0.013$ & $0.446 \pm 0.007$ & $0.332 \pm 0.032$ & $0.393 \pm 0.024$ & $0.296 \pm 0.001$ & $0.329 \pm 0.011$ & $0.201 \pm 0.022$ & $0.104 \pm 0.023$ \\
\addlinespace
               & y & CD  & $0.478 \pm 0.004$ & $0.472 \pm 0.003$ & $0.337 \pm 0.026$ & $0.291 \pm 0.020$ & $0.206 \pm 0.002$ & $0.310 \pm 0.026$ & $0.222 \pm 0.013$ & $0.162 \pm 0.016$ \\
               &   & EMD & $0.483 \pm 0.004$ & $0.481 \pm 0.004$ & $0.326 \pm 0.016$ & $0.277 \pm 0.016$ & $0.309 \pm 0.000$ & $0.308 \pm 0.013$ & $0.201 \pm 0.012$ & $0.114 \pm 0.023$ \\
\midrule
$\mathbb{S}^2$ & 3 & CD  & $0.400 \pm 0.013$ & $0.295 \pm 0.016$ & $0.262 \pm 0.030$ & $0.290 \pm 0.015$ & $0.188 \pm 0.002$ & $0.377 \pm 0.023$ & $0.251 \pm 0.020$ & $0.182 \pm 0.017$ \\
               &   & EMD & $0.422 \pm 0.005$ & $0.382 \pm 0.007$ & $0.269 \pm 0.016$ & $0.285 \pm 0.011$ & $0.355 \pm 0.000$ & $0.379 \pm 0.011$ & $0.270 \pm 0.015$ & $0.179 \pm 0.015$ \\
\addlinespace
               & 4 & CD  & $0.452 \pm 0.011$ & $0.423 \pm 0.008$ & $0.291 \pm 0.029$ & $0.291 \pm 0.039$ & $0.166 \pm 0.001$ & $0.379 \pm 0.014$ & $0.245 \pm 0.017$ & $0.185 \pm 0.020$ \\
               &   & EMD & $0.464 \pm 0.006$ & $0.453 \pm 0.006$ & $0.289 \pm 0.028$ & $0.274 \pm 0.043$ & $0.290 \pm 0.001$ & $0.379 \pm 0.010$ & $0.265 \pm 0.016$ & $0.177 \pm 0.015$ \\
\addlinespace
               & 8 & CD  & $0.365 \pm 0.027$ & $0.332 \pm 0.027$ & $0.407 \pm 0.111$ & $0.308 \pm 0.075$ & $0.229 \pm 0.002$ & $0.392 \pm 0.025$ & $0.278 \pm 0.029$ & $0.158 \pm 0.011$ \\
               &   & EMD & $0.407 \pm 0.010$ & $0.383 \pm 0.007$ & $0.353 \pm 0.093$ & $0.367 \pm 0.079$ & $0.390 \pm 0.001$ & $0.395 \pm 0.014$ & $0.297 \pm 0.013$ & $0.163 \pm 0.024$ \\
\midrule
$\mathbb{T}^2$ & ki & CD  & $0.478 \pm 0.002$ & $0.475 \pm 0.003$ & $0.281 \pm 0.025$ & $0.219 \pm 0.014$ & $-$ & $0.261 \pm 0.035$ & $-$ & $0.171 \pm 0.018$ \\
               &    & EMD & $0.461 \pm 0.003$ & $0.459 \pm 0.003$ & $0.279 \pm 0.019$ & $0.172 \pm 0.013$ & $-$ & $0.279 \pm 0.023$ & $-$ & $0.176 \pm 0.015$ \\
\addlinespace
               & na & CD  & $0.490 \pm 0.001$ & $0.490 \pm 0.002$ & $0.240 \pm 0.014$ & $0.187 \pm 0.012$ & $-$ & $0.230 \pm 0.016$ & $-$ & $0.161 \pm 0.010$ \\
               &    & EMD & $0.483 \pm 0.001$ & $0.484 \pm 0.002$ & $0.235 \pm 0.017$ & $0.140 \pm 0.012$ & $-$ & $0.235 \pm 0.012$ & $-$ & $0.155 \pm 0.014$ \\
\addlinespace
               & ma & CD  & $0.469 \pm 0.003$ & $0.468 \pm 0.003$ & $0.276 \pm 0.020$ & $0.175 \pm 0.013$ & $-$ & $0.256 \pm 0.030$ & $-$ & $0.148 \pm 0.014$ \\
               &    & EMD & $0.471 \pm 0.003$ & $0.471 \pm 0.003$ & $0.294 \pm 0.011$ & $0.169 \pm 0.013$ & $-$ & $0.282 \pm 0.011$ & $-$ & $0.181 \pm 0.025$ \\
\bottomrule
\end{tabular}%
}
\end{table}

The MMD results below follow the same dataset and method ordering.

\begin{table}[H]
\centering
\caption{\textbf{MMD across manifolds.} Mean $\pm$ standard deviation under CD and EMD ground metrics; lower is better. Dashes indicate unevaluated configurations.}
\label{tab:mmd_std_all}
\resizebox{\textwidth}{!}{%
\begin{tabular}{lllcccccccc}
\toprule
& & & FM & RFM & SetFM & WFM & PSF & SetRFM & \shortstack{RWEFM\\(sample)} & \shortstack{RWEFM\\(entropic)} \\
\midrule
$\mathbb{H}^2$ & h & CD  & $0.0311 \pm 0.0104$ & $0.0410 \pm 0.0024$ & $0.0779 \pm 0.0424$ & $0.0287 \pm 0.0129$ & $0.0035 \pm 0.0000$ & $0.0032 \pm 0.0012$ & $0.0027 \pm 0.0017$ & $0.0034 \pm 0.0024$ \\
               &   & EMD & $0.0659 \pm 0.0053$ & $0.0568 \pm 0.0014$ & $0.0594 \pm 0.0323$ & $0.0260 \pm 0.0088$ & $0.0526 \pm 0.0000$ & $0.0045 \pm 0.0008$ & $0.0025 \pm 0.0020$ & $0.0028 \pm 0.0023$ \\
\addlinespace
               & w & CD  & $0.0677 \pm 0.0383$ & $0.0220 \pm 0.0038$ & $0.0858 \pm 0.0160$ & $0.0687 \pm 0.0050$ & $0.0072 \pm 0.0000$ & $0.0104 \pm 0.0059$ & $0.0051 \pm 0.0012$ & $0.0062 \pm 0.0030$ \\
               &   & EMD & $0.1289 \pm 0.1167$ & $0.0261 \pm 0.0015$ & $0.0643 \pm 0.0079$ & $0.0754 \pm 0.0089$ & $0.0565 \pm 0.0000$ & $0.0099 \pm 0.0037$ & $0.0039 \pm 0.0015$ & $0.0054 \pm 0.0020$ \\
\addlinespace
               & y & CD  & $0.0675 \pm 0.0178$ & $0.0676 \pm 0.0061$ & $0.0542 \pm 0.0166$ & $0.0416 \pm 0.0207$ & $0.0039 \pm 0.0000$ & $0.0050 \pm 0.0014$ & $0.0023 \pm 0.0006$ & $0.0023 \pm 0.0010$ \\
               &   & EMD & $0.0750 \pm 0.0102$ & $0.0713 \pm 0.0020$ & $0.0412 \pm 0.0096$ & $0.0382 \pm 0.0126$ & $0.0622 \pm 0.0000$ & $0.0064 \pm 0.0014$ & $0.0025 \pm 0.0016$ & $0.0023 \pm 0.0012$ \\
\midrule
$\mathbb{S}^2$ & 3 & CD  & $0.0217 \pm 0.0010$ & $0.0093 \pm 0.0009$ & $0.0165 \pm 0.0090$ & $0.0120 \pm 0.0034$ & $0.0059 \pm 0.0000$ & $0.0025 \pm 0.0007$ & $0.0034 \pm 0.0005$ & $0.0024 \pm 0.0003$ \\
               &   & EMD & $0.0215 \pm 0.0006$ & $0.0131 \pm 0.0009$ & $0.0173 \pm 0.0068$ & $0.0116 \pm 0.0038$ & $0.0517 \pm 0.0000$ & $0.0032 \pm 0.0005$ & $0.0037 \pm 0.0009$ & $0.0023 \pm 0.0007$ \\
\addlinespace
               & 4 & CD  & $0.0154 \pm 0.0035$ & $0.0092 \pm 0.0016$ & $0.0280 \pm 0.0111$ & $0.0151 \pm 0.0109$ & $0.0045 \pm 0.0000$ & $0.0025 \pm 0.0007$ & $0.0035 \pm 0.0005$ & $0.0025 \pm 0.0004$ \\
               &   & EMD & $0.0254 \pm 0.0019$ & $0.0213 \pm 0.0003$ & $0.0255 \pm 0.0092$ & $0.0136 \pm 0.0093$ & $0.0224 \pm 0.0000$ & $0.0032 \pm 0.0005$ & $0.0037 \pm 0.0009$ & $0.0024 \pm 0.0007$ \\
\addlinespace
               & 8 & CD  & $0.0163 \pm 0.0030$ & $0.0084 \pm 0.0025$ & $0.0246 \pm 0.0057$ & $0.0188 \pm 0.0040$ & $0.0067 \pm 0.0000$ & $0.0025 \pm 0.0006$ & $0.0031 \pm 0.0005$ & $0.0026 \pm 0.0003$ \\
               &   & EMD & $0.0260 \pm 0.0019$ & $0.0159 \pm 0.0009$ & $0.0218 \pm 0.0048$ & $0.0160 \pm 0.0066$ & $0.0580 \pm 0.0001$ & $0.0038 \pm 0.0006$ & $0.0035 \pm 0.0008$ & $0.0022 \pm 0.0004$ \\
\midrule
$\mathbb{T}^2$ & ki & CD  & $0.0457 \pm 0.0039$ & $0.0449 \pm 0.0031$ & $0.0329 \pm 0.0113$ & $0.0118 \pm 0.0049$ & $-$ & $0.0128 \pm 0.0040$ & $-$ & $0.0146 \pm 0.0058$ \\
               &    & EMD & $0.0451 \pm 0.0021$ & $0.0437 \pm 0.0020$ & $0.0348 \pm 0.0115$ & $0.0139 \pm 0.0058$ & $-$ & $0.0126 \pm 0.0062$ & $-$ & $0.0145 \pm 0.0049$ \\
\addlinespace
               & na & CD  & $0.0465 \pm 0.0020$ & $0.0450 \pm 0.0015$ & $0.0059 \pm 0.0026$ & $0.0053 \pm 0.0010$ & $-$ & $0.0073 \pm 0.0015$ & $-$ & $0.0112 \pm 0.0027$ \\
               &    & EMD & $0.0524 \pm 0.0009$ & $0.0528 \pm 0.0009$ & $0.0057 \pm 0.0022$ & $0.0053 \pm 0.0008$ & $-$ & $0.0068 \pm 0.0012$ & $-$ & $0.0104 \pm 0.0025$ \\
\addlinespace
               & ma & CD  & $0.0173 \pm 0.0018$ & $0.0163 \pm 0.0017$ & $0.0065 \pm 0.0015$ & $0.0059 \pm 0.0006$ & $-$ & $0.0368 \pm 0.0233$ & $-$ & $0.0071 \pm 0.0004$ \\
               &    & EMD & $0.0300 \pm 0.0012$ & $0.0295 \pm 0.0011$ & $0.0095 \pm 0.0011$ & $0.0067 \pm 0.0007$ & $-$ & $0.0415 \pm 0.0247$ & $-$ & $0.0079 \pm 0.0009$ \\
\bottomrule
\end{tabular}%
}
\end{table}

\endgroup
\clearpage

\end{document}